\definecolor{niceblue}{rgb}{0.10, 0.14, 0.76} 
\definecolor{nicered}{rgb}{0.70, 0.0, 0.0} 
\newcommand{\reals}{\mathbb{R}}
\newcommand{\ito}{It$\hat{o}$ }
\newcommand{\E}{\mathbb{E}}
\newcommand{\normal}{\mathcal{N}}
\newcommand{\ind}{\mathbbm{1}}
\newcommand{\iid}{\textit{iid}~}
\newcommand{\bigO}{\mathcal{O}}
\newtheorem{prop}{Proposition}
\newtheorem{thm}{Theorem}
\newtheorem{lemma}{Lemma}
\newtheorem{definition}{Definition}
\icmltitlerunning{Width and Depth commute in Residual Networks}
\begin{document}

\twocolumn[
\icmltitle{
Width and Depth Limits Commute in Residual Networks}




\begin{icmlauthorlist}
\icmlauthor{Soufiane Hayou}{nus} \hspace{1cm}
\icmlauthor{Greg Yang}{microsoft}
\end{icmlauthorlist}

\icmlaffiliation{nus}{Department of Mathematics, National University of Singapore}
\icmlaffiliation{microsoft}{Microsoft Research AI}

\icmlcorrespondingauthor{Soufiane Hayou}{hayou@nus.edu.sg}

\icmlkeywords{Machine Learning, ICML}

\vskip 0.3in
]
\printAffiliationsAndNotice{\noindent}

\begin{abstract}
We show that taking the width and depth to infinity in a deep neural network with skip connections, when branches are scaled by $1/\sqrt{depth}$, result in the same covariance structure no matter how that limit is taken. This explains why the standard infinite-width-then-depth approach provides practical insights even for networks with depth of the same order as width. We also demonstrate that the pre-activations, in this case, have Gaussian distributions which has direct applications in Bayesian deep learning. We conduct extensive simulations that show an excellent match with our theoretical findings.
\end{abstract}

\section{Introduction}

In recent years, deep neural networks have achieved remarkable success in a variety of tasks, such as image classification and natural language processing. However, the behavior of these networks in the limit of large depth and large width is still not fully understood. 

The success of large language and vision models have recently amplified an existing trend of research on neural network limits.
Two main limits are the large-width and the large-depth limits.
While the former by itself is now relatively well understood \citep{neal, samuel2017, lee_gaussian_process, yang_tensor3_2020, hayou2019impact}, the latter and the interaction between the two have not been studied as much.
In particular, a basic question is: do these two limits commute?
Recent literature suggests that, at initialization, in certain kinds of multi-layer perceptrons (MLPs) or residual neural networks (resnets), the depth and width limits do not commute; this would imply that in practice, such kinds of networks would behave quite differently depending on whether width is much larger than depth or the other way around.

However, in this paper, we show: to the contrary, at initialization, for a resnet with branches scaled the natural way so as to avoid blowing up the output,%
\footnote{This contrasts with \cite{li2022sde} whose non-commute result requires the branches to be large enough to blow up the network output in the case of standard resnet.}
the width and depth limits \emph{do commute}.
This justifies prior calculations that take the width limit first, then depth, to understand the behavior of deep residual networks, such as prior works in the signal propagation literature \citep{hayou21stable}.

In addition to the significance of the results, the mathematical novelty of this paper is the proof technique: we take the depth limit first (fixing width), then take the width limit, in contrast to the typical prior work which takes the limits in the opposite order. In the process, we prove a concentration of measure result for a kind of McKean-Vlasov process (Mean-Field games). Our results provide new insights into the behavior of deep neural networks and we discuss implications for the design and analysis of these networks.

The proofs of the theoretical results are provided in the appendix and referenced after each result. Empirical evaluations support our theoretical findings.

\section{Related Work}
The theoretical analysis of randomly initialized neural networks with an infinite number of parameters has yielded a wealth of interesting results, both theoretical and practical. A majority of this research has concentrated on examining the scenario in which the width of the network is taken to infinity while the depth is fixed. However, in recent years, there has been a growing interest in exploring the large depth limit of these networks. In this overview, we present a summary of existing results in this area, though it's not exhaustive. A more comprehensive literature review is provided in \cref{sec:comprehensive_lit_review}.

\subsection{Infinite-width limit}

The study of the infinite-width limit of neural network architectures has been a topic of significant research interest, yielding various theoretical and algorithmic innovations. These include initialization methods, such as the Edge of Chaos \citep{poole, samuel2017, yang2017meanfield, hayou2019impact}, and the selection of activation functions \citep{hayou2019impact, martens2021rapid, zhang2022deep, wolinski2022gaussian}, which have been shown to have practical benefits. In the realm of Bayesian analysis, the infinite-width limit presents an intriguing framework for Bayesian deep learning, as it is characterized by a Gaussian process prior. Several studies (e.g. \cite{neal, lee_gaussian_process, yang_tensor3_2020, matthews, hron20attention}) have investigated the weak limit of neural networks as the width increases towards infinity, and have demonstrated that the network's output converges to a distribution modeled by a Gaussian process. Bayesian inference utilizing this ``neural" Gaussian process has been explored in \citep{lee_gaussian_process, hayou21stable}. \footnote{It is worth mentioning that kernel methods such as NNGP and NTK significantly underperform properly tuned finite-width network trained using SGD, see \cite{yang2022efficient}.}
    
The Neural Tangent Kernel (NTK) is another interesting area of research where the infinite-width limit proves useful. In this limit, the NTK converges to a deterministic kernel, given appropriate parameterization. This limiting kernel is fixed at initialization and remains constant throughout the training process. The optimization and generalization characteristics of the NTK have been the subject of extensive study in the literature (see e.g. \cite{Liu2022connecting, arora2019finegrained}).

\subsection{Infinite-depth limit}
The infinite-depth limit of neural networks with random initialization is a less explored area compared to the study of the infinite-width limit. Existing research in this field can be categorized into three groups based on the approach and criteria used to consider the infinite-depth limit in relation to the width.

\emph{Infinite-width-then-depth limit.} In this case, the width of the neural network is taken to infinity first, followed by the depth. This is the infinite-depth limit of infinite-width neural networks. This limit has been extensively utilized to explore various aspects of neural networks, such as examining the neural covariance, deriving the Edge of Chaos initialization scheme (cited in \citep{samuel2017, poole, yang2017meanfield}), evaluating the impact of the activation function \citep{hayou2019impact, martens2021rapid},  and studying the behavior of the Neural Tangent Kernel (NTK) \citep{hayou_ntk, xiao2020disentangling}.

\emph{The joint infinite-width-and-depth limit.} In this case, the ratio of depth to width is fixed, and the width and depth are jointly taken to infinity. There are only a limited number of works that have investigated the joint width-depth limit. In \citep{li21loggaussian}, the authors showed that for a particular type of residual neural networks (ResNets), the network output exhibits a (scaled) log-normal behavior in this limit, which differs from the sequential limit in which the width is first taken to infinity followed by the depth, in which case the distribution of the network output is asymptotically normal (\citep{samuel2017, hayou2019impact}). Additionally, in \citep{li2022sde}, the authors examined the covariance kernel of a multi-layer perceptron (MLP) in the joint limit and proved that it weakly converges to the solution of a Stochastic Differential Equation (SDE). Other works have investigated this limit and found similar results \citep{noci2021precise, zavatone2021exact, Hanin2019product, hanin2022correlation}.
    
\emph{Infinite-depth limit of finite-width neural networks.}  In the previous limits, the width of the neural network was extended to infinity, either independently or in conjunction with the depth. However, it is natural to inquire about the behavior of networks in which the width is fixed, while the depth is increased towards infinity. In \cite{peluchetti2020resnetdiffusion}, it was shown that for a particular ResNet architecture, the pre-activations converge weakly to a diffusion process in the infinite-depth limit,  which follows from existing results in stochastic calculus on the convergence of Euler-Maruyama discretization schemes to continuous Stochastic Differential Equations. More recent work by \cite{hayou2022on} evaluated the impact of the activation function on the distribution of the pre-activation and characterized the distribution of the post-activation norms in this limit.

In this work, we are particularly interested in the case where both the width and depth are taken to infinity. 
\section{Setup and Definitions}
When analyzing the asymptotic behavior of randomly initialized neural networks, various notions of probabilistic convergence are employed, depending on the context. These notions are typically well-established definitions in probability theory. In this study, we particularly focus on two forms of convergence:
\vspace{-0.2cm}
\begin{itemize}
    \item Convergence in distribution (weak convergence): we show that the pre-activations converge weakly to a Gaussian distribution in the limit $\min(n, L) \to \infty$. We use the Wasserstein metric to quantify the convergence rate for the weak convergence.
    \item Convergence in $L_2$ (strong convergence): we show that the neural covariance\footnote{The neural covariance is a (linear) measure of similarity between the pre-activations for different inputs. We define this quantity in \cref{sec:mlps}.} converges to a deterministic limit that is characterized by a differential flow $q_t$ as $\min(n, L)$ approaches infinity.
\end{itemize}

\begin{definition}[Weak convergence]
Let $d \geq 1$. We say that a sequence of  $\reals^d$-valued random variables $(X_k)_{ k\geq 1}$ converges weakly to a random variable $Z$ if the cumulative distribution function of $X_k$ converges point-wise to that of $Z$.
\end{definition}

There are various metrics that can be utilized to measure the weak convergence rate. One commonly used metric is the Wasserstein metric.
\begin{definition}[Wasserstein distance $\mathcal{W}_1$]
Let $\mu$ and $\nu$ be two probability measures on $\reals^d$. The Wasserstein distance between $\mu$ and $\nu$ is defined by 
\begin{align*}
\mathcal{W}_1 &= \sup_{f \in \textup{Lip}_1} \left| \int f(x) (d \mu - d \nu) \right|\\
&=  \sup_{f \in \textup{Lip}_1} \left| \E_{\mu} f - \E_\nu f \right|,
\end{align*}
where $\textup{Lip}_1$ is the set of Lipschitz continuous functions from $\reals^d$ to $\reals$ with a Lipschitz constant $\leq 1$.
\end{definition}
In this work, we define \emph{strong} convergence to be the $L_2$ convergence as described in the following definition.
\begin{definition}[Strong convergence]
Let $d \geq 1$. We say that a sequence of  $\reals^d$-valued random variables $(X_k)_{ k\geq 1}$ converges in $L_2$ (or strongly) to a continuous random variable $Z$ if $\lim_{k \to \infty} \|X_k - Z\|_{L_2} =0$, where the $L_2$ is defined by $\|X\|_{L_2} = \left(\E[\|X\|^2] \right)^{1/2}$.
\end{definition}
Both of these forms of convergence are valuable when analyzing the behavior of neural networks with an infinite number of parameters. They facilitate the understanding of the network's asymptotic behavior which enables predictions about the finite-but-large width-and-depth regimes.

\section{Warmup: Depth and Width Generally Do Not Commute}\label{sec:mlps}
In this section, we present corollaries of previously established results that demonstrate that depth and width typically do not commute. The width and depth of the network are denoted by $n$ and $L$, respectively, and the input dimension is denoted by $d$. Let $d, n, L \geq 1$, and consider a simple MLP architecture given by the following: 
\begin{equation}\label{eq:mlp}
\begin{aligned}
Y_0(a) &= W_{in} a, \quad a \in \reals^d\\
Y_l(a) &= W_l \phi(Y_{l-1}(a)), \hspace{0.1cm} l \in [1:L],
\end{aligned}
\end{equation}
where $\phi: \reals \to \reals$ is the ReLU activation function, $W_{in} \in \reals^{n \times d}$, and $W_l \in \reals^{n \times n}$ is the weight matrix in the $l^{th}$ layer. We assume that the weights are randomly initialized with \iid Gaussian variables $W_l^{ij} \sim \normal(0, \frac{2}{n})$,\footnote{This is the standard He initialization which coincides with the Edge of Chaos initialization \citep{samuel2017}. This is the only choice of the variance that guarantees stability in both the large-width and the large-depth limits.} $W_{in}^{ij} \sim \normal(0, \frac{1}{d})$. For the sake of simplification, we only consider networks with no bias, and we omit the dependence of $Y_l$ on $n$ and $L$ in the notation. While the activation function is only defined for real numbers ($1$-dimensional), we will abuse the notation and write $\phi(z) = (\phi(z^1), \dots, \phi(z^k))$ for any $k$-dimensional vector $z = (z^1, \dots, z^k) \in \reals^k$ for any $k \geq 1$. We refer to the vectors $\{Y_l, l=0, \dots, L\}$ as \emph{pre-activations} and the vectors $\{\phi(Y_l), l=0, \dots, L\}$ as \emph{post-activations}.

\subsection{Distribution of the pre-activations in the limit $n, L \to \infty$}

It is well-established that in fixed-depth neural networks of any type, as the width $n$ approaches infinity, the pre-activations exhibit Gaussian behavior. This phenomenon was initially demonstrated for single-layer perceptrons by \citep{neal}, and has since been extended to include multiple-layer perceptrons (MLPs) and general neural architectures \citep{yang_tensor3_2020}. This behavior can be roughly attributed to the Central Limit Theorem (CLT) (although a formal proof require careful application of CLT for exchangeable random variables in the MLP case, as detailed in \cite{matthews}, or Law of Large Numbers and Gaussian conditioning trick in the general case \citep{yang2019tensor_i}). A question that the reader may have in this context is: \emph{Why is the Gaussian distribution of significance?}\\
One of the key implications of the Gaussian behavior of infinite-width neural networks is their equivalence to Gaussian processes. By utilizing existing methods of Gaussian process regression, this equivalence facilitates the application of exact Bayesian inference to infinite-width neural networks, referred to as the neural network Gaussian process (NNGP, \cite{lee_gaussian_process}). The Gaussian behavior also provides an interesting framework to study signal propagation in deep neural networks; since a Gaussian distribution is fully characterized by its mean and covariance structure, understanding these quantities is sufficient to capture what happens inside the network at initialization. 

When the depth $L$ is also taken to infinity, different behaviors may emerge. Specifically, in the case of the MLP architecture \eqref{eq:mlp}, if a fixed layer index $l < L$ is considered and the behavior of $Y_l$ is examined as $n$ and $L$ approach infinity, $Y_l$ will exhibit the same limiting behavior as in the case of $n \to \infty$ and the depth is fixed. 
Some simple intuitive calculations indicate that it is only meaningful to study the limiting behavior of layers where the layer index is proportional to the depth $L$ (and not proportional to $L^\alpha$ for any $\alpha < 1$).%
\footnote{Indeed, the $(\frac 1 n)$-scaled Gram matrix of $\{Y_l(a): a \in \mathbb{R}^d\}$ fluctuates with size $\tilde \Theta(1/\sqrt{n})$ around its $n\to \infty$ limit for any fixed $l$.
This fluctuation is asymptotically independent across every layer, so the accumulated fluctuation at layer $l=L^\alpha$ is $\tilde \Theta(L^{\alpha/2}/\sqrt{n})$.
This is $\tilde \Theta(1)$ iff $\alpha = 1$.}
In this case, the quantity of interest is $Y_{\lfloor t L \rfloor}$ for some $t \in [0,1]$. Varying $t$ between $0$ and $1$ encompasses all layer indices, even in the infinite-depth limit.

Let us now state some corollaries of existing results. The following is a trivial result from existing literature (see e.g. \cite{matthews}) that characterizes the distribution of the pre-activations in the limit $n \to \infty$ then $L \to \infty$.

\begin{prop}[Infinite-width-then-depth]\label{prop:mlp_width_then_depth_gaussian}
Consider the MLP architecture given by \cref{eq:mlp} and let $a \in \reals^d$ such that $a \neq 0$. Then, in the limit ``$n\to \infty$, then $L \to \infty$'', $Y^1_{L}(a)$\footnote{$Y^1_{L}(a)$ refers to the first neuron in the last layer.} converges weakly to a Gaussian distribution. 
\end{prop}

When the width and depth of a neural network both tend towards infinity, the limiting behavior can vary depending on the relative rates at which the width and depth increase. Specifically, if the width and depth both approach infinity while the ratio of width to depth remains constant, the distribution of the pre-activations in the last layer is not Gaussian. This is a corollary of a more general result established by \citep{li21loggaussian} (the case when $\alpha = 0$) under certain conditions and assumptions, which was also verified through empirical evidence. We omit here the rigorous statement of the result and only illustrate this behaviour with simulations.

\begin{figure}
    \centering
    \includegraphics[width=\linewidth]{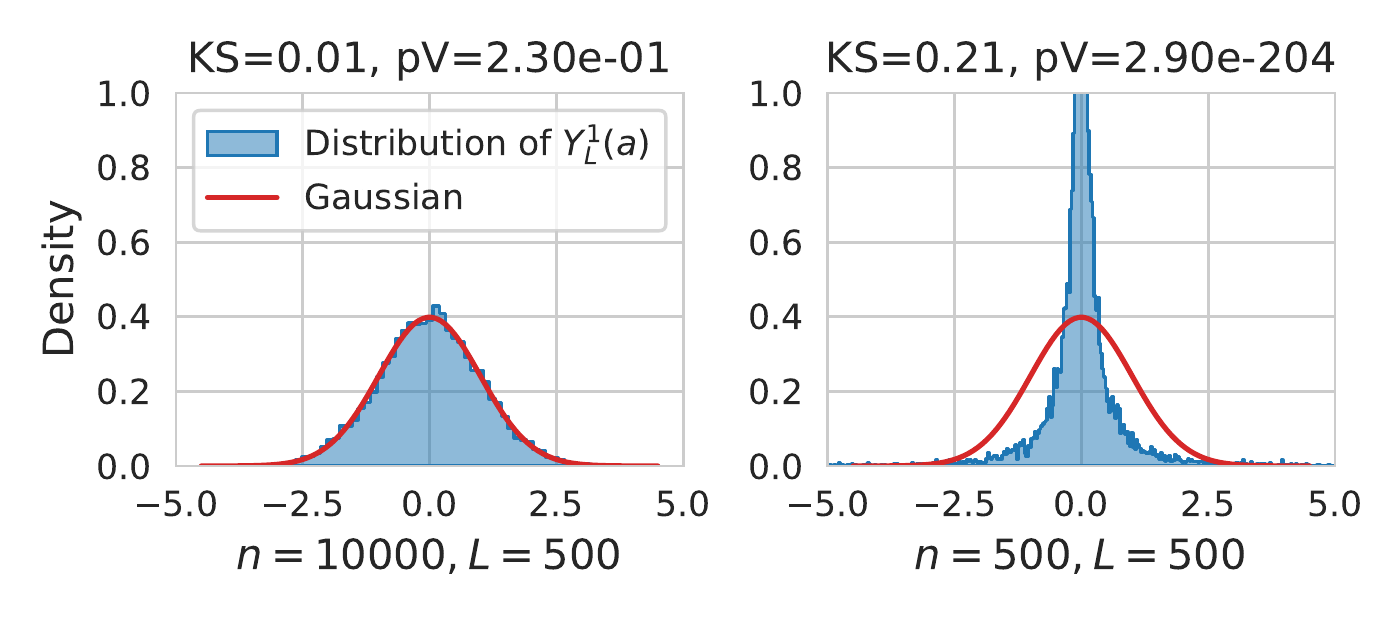}
    \caption{Histogram of $Y^1_L(a)$ for an MLP \cref{eq:mlp} with $(n, L) \in \{(10000, 500), (500, 500)\}$, $d=30$,  and $a = \sqrt{d} \frac{u}{\|u\|}$ and $u \in \reals^d$ has all coordinates randomly sampled from the unifrom distribution $\mathcal{U}([0,1])$. The histogram is based on $N=10^4$ simulations. The red dashed line represents the theoretical distribution (Gaussian) predicted in \cref{prop:mlp_width_then_depth_gaussian}. We also perform a Kolmogorov-Smirnov normality test and report the KS statistic and the p-value.}
    \label{fig:non_gaussian_mlp}
    \vspace{-0.3cm}
\end{figure}

Empirical evidence supports the existence of this difference in the limiting behavior of the distribution. As shown in \cref{fig:non_gaussian_mlp}, the distribution of $Y^1_L(a)$ is observed to be (nearly) Gaussian when the width is significantly greater than the depth, as evidenced by a small KS statistic. However, when the width is of the same magnitude as the depth, the distribution exhibits heavy tails. This can be seen by comparing the distribution for the settings $(n, L) \in {(10000, 500), (500, 500)}$.

\subsection{Neural covariance/correlation}
In the literature on signal propagation, there is a significant interest in understanding the covariance/correlation structure of neural networks. Specifically, researchers have sought to understand the covariance of the pre-activation vectors $Y_{\lfloor t L \rfloor}(a)$ and $Y_{\lfloor t L \rfloor}(b)$ (often called the neural covariance) for two different inputs $a, b \in \reals^d$. A natural question in this context is: \emph{Why do we study the covariance structure?}

It is well-established that even for properly initialized multi-layer perceptrons (MLPs), the network outputs $Y_L(a)$ and $Y_L(b)$ become perfectly correlated (correlation=1) in the limit of ``$n \to \infty$, \emph{then} $L \to \infty$'' \citep{samuel2017, poole, hayou2019impact, yang2019fine}.  This can lead to unstable behavior of the gradients and make the model untrainable as the depth increases and also results in the inputs being non-separable by the network\footnote{To see this, assume that the inputs are normalized. In this case, the correlation between the pre-activations of the last layer for two different inputs converges to 1. This implies that as the depth grows, the network output becomes similar for all inputs, and the network no longer separates the data. This is problematic for the first step of gradient descent as it implies that the information from the data is (almost) unused in the first gradient update.}. To address this issue, several techniques involving targeted modifications of the activation function have been proposed \citep{martens2021rapid, zhang2022deep}. In the case of ResNets, the correlation still converges to 1, but at a polynomial rate \citep{yang2017meanfield}. A solution to this problem has been proposed by introducing well-chosen scaling factors in the residual branches, resulting in a correlation kernel that does not converge to 1 \citep{hayou21stable}. This analysis was carried in the limit ``$n \to \infty$, then, $L \to \infty$''. In the case of the joint limit $n,L \to \infty$ with $n/L$ fixed, it has been shown that the covariance/correlation between $Y_{\lfloor t L \rfloor}(a)$ and $Y_{\lfloor t L \rfloor}(b)$ becomes similar to that of a Markov chain that incorporates random terms. However, the correlation still converges to one in this limit.
\begin{prop}[Correlation, \citep{hayou2019impact, li2022sde}]\label{prop:covariance_mlp}
Consider the MLP architecture given by \cref{eq:mlp} and let $a, b \in \reals^d$ such that $a, b \neq 0$. Then, in the limit ``$n\to \infty$, then $L \to \infty$'' or the the joint limit ``$n ,L \to \infty$, $L/n$ fixed'', the correlation $\frac{\langle Y_{L}(a), Y_{L}(b) \rangle}{ \|Y_{L}(a)\| \|Y_{L}(b)\|}$ converges\footnote{Note that weak convergence to a constant implies also convergence in probability.} weakly to 1.
\end{prop}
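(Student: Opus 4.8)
The plan is to treat the two limiting regimes separately, since the iterated limit ``$n\to\infty$, then $L\to\infty$'' reduces to a classical NNGP computation combined with the analysis of a scalar map, whereas the joint limit ``$n,L\to\infty$, $L/n$ fixed'' rests on the neural covariance SDE of \citep{li2022sde}. In both cases the target is convergence in probability to the constant $1$, which by the footnoted equivalence is the same as weak convergence to $1$.

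\textbf{Iterated limit.} I would first fix $L$ and send $n\to\infty$. By the standard Gaussian-process convergence for MLPs \citep{matthews,yang_tensor3_2020}, the normalized Gram entries converge in $L_2$: $\tfrac1n\|Y_L(a)\|^2\to q_L(a)$, $\tfrac1n\|Y_L(b)\|^2\to q_L(b)$ and $\tfrac1n\langle Y_L(a),Y_L(b)\rangle\to q_L(a,b)$, where the deterministic sequences obey the forward recursions $q_l(a)=2\,\E[\phi(\sqrt{q_{l-1}(a)}\,Z)^2]$ and $q_l(a,b)=2\,\E[\phi(U)\phi(V)]$ with $(U,V)$ a centered bivariate Gaussian whose covariance has diagonal $q_{l-1}(a),q_{l-1}(b)$ and off-diagonal $q_{l-1}(a,b)$, started from $q_0(a)=\|a\|^2/d$, $q_0(b)=\|b\|^2/d$. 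For ReLU under He initialization $2\,\E[\phi(\sqrt q\,Z)^2]=q$, so $q_l(a)\equiv q_0(a)>0$ and $q_l(b)\equiv q_0(b)>0$, and by homogeneity of $\phi$ the correlation $c_l:=q_l(a,b)/\sqrt{q_l(a)q_l(b)}$ satisfies the scalar recursion $c_l=f(c_{l-1})$ with the ReLU correlation map $f(x)=\tfrac1\pi\big(\sqrt{1-x^2}+x(\pi-\arccos x)\big)$ \citep{hayou2019impact}. Since the limiting norms are bounded away from $0$, the continuous mapping theorem gives $\frac{\langle Y_L(a),Y_L(b)\rangle}{\|Y_L(a)\|\,\|Y_L(b)\|}\to c_L$ in probability as $n\to\infty$. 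It then remains to let $L\to\infty$: $f$ is continuous and nondecreasing on $[-1,1]$, maps $[-1,1]$ into $[0,1]$, satisfies $f(x)\ge x$ with equality only at $x=1$, and has $1$ as its unique fixed point; hence $(c_l)_{l\ge1}$ is nondecreasing and bounded by $1$, so $c_l\to1$. (From $1-f(1-\varepsilon)=\varepsilon-\tfrac{2\sqrt2}{3\pi}\varepsilon^{3/2}+o(\varepsilon^{3/2})$ one even obtains $1-c_l=\Theta(l^{-2})$, but only $c_l\to1$ is needed.)

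\textbf{Joint limit.} Here I would invoke the convergence result of \citep{li2022sde}: in the width-normalized time variable used there, the vector process formed by $\tfrac1n\|Y_{\lfloor tL\rfloor}(a)\|^2$, $\tfrac1n\langle Y_{\lfloor tL\rfloor}(a),Y_{\lfloor tL\rfloor}(b)\rangle$ and $\tfrac1n\|Y_{\lfloor tL\rfloor}(b)\|^2$ (as functions of $t$) converges weakly, as $n,L\to\infty$ with $L/n$ fixed, to the solution of a system of \ito SDEs. Projecting onto the correlation coordinate yields an SDE $dc_t=\mu(c_t)\,dt+\sigma(c_t)\,dB_t$ (equivalently for $\varepsilon_t=1-c_t$) whose drift inherits the sign of $f(x)-x$, so $\mu>0$ on $[-1,1)$ and $\mu(1)=0$, and whose diffusion coefficient degenerates as $c_t\to1$: when the pre-activations are nearly parallel the fluctuations of the numerator and of the two denominators are almost perfectly aligned and cancel to leading order, leaving $\sigma(c)^2=O((1-c)^2)$. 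Thus $c=1$ is an absorbing state toward which the drift pushes and near which the noise is suppressed, while in layer count the deterministic part of the recursion alone already contracts $1-c$ to $\Theta(L^{-2})$; combining these one concludes $1-c_L\to0$, i.e. $c_L\to1$, in probability.

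\textbf{Main obstacle.} The only substantive difficulty lies in the joint limit, and within it in controlling the stochastic fluctuations near the absorbing boundary $c=1$: one must establish (i) the leading-order cancellation making the per-layer fluctuation of $c_l$ of order $(1-c_{l-1})\,n^{-1/2}$ rather than $n^{-1/2}$, and (ii) that the accumulated variance $\sum_{l\le L}O((1-c_{l-1})^2/n)$ along the trajectory stays negligible (so it cannot offset the deterministic contraction, which it can't since $1-c_l\lesssim l^{-2}$ makes the sum $O(1/n)$). This is precisely what is extracted from the covariance SDE of \citep{li2022sde} together with a bound on the time $c_t$ spends away from $1$; the remaining ingredients --- the NNGP limit for fixed $L$ and the elementary analysis of the map $f$ --- are standard.
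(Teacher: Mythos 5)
The paper does not actually prove \cref{prop:covariance_mlp}; it presents it as a corollary of \citep{hayou2019impact, li2022sde} and cites those works without supplying an argument. So there is no in-paper proof to match against — what follows is an assessment of your attempt on its own terms.

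Your iterated-limit half is correct and is the standard route: the $L_2$ convergence of the normalized Gram entries from the NNGP literature, the He-initialization invariance $q_l(a)\equiv q_0(a)$ for ReLU, the reduction to the scalar map $c_l=f(c_{l-1})$, the observation that $f(x)\ge x$ with unique fixed point $1$, the continuous-mapping step, and the $\Theta(l^{-2})$ expansion from $1-f(1-\varepsilon)=\varepsilon-\frac{2\sqrt2}{3\pi}\varepsilon^{3/2}+o(\varepsilon^{3/2})$ are all right.

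The joint-limit half has a genuine gap in the way it leans on \citep{li2022sde}. The covariance SDE there is derived for the \emph{shaped} activation $\phi_L(z)=z+\frac{1}{\sqrt L}\phi(z)$ precisely because, for unshaped ReLU in the time variable $t=l/L$, the deterministic per-layer increment $f(c)-c$ is $O(1)$ rather than $O(1/L)$; the drift blows up as $L\to\infty$ and no nondegenerate SDE limit exists in that time scale. So the sentence asserting that the triple $\bigl(\tfrac1n\|Y_{\lfloor tL\rfloor}(a)\|^2,\tfrac1n\langle Y_{\lfloor tL\rfloor}(a),Y_{\lfloor tL\rfloor}(b)\rangle,\tfrac1n\|Y_{\lfloor tL\rfloor}(b)\|^2\bigr)$ converges to a system of It\^o SDEs with a drift $\mu(c)$ inheriting the sign of $f(x)-x$ is not what that reference gives you for this proposition, and the claim $\sigma(c)^2=O((1-c)^2)$ for the diffusion coefficient is asserted rather than derived. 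What one actually wants in the joint limit is exactly the discrete statement you gesture at in the last paragraph: bound the conditional fluctuation of $c_l$ given $c_{l-1}$ (a martingale-difference/Doob argument showing the noise near $c=1$ is $o(1-c)$ per step times $n^{-1/2}$), then show the accumulated variance $\sum_{l\le L}$ of those fluctuations cannot offset the deterministic contraction $1-c_l\lesssim l^{-2}$. You name this as the ``only substantive difficulty'' but do not carry it out, and routing it through an SDE that is defined only for the shaped case does not close it.
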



The convergence of the correlation to 1 in the infinite depth limit of a neural network poses a significant issue, as it indicates that the network loses all of the covariance structure from the inputs as the depth increases. This results in 
 degenerate gradients (see e.g. \citep{samuel2017}), rendering the network untrainable. To address this problem in MLPs, various studies have proposed the use of depth-dependent shaped ReLU activations, which prevent the correlation from converging to 1 and exhibit stochastic differential equation (SDE) behavior. As a result, the correlation of the last layer does not converge to a deterministic value in this case.
\begin{prop}[Correlation SDE, Corollary of Thm 3.2 in \cite{li2022sde}]\label{prop:covariance_shaped_mlp}
Consider the MLP architecture given by \cref{eq:mlp} with the following activation function $\phi_L(z) = z + \frac{1}{\sqrt{L}} \phi(z) $ (a modified ReLU). Let $a, b \in \reals^d$ such that $a, b \neq 0$. Then, in the joint limit ``$n ,L \to \infty$, $L/n$ fixed'', the correlation $\frac{\langle Y_{L}(a), Y_{L}(b) \rangle}{ \|Y_L(a)\| \|Y_L(b)\|}$  converges weakly to a nondeterministic random variable.\footnote{In \cite{li2022sde}, the authors show that the correlation of $\frac{\langle \phi_L(Y_{L}(a)), \phi_L(Y_{L}(b)) \rangle}{\sqrt{ \|\phi_L(Y_{L}(a))\|} \sqrt{ \|\phi_L(Y_{L}(b))\|}}$ converges to a random variable in the joint limit. Since $\phi_L$ converges to the identity function in this limit, simple calculations show that the correlation between the pre-activations $\frac{\langle Y_{L}(a), Y_{L}(b) \rangle}{\|Y_{L}(a)\| \|Y_{L}(b)\|}$ is also random in this limit.}
\end{prop}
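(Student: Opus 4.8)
The statement is a change of variables applied on top of the covariance SDE of \cite{li2022sde}, so the plan is three-fold: (i) quote their Theorem~3.2 in the form we need; (ii) show the shaped activation is so close to the identity that the last-layer \emph{pre}-activation correlation differs from the last-layer \emph{post}-activation correlation by a vanishing amount; and (iii) conclude by Slutsky's lemma.

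First I would recall the content of Theorem~3.2 of \cite{li2022sde}, specialized to the recursion $Y_l = W_l\,\phi_L(Y_{l-1})$ with $\phi_L(z) = z + \tfrac{1}{\sqrt L}\phi(z)$: in the joint limit $n,L\to\infty$ with $L/n$ fixed, the rescaled post-activation Gram entries $\tfrac1n\|\phi_L(Y_{\lfloor tL\rfloor}(a))\|^2$, $\tfrac1n\|\phi_L(Y_{\lfloor tL\rfloor}(b))\|^2$ and $\tfrac1n\langle\phi_L(Y_{\lfloor tL\rfloor}(a)),\phi_L(Y_{\lfloor tL\rfloor}(b))\rangle$ converge jointly, as processes in $t\in[0,1]$, to the solution $(u^a_t,u^b_t,v_t)$ of an explicit system of It\^o SDEs, with $u^a_1,u^b_1 > 0$ almost surely and the terminal correlation $\rho_1 := v_1/\sqrt{u^a_1 u^b_1}$ a genuinely random variable (nondeterministic law).

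Next I would exploit that $\phi_L$ is within $O(1/\sqrt L)$ of the identity in a strong sense: $\phi_L$ is $(1+\tfrac1{\sqrt L})$-Lipschitz with $\phi_L(0)=0$, and $\|\phi_L(z)-z\| = \tfrac1{\sqrt L}\|\phi(z)\| \le \tfrac1{\sqrt L}\|z\|$ for every $z\in\reals^n$. From this one gets, for an absolute constant $C$, the deterministic bound $\big|\,\langle\phi_L(z),\phi_L(z')\rangle - \langle z,z'\rangle\,\big| \le \tfrac{C}{\sqrt L}\,(\|z\|^2 + \|z'\|^2)$ and the analogous bound for the two squared norms. Plugging $z = Y_L(a)$, $z' = Y_L(b)$, dividing by $n$, and using that the post-activation Gram entries are tight (they converge in distribution by the previous step, and the same $O(1/\sqrt L)$ comparison shows $\tfrac1n\|Y_L(a)\|^2$ is tight with the same weak limit $u^a_1$ as $\tfrac1n\|\phi_L(Y_L(a))\|^2$), each of the three pre-activation Gram entries equals its post-activation counterpart plus an error that is $o_P(1)$. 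Since $u^a_1,u^b_1 > 0$ a.s., the pre-activation norms $\tfrac1n\|Y_L(a)\|^2$, $\tfrac1n\|Y_L(b)\|^2$ are bounded away from $0$ with probability tending to $1$; writing the correlation as the locally Lipschitz map $(x,y,w)\mapsto w/\sqrt{xy}$ of the Gram triple and invoking Slutsky, $\frac{\langle Y_L(a),Y_L(b)\rangle}{\|Y_L(a)\|\|Y_L(b)\|}$ converges weakly to $\rho_1 = v_1/\sqrt{u^a_1 u^b_1}$, which is nondeterministic. That proves the proposition.

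The one step I expect to require real care is the lower bound on the pre-activation norms needed to pass to the ratio, i.e. a quantitative guarantee that $\tfrac1n\|Y_L(a)\|^2$ does not collapse toward $0$ uniformly in $L$ (equivalently, that the limiting log-norm process of \cite{li2022sde} stays finite on $[0,1]$ and that the $\phi_L$-to-identity discrepancy on the norm does not accumulate across depth). Mere marginal weak convergence of numerator and denominator is not by itself enough to pass to the quotient; one needs this tightness-plus-no-collapse input, which does follow from the structure of the limiting log-norm SDE (an Ornstein--Uhlenbeck-type flow, a.s. finite with a positive exponential factor) combined with the convergence already granted by Theorem~3.2, but it is the place where the argument must be made carefully rather than simply invoked.
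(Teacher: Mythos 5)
Your proposal is correct and matches the paper's approach, which is given only informally in the footnote: cite Theorem~3.2 of \cite{li2022sde} for the post-activation correlation, observe that $\phi_L$ is within $O(1/\sqrt{L})$ of the identity, and transfer the weak limit to the pre-activation correlation. You fill in the ``simple calculations'' the paper elides — the deterministic Gram-entry comparison, tightness, and the no-collapse condition needed to pass to the quotient via Slutsky/continuous mapping — and you correctly flag the last of these as the genuinely delicate step.
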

The joint limit, therefore, yields non-deterministic behaviour of the covariance structure. It is easy to check that even with shaped ReLU as in \cref{prop:covariance_shaped_mlp}, taking the width to infinity first, then depth, the result is a deterministic covariance structure. The main takeaway from this section is the following:
\paragraph{Summary.} \emph{With MLPs (\cref{eq:mlp}), the width and depth limits do not commute in the sense that the behaviour of the distribution of the pre-activations and the covariance structure might differ depending on how the limit is taken.}

With the background information provided above, we are now able to present our findings. In contrast to MLPs, our next section demonstrates that the limits of width and depth for ResNet architectures commute.
\section{Main results: Width and Depth Commute in ResNets}
We use the same notation as in the MLP case. Let $d, n, L \geq 1$, and consider the following ResNet architecture of width $n$ and depth $L$
\begin{equation}\label{eq:resnet}
\begin{aligned}
Y_0(a) &= W_{in} a, \quad a \in \reals^d\\
Y_l(a) &= Y_{l-1}(a) + \frac{1}{\sqrt{L}} W_l \phi(Y_{l-1}(a)), \hspace{0.1cm} l \in [1:L],
\end{aligned}
\end{equation}
where $\phi: \reals \to \reals$ is the ReLU activation function. We assume that the weights are randomly initialized with \iid Gaussian variables $W_l^{ij} \sim \normal(0, \frac{1}{n})$, $W_{in}^{ij} \sim \normal(0, \frac{1}{d})$. For the sake of simplification, we only consider networks with no bias, and we omit the dependence of $Y_l$ on $n$ and $L$ in the notation. 

The $1/\sqrt{L}$ scaling in \cref{eq:resnet}is not chosen arbitrarily. It has been demonstrated that this specific scaling serves to stabilize the norm of $Y_l$ and the gradient norms in the asymptotic limit of large depth (e.g. \cite{hayou2022on, hayou21stable, marion2022scaling}).\footnote{A scaling of the form $L^{-\alpha}$ where $\alpha < 1/2$ yields exploding pre-activations, while a more aggressive scaling where $\alpha>1/2$ yields trivial limiting covariance (identity covariance).}

\subsection{Distribution of the pre-activations in the limit $n, L \to \infty$}

It turns out that for the ResNet architecture given by \eqref{eq:resnet}, the limiting distribution of the pre-activations $Y_{\lfloor t L\rfloor}$ is a zero-mean Gaussian distribution, with an analytic variance term, regardless of how the depth $L$ and width $n$ approach infinity, as long as $\min(n, L) \to \infty$. This is demonstrated in the following result, where an upper bound on the Wasserstein distance between the distribution of the neuron $Y^1_{\lfloor t L\rfloor}$ (the first coordinate of the pre-activations $Y_{\lfloor t L\rfloor}$)\footnote{Notice that the coordinate of the pre-activations are identically distributed (but not necessarily independent).} and that of a zero-mean Gaussian random variable is provided.

\begin{thm}[Convergence of the pre-activations]\label{thm:gaussian_limit}
Let $a \in \reals^d$ such that $a \neq 0$. For $t \in [0,1]$, the random variable $(Y_{\lfloor t L\rfloor}(a))_{L \geq 1}$ converges weakly to a Gaussian random variable with law $\normal(0, v(t,a))$ in the limit of $\min(n,L) \to \infty$, where $v(t,a) = d^{-1} \|a\|^2 \exp(t/2)$.
Moreover, we have the following convergence rate
$$
\sup_{t \in [0,1]} \mathcal{W}_1(\mu^t_{n,L}(a), \mu^t_{\infty, \infty}(a)) \leq C \left(\frac{1}{\sqrt{n}} + \frac{1}{\sqrt{L}} \right)
$$
where $\mu^t_{n,L}(a)$ is the distribution of $Y^1_{\lfloor t L\rfloor}(a)$, $\mu^t_{\infty, \infty}(a)$ is the distribution $\normal(0, v(t,a))$, and $C$ is a constant that depends only on $\|a\|$ and $d$.

Moreover, for two different $i,j \in [n]$, the neurons $Y^i_{\lfloor t L\rfloor}(a)$ and $Y^j_{\lfloor t L\rfloor}(a)$ become independent in the limit $\min(n,L) \to \infty$.
\end{thm}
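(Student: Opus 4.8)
The strategy is to take the limits in the order \emph{depth first, then width}, and combine the two stages with the triangle inequality for $\mathcal{W}_1$. Fix $n$ and regard \eqref{eq:resnet} as an $\mathbb{R}^n$-valued Markov chain. Conditionally on $Y_{l-1}(a)$, the increment $\tfrac{1}{\sqrt L}W_l\phi(Y_{l-1}(a))$ is centered Gaussian with covariance $\tfrac1L R_{l-1}\,I_n$, where $R_{l-1}:=\tfrac1n\|\phi(Y_{l-1}(a))\|^2$, and its $n$ coordinates are conditionally independent (the rows of $W_l$ are independent). Hence \eqref{eq:resnet} is exactly the Euler--Maruyama scheme, with step $1/L$, for the $\mathbb{R}^n$-valued SDE $d\hat Y^i_t=\sqrt{R_t}\,dB^i_t$ with $R_t=\tfrac1n\|\phi(\hat Y_t)\|^2$, $B^1,\dots,B^n$ independent Brownian motions, and $\hat Y_0(a)=Y_0(a)=W_{in}a$. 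The first step is therefore a \emph{quantitative, dimension-free} discretization estimate: for fixed $n$, $\sup_{t\in[0,1]}\mathcal{W}_1(\mu^t_{n,L}(a),\mu^t_{n,\infty}(a))\le C/\sqrt L$ with $C$ not depending on $n$, where $\mu^t_{n,\infty}(a)$ is the law of $\hat Y^1_t(a)$. Weak convergence of the scheme itself is classical (\cite{peluchetti2020resnetdiffusion,hayou2022on}); the point is the uniformity in $n$, which is possible because a single coordinate couples to the others only through the \emph{scalar} average $R_t$. Concretely, one proves uniform-in-$n$ moment and modulus-of-continuity bounds for $t\mapsto R_t$ and its discrete analogue (an average of $n$ exchangeable $O(1)$ terms), notes that $R_t$ stays bounded below by a positive constant with overwhelming probability (the residual term prevents $\|\hat Y_t\|$ from collapsing, so the non-Lipschitz map $z\mapsto\sqrt z$ is harmless), and then reduces the single-coordinate error to the gap between $\tfrac1L\sum_{l\le\lfloor tL\rfloor}R_{l-1}$ and $\int_0^tR_s\,ds$, which is $O(1/\sqrt L)$ with a dimension-free constant since both are conditional variances of a centered Gaussian.

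\emph{Width limit of the SDE.} The system is of mean-field (McKean--Vlasov) type: the diffusion coefficient depends on the empirical quantity $R_t=\tfrac1n\sum_i\phi(\hat Y^i_t)^2$. Couple $(\hat Y^i)_{i\le n}$ to i.i.d.\ processes $(\bar Y^i)_{i\le n}$ solving $d\bar Y^i_t=\sqrt{r_t}\,dB^i_t$ driven by the same Brownian motions and with $\bar Y^i_0=\hat Y^i_0$, where $r_t:=\mathbb{E}[\phi(\bar Y^1_t)^2]$ is deterministic. A Grönwall argument on $\sup_{s\le t}\mathbb{E}|\hat Y^i_s-\bar Y^i_s|^2$ --- bounding $|\sqrt{R_t}-\sqrt{r_t}|$ by $|R_t-r_t|$ and splitting $|R_t-r_t|$ into a coupling-error part and an i.i.d.\ average with fluctuation $O(1/\sqrt n)$ --- yields $\sup_{t\le1}\mathbb{E}|\hat Y^i_t-\bar Y^i_t|^2=O(1/n)$, hence $\mathcal{W}_1(\mu^t_{n,\infty}(a),\nu^t)=O(1/\sqrt n)$, where $\nu^t$ is the law of $\bar Y^1_t$. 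Since $\bar Y^1$ is a deterministically time-changed Brownian motion started at $\mathcal{N}(0,d^{-1}\|a\|^2)$, $\bar Y^1_t\sim\mathcal{N}(0,q_t)$ with $q_t=d^{-1}\|a\|^2+\int_0^tr_s\,ds$; for ReLU, $r_s=\mathbb{E}[\phi(\mathcal{N}(0,q_s))^2]=q_s/2$, so $q_t'=q_t/2$ and $q_t=d^{-1}\|a\|^2e^{t/2}=v(t,a)$. The triangle inequality $\mathcal{W}_1(\mu^t_{n,L},\mu^t_{\infty,\infty})\le\mathcal{W}_1(\mu^t_{n,L},\mu^t_{n,\infty})+\mathcal{W}_1(\mu^t_{n,\infty},\mu^t_{\infty,\infty})\le C(1/\sqrt L+1/\sqrt n)$ then gives the stated rate, uniformly in $t$, and in particular the limit is the same however $\min(n,L)\to\infty$.

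\emph{Independence of neurons.} The same coupling shows $(\hat Y^i_t,\hat Y^j_t)$ is $O(1/\sqrt n)$-close in $L_2$ to $(\bar Y^i_t,\bar Y^j_t)$, which are \emph{independent} (distinct driving Brownian motions, common deterministic coefficient $\sqrt{r_t}$, and $\bar Y^i_0,\bar Y^j_0$ independent). Hence the joint law of $(\hat Y^i_t,\hat Y^j_t)$ converges to $\nu^t\otimes\nu^t=\mathcal{N}(0,v(t,a))^{\otimes2}$, and composing with the depth-limit estimate of the first step transfers this to $(Y^i_{\lfloor tL\rfloor}(a),Y^j_{\lfloor tL\rfloor}(a))$.

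\emph{Main obstacle.} The crux is the uniform-in-$n$ quantitative Euler--Maruyama bound: generic discretization estimates for SDEs in $\mathbb{R}^n$ carry constants that grow with $n$, whereas here they must not, and neither $\phi$ nor $\sqrt{\cdot}$ is smooth. The resolution is structural --- inter-coordinate coupling passes only through the one-dimensional average $R_t$ --- so everything reduces to (i) dimension-free regularity and concentration estimates for $R_t$ and (ii) a one-dimensional conditionally-Gaussian comparison for a single coordinate given its variance path, with the no-collapse lower bound on $R_t$ removing the singularity of $\sqrt{\cdot}$.
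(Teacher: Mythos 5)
Your proposal is correct and follows essentially the same route as the paper: identify the ResNet recursion as the Euler--Maruyama scheme for an $\mathbb{R}^n$-valued SDE with scalar diffusion $\tfrac{1}{\sqrt n}\|\phi(\cdot)\|$, obtain a width-uniform $O(1/\sqrt L)$ discretization bound, then couple the coordinates to an i.i.d.\ McKean--Vlasov limit to get the $O(1/\sqrt n)$ term, and conclude by the triangle inequality for $\mathcal{W}_1$. The only cosmetic difference is where you invoke the lower bound on $R_t$ to control $\sqrt{\cdot}$: the paper avoids it entirely in the depth-limit step by noting $x\mapsto\tfrac{1}{\sqrt n}\|\phi(x)\|$ is globally Lipschitz in $x$ (so the constant in the generic Euler estimate scales only through $\E\|X_0\|^2 \propto n$, which divides out), and uses a Hoeffding-type concentration lower bound only in the mean-field (width-limit) step.
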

The proof of \cref{thm:gaussian_limit} is provided in \cref{sec:proof_gaussian_limit}. It relies on two technical results: 1) Width-uniform convergence rate of the finite-width neural networks to an infinite-depth SDE.\footnote{By width-uniform, we refer to bounds with constants that do not depend on the width $n$.} 2) A new result on the convergence of particles to a mean field process. Both results are new. More details are provided in the Appendix.

\cref{thm:gaussian_limit} suggests that the distribution of the pre-activations becomes similar to a Gaussian distribution as $\min(n,L) \to \infty$ regardless of how $n$ and $L$ go to infinity. Note that the limiting distribution is the same as the one reported in \citep{hayou2022on} where the author considered the limit ``$n \to \infty$, \emph{then} $L \to \infty$''. Our result generalizes these findings and establishes the universality of the Gaussian behaviour as long as $n \to \infty$ and $L\to \infty$. We validate these theoretical predictions in \cref{sec:experiments}. An important consequence of the Gaussian behaviour is that the residual network can be seen as a Gaussian process in this limit with a well-specified kernel function (see next section). Leveraging this result to perform Bayesian inference with infinite-width-and-depth networks can be an interesting direction for future work.

\subsection{Neural covariance}

Unlike the covariance structure in MLPs which exhibits different limiting behaviors depending on how the width and depth limits are taken, we show in the next result that for the ResNet architecture given by \eqref{eq:resnet}, the neural covariance converges strongly to a deterministic kernel, which is given by the solution of a differential flow, in the limit $\min(n,L) \to \infty$ regardless of the relative rate at which $n$ and $L$ tend to infinity.

\begin{thm}[Neural covariance]\label{thm:covariance}
Let $a, b \in \reals^d$ such that $a, b \neq 0$ and $a \neq b$. Define the neural covariance kernel $\hat{q}_t(a,b) = \frac{\langle Y_{\lfloor t L\rfloor}(a), Y_{\lfloor t L\rfloor}(b) \rangle}{n}$. Then, we have the following 
$$
\sup_{t \in [0,1]} \left\| \hat{q}_t(a,b) - q_t(a,b)\right\|_{L_2} \leq C \left(\frac{1}{\sqrt{n}} + \frac{1}{\sqrt{L}} \right)
$$
where $C$ is a constant that depends only on $\|a\|$, $\|b\|$, and $d$, and $q_t(a,b)$ is the solution of the following differential flow
\begin{equation}
\begin{aligned}
\begin{cases}
\frac{d q_t(a,b)}{dt} &= \frac{1}{2} \frac{f(c_t(a,b))}{c_t(a,b)} q_t(a,b),\\
c_t(a,b) &= \frac{q_t(a,b)}{ \sqrt{q_t(a,a)} \sqrt{q_t(b,b)}},\\
q_0(a,b) &= \frac{\langle a, b \rangle}{d},
\end{cases}
\end{aligned}
\end{equation}
where the function $f: [-1,1] \to [-1,1]$ is given by 
$$
f(z) = \frac{1}{\pi} ( z \arcsin(z) + \sqrt{1 - z^2}) + \frac{1}{2}z.
$$
\end{thm}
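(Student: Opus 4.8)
The plan is to reduce the neural covariance result to the same two-step strategy announced for \cref{thm:gaussian_limit}: first pass to the infinite-depth SDE at finite width $n$ with a width-uniform rate, then pass to the mean-field (infinite-width) limit, and finally identify the limiting ODE. Concretely, write the finite-width, finite-depth recursion for $\hat q_{l/L}(a,b) = \tfrac1n \langle Y_l(a), Y_l(b)\rangle$. Expanding \cref{eq:resnet} gives
\begin{equation*}
\hat q_{l}(a,b) = \hat q_{l-1}(a,b) + \tfrac{1}{\sqrt L}\,\Delta_l + \tfrac{1}{L}\,\Theta_l,
\end{equation*}
where $\Delta_l$ is a martingale increment built from $W_l$ acting on $\phi(Y_{l-1}(a)),\phi(Y_{l-1}(b))$, and $\Theta_l = \tfrac1n \langle W_l\phi(Y_{l-1}(a)), W_l\phi(Y_{l-1}(b))\rangle$ has conditional mean (over $W_l$) equal to $\tfrac1n\langle \phi(Y_{l-1}(a)),\phi(Y_{l-1}(b))\rangle$. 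So the drift per unit $t=l/L$ is $\tfrac1n\langle\phi(Y_{l-1}(a)),\phi(Y_{l-1}(b))\rangle$, and the $\tfrac{1}{\sqrt L}\Delta_l$ terms contribute a quadratic-variation term of order $1/L$ that vanishes in the depth limit (this is exactly why the covariance, unlike the shaped-MLP case, is deterministic). The first step is therefore to invoke the width-uniform Euler–Maruyama-type bound to control $\sup_t \|\hat q_t^{n,L} - \hat q_t^{n,\infty}\|_{L_2} \le C/\sqrt L$ with $C$ independent of $n$, where $\hat q_t^{n,\infty}$ solves the corresponding finite-width SDE system for the triple $(\hat q_t(a,a),\hat q_t(b,b),\hat q_t(a,b))$.

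Second, I would take $n\to\infty$ in that finite-width SDE. By the law of large numbers across neurons (made rigorous via the mean-field/propagation-of-chaos result the paper proves), $\tfrac1n\langle\phi(Y_{t}(a)),\phi(Y_{t}(b))\rangle$ concentrates around its expectation under the coordinatewise limiting law, and the martingale part has variance $O(1/n)$ so it disappears. Here the key computation is the Gaussian expectation: if $(Y^1_t(a),Y^1_t(b))$ is jointly Gaussian with covariance $\begin{psmallmatrix} q_t(a,a) & q_t(a,b)\\ q_t(a,b) & q_t(b,b)\end{psmallmatrix}$, then $\E[\phi(Y^1_t(a))\phi(Y^1_t(b))] = \sqrt{q_t(a,a)q_t(b,b)}\cdot g(c_t)$ with $c_t$ the correlation, where $g$ is the standard ReLU arc-cosine kernel $g(z)=\tfrac{1}{2\pi}(z\arccos(-z)+\sqrt{1-z^2})$; matching this against the stated $f$ via $g(z)=\tfrac{f(z)}{2}\cdot\tfrac{1}{?}$ — i.e. checking $\E[\phi(X)\phi(X)]=\tfrac12 q$ on the diagonal and reconciling normalizations — recovers the drift $\tfrac12 \tfrac{f(c_t)}{c_t} q_t$ after dividing by $q_t(a,a)q_t(b,b)$ appropriately. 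In particular on the diagonal $q_t(a,a)$ solves $\dot q = \tfrac12 q$ (since $f(1)/1 = 1$), giving $q_t(a,a)=d^{-1}\|a\|^2 e^{t/2}$, consistent with the variance $v(t,a)$ in \cref{thm:gaussian_limit}. Then a Grönwall argument using local Lipschitzness of the vector field $(q_{aa},q_{bb},q_{ab})\mapsto$ drift (away from the degenerate set, which the flow never reaches because correlations stay in a compact subinterval of $[-1,1]$) controls $\sup_t\|\hat q_t^{n,\infty}-q_t\|_{L_2}\le C/\sqrt n$.

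Combining the two bounds by the triangle inequality gives the claimed $C(1/\sqrt n + 1/\sqrt L)$. The main obstacle is making the two limits genuinely \emph{uniform}: the Euler–Maruyama bound in step one must have a constant that does not blow up with $n$ (this requires width-uniform moment bounds on $\|Y_l(a)\|/\sqrt n$ and on $\phi$-quadratics, handled by the paper's first technical lemma), and the law-of-large-numbers step two must be uniform in $t\in[0,1]$ — which is delicate because the neurons $Y^i_t(a)$ are exchangeable but not independent at finite $n$, so one needs the propagation-of-chaos / mean-field concentration estimate rather than a naive CLT. A secondary technical point is controlling the drift near where $q_t(a,a)$ or $q_t(b,b)$ could be small: one shows a deterministic lower bound $q_t(\cdot,\cdot)\ge q_0(\cdot,\cdot)>0$ from the structure of the flow (the drift is nonnegative on the diagonal), so $c_t$ and $f(c_t)/c_t$ stay bounded and Lipschitz throughout, legitimizing Grönwall. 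Everything else — the martingale-part variance estimates, the Gaussian integral for $f$, and the Grönwall bookkeeping — is routine given the paper's preparatory lemmas.
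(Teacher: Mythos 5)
Your proposal follows essentially the same two-step strategy as the paper: reduce to the infinite-depth SDE at finite $n$ with a width-uniform Euler--Maruyama rate $O(L^{-1/2})$, then take $n\to\infty$ via the McKean--Vlasov / propagation-of-chaos estimate (the paper splits this into a coupling term and an iid LLN term, both $O(n^{-1/2})$), and combine by triangle inequality; your Gaussian arc-cosine computation for the drift matches the paper's $f$ (indeed $f=2g$ in your normalization). Two small slips worth flagging: (i) you attribute the disappearance of the $L^{-1/2}\Delta_l$ martingale to the depth limit, but its total quadratic variation over $L$ steps is $O(1/n)$, not $O(1/L)$ --- it is the width limit, not the depth limit, that makes the covariance deterministic, as you in fact account for correctly in your step two; and (ii) at finite $n$ the triple $(\hat q_t(a,a),\hat q_t(b,b),\hat q_t(a,b))$ does \emph{not} satisfy a closed SDE system, since the drift $\tfrac1n\langle\phi(X_t(a)),\phi(X_t(b))\rangle$ depends on the full empirical law of the coordinates and not only on those three second moments, so the width-uniform Euler bound must be established at the level of the $n$-dimensional pre-activation vectors (as in the paper's \cref{thm:width_uniform_euler_single} and \cref{prop:convergence_law_width_uniform_multiple}) and then pushed through to the scalar covariance, rather than directly on a scalar ODE/SDE.
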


The proof of \cref{thm:covariance} is provided in \cref{sec:proof_covariance}. The result of \cref{thm:covariance} unifies previous approaches to understanding the covariance structure in large width and depth ResNets. Perhaps the most important consequence of our result is that it implies that all previous results that considered the limit $n\to\infty$, then $L \to \infty$, in order to understand the covariance structure in ResNets still hold for ResNets where the depth is of the same order as the width and both are large. This is specific for ResNet and does not hold for instance for MLPs where the joint-limit yields different asymptotic behaviors (see \cref{sec:mlps}). Notice that the limiting covariance kernel $q_t$ is the same kernel found in \citep{hayou21stable} in the limit $n \to \infty,$ then $L \to \infty$.\footnote{In \cite{hayou21stable}, the authors showed that the kernel $q_t$ is universal, meaning the network output is rich enough that we can approximate any continuous function on a compact set with features from this kernel.} It is also worth noting that constant $C$ can be chosen independent of $\|a\|$ and $\|b\|$ provided that the inputs belong to a compact set that does not contain $0$. The result of \cref{thm:covariance} can also be expressed in terms of the correlation. We demonstrate this in the next theorem.
\begin{thm}[Neural correlation]\label{thm:correlation}
Under the same conditions of \cref{thm:covariance}, we have the following 
$$
\sup_{t \in [0,1]} \left\| \hat{c}_t(a,b) - c_t(a,b)\right\|_{L_2} \leq C' \left(\frac{1}{\sqrt{n}} + \frac{1}{\sqrt{L}} \right)
$$
where $C'$ is a constant that depends only on $\|a\|$, $\|b\|$, and $d$, and $\hat{c}_t(a,b) = \frac{\langle Y_{\lfloor t L\rfloor}(a), Y_{\lfloor t L\rfloor}(b) \rangle}{\|Y_{\lfloor t L\rfloor}(a)\| \|Y_{\lfloor t L\rfloor}(b)\|}$ is the neural correlation kernel, and $c_t(a,b)$ is defined in \cref{thm:covariance}.
\end{thm}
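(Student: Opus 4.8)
\emph{Approach.} The idea is to regard the neural correlation as a smooth function of the three covariance quantities and to push the $L_2$ estimate of \cref{thm:covariance} through that function. Writing $\hat q_t(a,a)=n^{-1}\|Y_{\lfloor tL\rfloor}(a)\|^2$ and $\hat q_t(b,b)=n^{-1}\|Y_{\lfloor tL\rfloor}(b)\|^2$, we have $\hat c_t(a,b)=g\big(\hat q_t(a,b),\hat q_t(a,a),\hat q_t(b,b)\big)$ and $c_t(a,b)=g\big(q_t(a,b),q_t(a,a),q_t(b,b)\big)$ with $g(x,y,z)=x(yz)^{-1/2}$. \cref{thm:covariance} already controls $\hat q_t(a,b)-q_t(a,b)$ in $L_2$ at rate $n^{-1/2}+L^{-1/2}$ uniformly in $t$, and the same estimate for the diagonal entries $\hat q_t(a,a)-q_t(a,a)$ and $\hat q_t(b,b)-q_t(b,b)$ is available as a special case of its proof (equivalently, from the pre-activation norm concentration underlying \cref{thm:gaussian_limit}), with $q_t(a,a)=d^{-1}\|a\|^2e^{t/2}$. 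So all inputs to $g$ are controlled; the only difficulty is that $g$ is not globally Lipschitz --- it blows up as $y$ or $z\to 0$. This is circumvented using the a priori bounds $|\hat c_t(a,b)|\le 1$ and $|c_t(a,b)|\le 1$ (Cauchy--Schwarz), which make a good-event/bad-event decomposition affordable.

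\emph{Good-event/bad-event split.} Set $y_0=\tfrac12 d^{-1}\|a\|^2$, $z_0=\tfrac12 d^{-1}\|b\|^2$ and let
\[
E_t=\big\{\hat q_t(a,a)\ge \tfrac12 q_t(a,a)\big\}\cap\big\{\hat q_t(b,b)\ge\tfrac12 q_t(b,b)\big\}.
\]
On $E_t^c$ I would use the crude bound $|\hat c_t(a,b)-c_t(a,b)|\le 2$ together with Chebyshev's inequality: e.g. $\mathbb{P}\big(\hat q_t(a,a)<\tfrac12 q_t(a,a)\big)\le 4q_t(a,a)^{-2}\|\hat q_t(a,a)-q_t(a,a)\|_{L_2}^2$, and since $q_t(a,a)\ge d^{-1}\|a\|^2>0$ uniformly in $t$, this is $\bigO\big((n^{-1/2}+L^{-1/2})^2\big)$; hence $\|(\hat c_t-c_t)\ind_{E_t^c}\|_{L_2}\le 2\,\mathbb{P}(E_t^c)^{1/2}=\bigO(n^{-1/2}+L^{-1/2})$. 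On $E_t$ consider the set $\mathcal D=\{(x,y,z): yz\ge x^2,\ y\ge y_0,\ z\ge z_0\}$, which is convex (intersection of the half-spaces $\{y\ge y_0\}$, $\{z\ge z_0\}$ with the convex cone $\{yz\ge x^2,\ y,z\ge 0\}$ of coordinates of positive semidefinite $2\times 2$ matrices). Both triples lie in $\mathcal D$: the empirical one by Cauchy--Schwarz ($\hat q_t(a,b)^2\le\hat q_t(a,a)\hat q_t(b,b)$) and the definition of $E_t$ (using $q_t(a,a)\ge d^{-1}\|a\|^2\ge y_0$, likewise for $b$); the limiting one because $q_t$ inherits $q_t(a,b)^2\le q_t(a,a)q_t(b,b)$ by passing to the $L_2$-limit in Cauchy--Schwarz, and $q_t(a,a)\ge y_0$, $q_t(b,b)\ge z_0$. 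Hence the segment joining them stays in $\mathcal D$. A short computation gives $|\partial_x g|\le(y_0z_0)^{-1/2}$ and, using $|x|\le\sqrt{yz}$ on $\mathcal D$, $|\partial_y g|\le\tfrac1{2y_0}$ and $|\partial_z g|\le\tfrac1{2z_0}$, so $g$ is $K$-Lipschitz (for $\|\cdot\|_\infty$) on $\mathcal D$ with $K=(y_0z_0)^{-1/2}+\tfrac1{2y_0}+\tfrac1{2z_0}$. Therefore, on $E_t$,
\[
|\hat c_t(a,b)-c_t(a,b)|\le K\big(|\hat q_t(a,b)-q_t(a,b)|+|\hat q_t(a,a)-q_t(a,a)|+|\hat q_t(b,b)-q_t(b,b)|\big),
\]
and taking $L_2$ norms, using the triangle inequality, \cref{thm:covariance} and its diagonal version, $\|(\hat c_t-c_t)\ind_{E_t}\|_{L_2}\le 3KC(n^{-1/2}+L^{-1/2})$. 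Adding the two contributions and taking $\sup_{t\in[0,1]}$ proves the bound with $C'$ depending only on $\|a\|,\|b\|,d$ (through $C$, $y_0$, $z_0$).

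\emph{Main obstacle.} There is no deep obstacle here: all the analytic work sits in \cref{thm:covariance}. The one point requiring care is making the Lipschitz estimate rigorous --- i.e. verifying that \emph{both} endpoints of the segment lie in the convex set $\mathcal D$ on which $\nabla g$ is bounded, which is exactly why I isolate the positive-semidefinite-cone description and check the Cauchy--Schwarz inequalities for the empirical and the limiting triples separately. The only mildly delicate bookkeeping item is the diagonal estimate $\sup_t\|\hat q_t(a,a)-q_t(a,a)\|_{L_2}\le C(n^{-1/2}+L^{-1/2})$, since \cref{thm:covariance} is stated for $a\ne b$; it should be recorded explicitly, either as the $a=b$ case of (the proof of) \cref{thm:covariance} or via the pre-activation norm concentration used to prove \cref{thm:gaussian_limit}.
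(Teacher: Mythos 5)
Your proof is correct and takes essentially the same approach as the paper: both control the bad event where a denominator $\hat q_t(\cdot,\cdot)$ drops below half its limit via a Markov/Chebyshev estimate derived from \cref{thm:covariance}, and both apply a Lipschitz-type bound on the good event; the paper organizes this as a two-term algebraic split (numerator error plus denominator error, each conditioned on the good event), whereas you instead bound $\nabla g$ for $g(x,y,z)=x(yz)^{-1/2}$ on a convex subset of the PSD cone, which is somewhat cleaner bookkeeping but not a structurally different argument. One useful thing you make explicit that the paper elides: since \cref{thm:covariance} is stated only for $a\neq b$, the diagonal estimate $\sup_t\|\hat q_t(a,a)-q_t(a,a)\|_{L_2}\le C(n^{-1/2}+L^{-1/2})$ does need to be recorded separately (as the $a=b$ case of that proof or from the norm concentration in \cref{thm:gaussian_limit}) before Markov's inequality can be applied.
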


The proof of \cref{thm:correlation} relies on using a concentration inequality to control the inverse variance term, and conclude by using the bound in \cref{thm:covariance}. We refer the reader to the Appendix for more details.

The differential flow satisfied by the kernel function $q_t$ can actually be simplified and expressed as an ordinary differential equation (ODE). We show this in the next lemma.

\begin{lemma}\label{lemma:kernel_ode}
Let $z = (a,b) \in \reals^d \times \reals^d$. The function $q_t$ in \cref{thm:covariance} is the solution of the following ODE:
$$
\frac{d q_t(z)}{dt} = \frac{\exp(t/2)}{2} \xi(z) f\left(\xi(z)^{-1} \exp(-t/2) q_t(z)\right),
$$
where $\xi(z) = \frac{\|a\| \, \|b\|}{d}$, and $f$ is defined in \cref{thm:covariance}.
\end{lemma}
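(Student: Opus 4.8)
The plan is to reduce the coupled system in \cref{thm:covariance} to a scalar ODE by first solving the two ``diagonal'' flows $t \mapsto q_t(a,a)$ and $t \mapsto q_t(b,b)$ in closed form, and then substituting the resulting expression for the normalizing factor $\sqrt{q_t(a,a)}\sqrt{q_t(b,b)}$ back into the equation governing $q_t(a,b)$.

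First I would record the elementary fact that $f(1) = 1$: indeed $f(1) = \frac{1}{\pi}\left(1\cdot \arcsin 1 + \sqrt{1-1}\right) + \frac12 = \frac{1}{\pi}\cdot\frac{\pi}{2} + \frac12 = 1$. Next, specializing the system to $b = a$, the normalized kernel is constant, $c_t(a,a) = q_t(a,a)/q_t(a,a) = 1$ for all $t$, so the diagonal flow collapses to the linear ODE $\frac{d}{dt} q_t(a,a) = \tfrac12 f(1)\, q_t(a,a) = \tfrac12 q_t(a,a)$ with initial condition $q_0(a,a) = \|a\|^2/d$. By existence and uniqueness this yields $q_t(a,a) = \frac{\|a\|^2}{d}\exp(t/2)$, and identically $q_t(b,b) = \frac{\|b\|^2}{d}\exp(t/2)$. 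Consequently $\sqrt{q_t(a,a)}\sqrt{q_t(b,b)} = \frac{\|a\|\,\|b\|}{d}\exp(t/2) = \xi(z)\exp(t/2)$, and therefore $c_t(z) = \xi(z)^{-1}\exp(-t/2)\, q_t(z)$.

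Finally I would substitute this identity into the first line of the system:
$$
\frac{d}{dt} q_t(z) = \frac12 \frac{f(c_t(z))}{c_t(z)}\, q_t(z) = \frac12 f(c_t(z))\cdot \frac{q_t(z)}{c_t(z)} = \frac12 f(c_t(z))\cdot \xi(z)\exp(t/2),
$$
and replacing $c_t(z)$ by its closed-form expression $\xi(z)^{-1}\exp(-t/2)\, q_t(z)$ gives exactly the stated ODE.

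There is essentially no obstacle here; the only points needing a little care are the computation $f(1) = 1$ (this is precisely what makes the diagonal flow exactly $e^{t/2}$, so that the normalization factorizes cleanly) and the invocation of existence/uniqueness for the two diagonal ODEs so that the explicit formulas are legitimate. One could additionally remark that the reformulated ODE is well-posed and equivalent to the original flow: since $f'(z) = \frac{\arcsin z}{\pi} + \frac12$ is bounded on $[-1,1]$, $f$ is Lipschitz there, so the right-hand side $\frac{\exp(t/2)}{2}\xi(z)\, f\!\left(\xi(z)^{-1}\exp(-t/2)\, q_t(z)\right)$ is locally Lipschitz in $q_t(z)$, and together with the explicit diagonals it determines $q_t(z)$ uniquely; note also that this form is free of the apparent singularity at $c_t(z) = 0$ present in the quotient $f(c_t)/c_t$.
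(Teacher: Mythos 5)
Your proposal is correct and follows essentially the same route as the paper's own proof: compute $f(1)=1$, deduce $q_t(a,a) = d^{-1}\|a\|^2 e^{t/2}$ (and likewise for $b$), and substitute into the defining flow. Your write-up is a bit more explicit about the algebra and adds some welcome remarks on uniqueness and the removal of the apparent singularity at $c_t=0$, but there is no substantive difference in approach.
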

\begin{proof}
The proof is straightforward by noticing that $f(1)=1$. With this we get $\frac{d q_t(a,a)}{dt} = \frac{1}{2} q_t(a,a)$ which yields $q_t(a,a) = q_0(a,a) \exp(t/2) = d^{-1} \|a\|^2 \exp(t/2)$. The same holds for $b$, which concludes the proof.
\end{proof}

\cref{lemma:kernel_ode} will prove useful in the experiments section when we will have to approximate the solution $q_t$ using ODE solvers.

\section{Experiments and Practical Implications}\label{sec:experiments}
In this section, we validate our theoretical results with extensive simulations on large width and depth residual neural networks of the form \cref{eq:resnet}. 
\subsection{Gaussian behavior and independence of neurons}
\cref{thm:gaussian_limit} predicts that in the large depth and width limit, the neurons (pre-activations) converge weakly to a Gaussian distribution. To empirically validate this finding, we show in \cref{fig:hist_with_ks} the histograms of the first neuron in the last layer ($t=1$ in \cref{thm:gaussian_limit}) for a randomly chosen input $a$ and $n, L \in \{5, 50, 500\}$. We also perform a Kolmogorov-Smirnov normality test and report the statistic ($KS$) and the p-value. As can be seen in \cref{fig:hist_with_ks}, the histograms appear to fit the theoretical Gaussian distribution more closely as width and depth increase. Additionally, the KS statistic decreases as the width and depth increase. For smaller widths, the p-values are extremely small indicating a non-Gaussian behavior. This is expected as the Gaussian behavior arises primarily due to the average behavior when the width increases. The depth also plays a role in the goodness of fit, as can be seen for the pair $(n,L) = (500,50)$ and $(n,L) = (500,500)$ where the latter shows a better fit in terms of the KS statistic which measures the distance between the empirical cumulative distribution function and the theoretical one. Notice also the contrast with the previously reported case of MLP (\cref{fig:non_gaussian_mlp}) where the the distribution of the neurons in the last layer is heavy-tailed.
\begin{figure}
    \centering
    \includegraphics[width=1.1\linewidth]{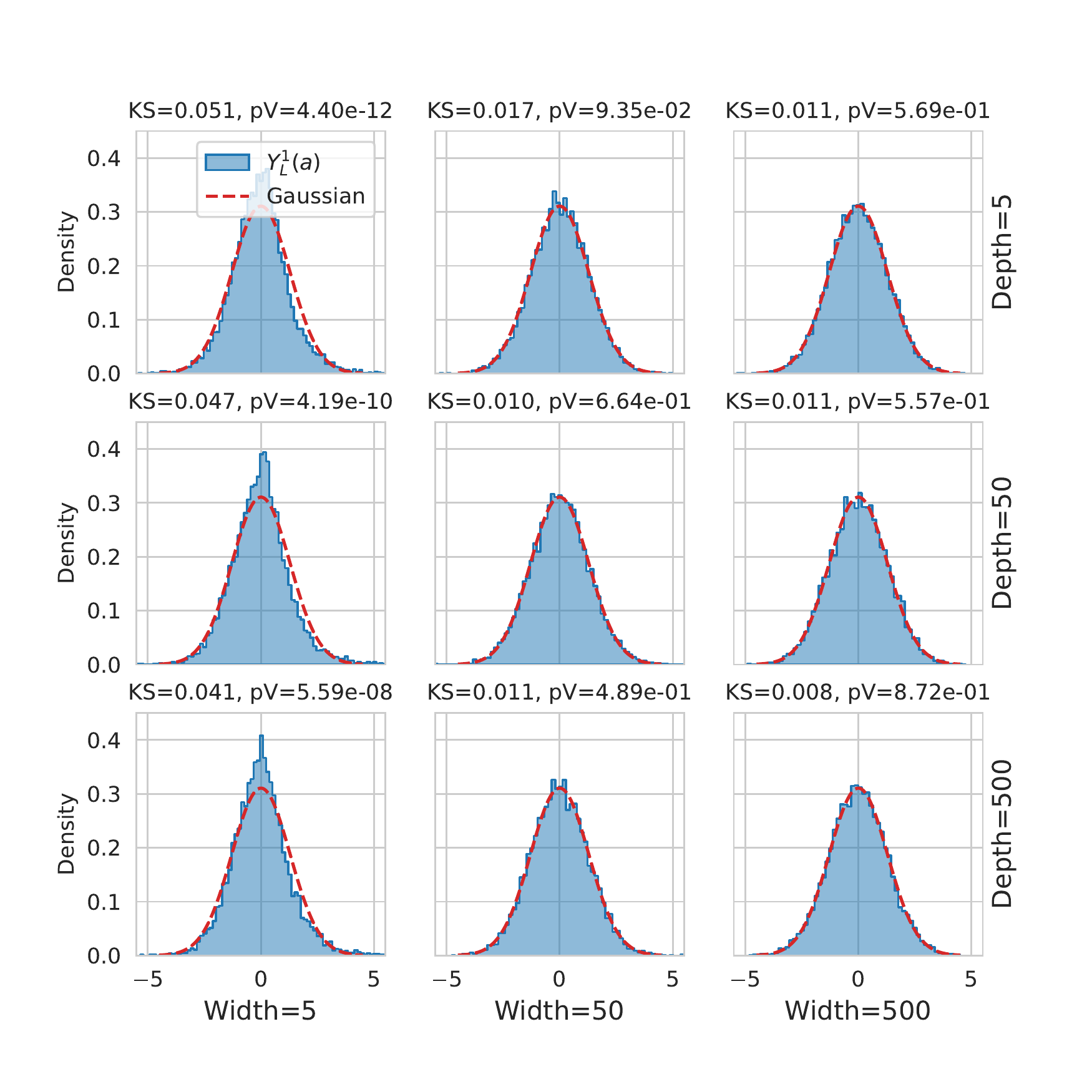}
    \caption{Histogram of $Y^1_L(a)$ for ResNet \cref{eq:resnet} with $n, L \in \{5, 50, 500\}$, $d=30$,  and $a = \sqrt{d} \frac{u}{\|u\|}$ and $u \in \reals^d$ has all coordinates randomly sampled from the unfirom distribution $\mathcal{U}([0,1])$. The histogram is based on $N=10^4$ simulations. The red dashed line represents the theoretical distribution (Gaussian) predicted in \cref{thm:gaussian_limit}. We also peform a Kolmogorov-Smirnov normality test and report the KS statistic and the p-value.}
    \label{fig:hist_with_ks}
    \vspace{-0.2cm}
\end{figure}

Another theoretical prediction of \cref{thm:gaussian_limit} is the independence of the neurons $(Y^i_{\lfloor tL \rfloor})_{1 \leq i \leq L}$. To validate this prediction, we show in figure \cref{fig:pairplot_500x500} the pair-wise joint distributions of 3 randomly chosen neurons in the last layer ($t = 1$). We also perform a kernel density estimation (KDE) using the Gaussian kernel and illustrate the result on top of the histograms. The joint distributions show an excellent match with an isotropic 2-dimensional Gaussian distribution which indicates independence of the neurons.

In \cref{fig:joyplots}, we investigate the distribution of the first neuron in each layer in a ResNet/MLP of width $n=500$ and depth $L=500$. For the ResNet architecture, the distribution is relatively similar across layers which is expected since \cref{thm:gaussian_limit} predicts a Gaussian limit with a standard deviation that differs only by a factor of $e^{1/4} \approx 1.28$ between the first and the last layers. In MLPs, the distribution varies across layers with the neurons in the last layers displaying heavy-tailed shapes, which agrees with \cref{fig:non_gaussian_mlp}.

\begin{figure}
    \centering
    \includegraphics[width=\linewidth]{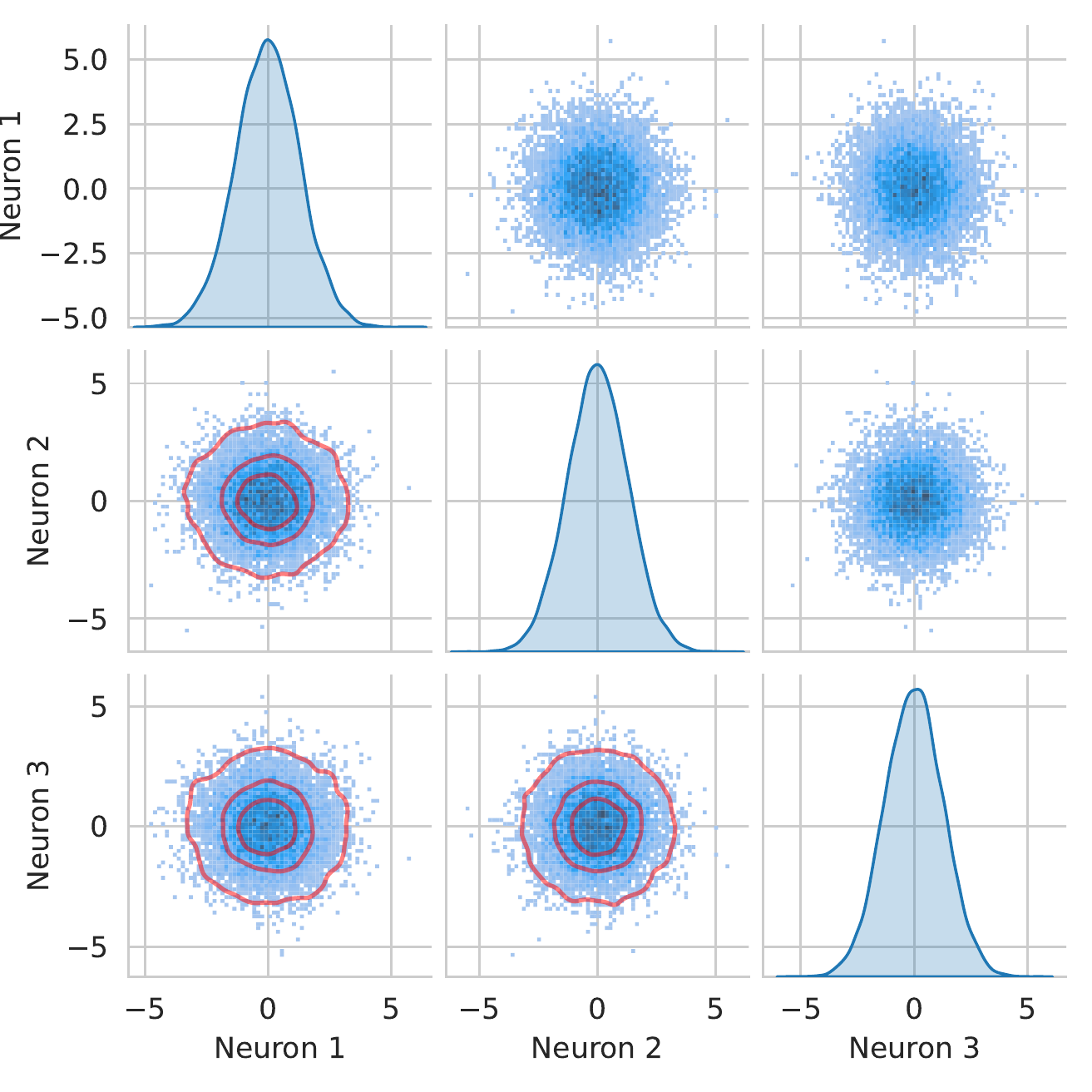}
    \caption{Joint distributions of $(Y^i_L(a), Y^j_L(a))$ for ResNet \cref{eq:resnet} with $n, L = 500$, $d=30$, $i,j \in \{i_1, i_2, i_3\}$ where $i_1,i_2, i_3$ are randomly sampled from $\{1, \dots, n\}$, and $a = \sqrt{d} \frac{u}{\|u\|}$ and $u \in \reals^d$ has all coordinates randomly sampled from the uniform distribution $\mathcal{U}([0,1])$. The histograms are based on $N=10^4$ simulations. The red curves represent an isotropic two-dimensional Gaussian distribution (i.e. independent coordinates).}
    \label{fig:pairplot_500x500}
    \vspace{-0.2cm}
\end{figure}

\begin{figure}
     \centering
     \begin{subfigure}[b]{0.2\textwidth}
         \centering
         \includegraphics[width=\textwidth]{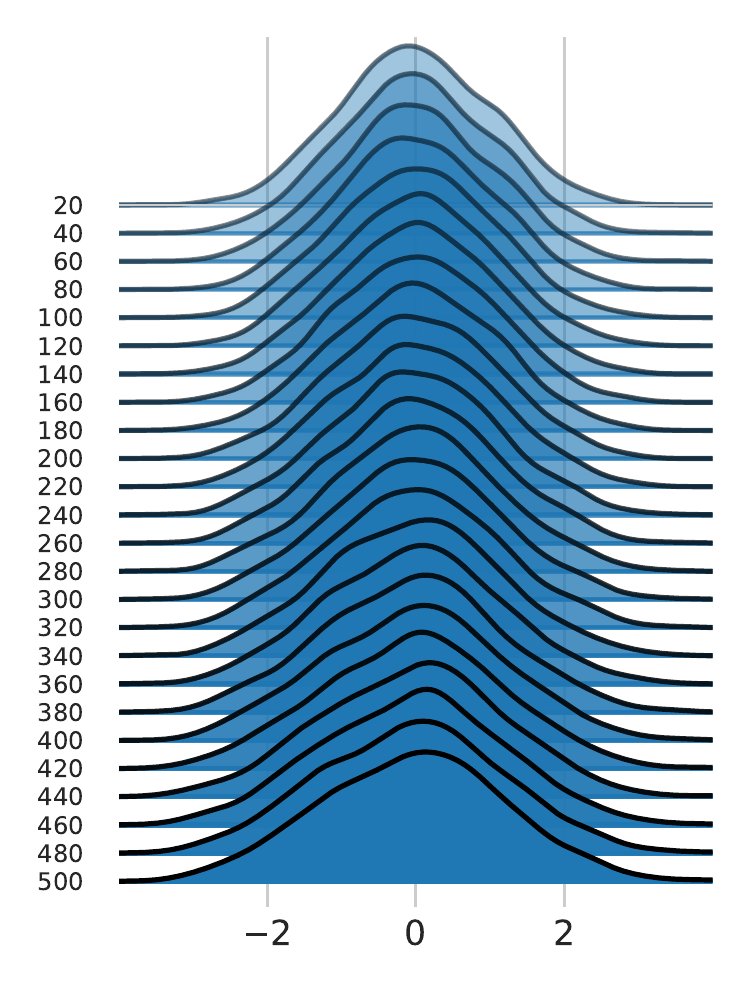}
         \caption{ResNet}
     \end{subfigure}
     \hfill
     \begin{subfigure}[b]{0.2\textwidth}
         \centering
         \includegraphics[width=\textwidth]{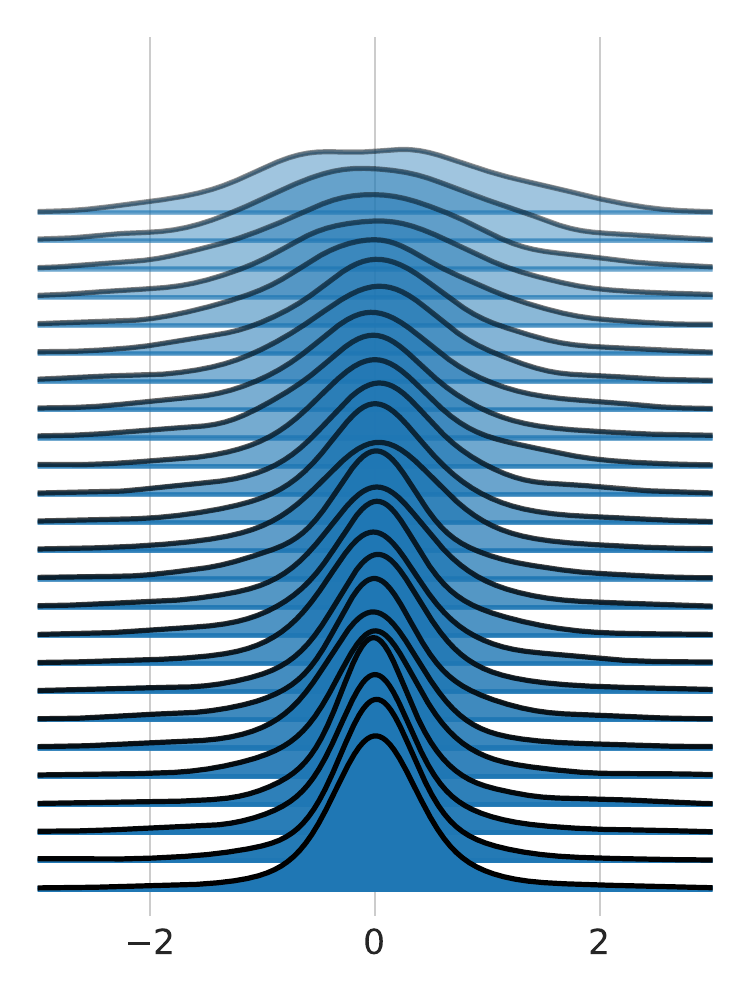}
         \caption{MLP}
     \end{subfigure}
        \caption{Densities (approximated by Kernel Density Estimation) of the first neuron $Y^1_{l}(a)$ for $l \in \{ 20 k, k=1, \dots,25\}$ for a ResNet \cref{eq:resnet} and an MLP \cref{eq:mlp} with $(n, L) = (500, 500)$. The input $a$ is randomly sampled and normalized in the same way as in \cref{fig:hist_with_ks}.}
        \label{fig:joyplots}
\end{figure}

\subsection{Convergence of neural covariance}
\cref{thm:covariance} predicts that the covariance $\hat{q}_t(a,b)$ for two inputs $a, b$ converges in $L_2$ norm to $q_t$ in the limit $\min(n,L) \to \infty$. In \cref{fig:covariance}, we compare the empirical covariance $\hat{q}_t$ with the theoretical prediction $q_t$ for $(n,L) \in \{5, 50, 500, 5000\}$. The empirical $L_2$ error is also reported. As the width increases, we observe a good match with the theory. The role of the depth is less visually noticeable, but for instance, with width $n = 5000$, we can see that the $L_2$ error is smaller with depth $L=5000$ as compared to depth $L=5$ (see \cref{sec:discussion_width_depth} for a more in-depth discussion of the role of width and depth). The theoretical prediction $q_t$ is approximated with a PDE solver (RK45 method, \cite{Fehlberg1968ClassicalFS}) for $t \in [0,1]$ with a discretization step $\Delta t = $1e-4.

\subsection{Role of width and depth}\label{sec:discussion_width_depth}
From \cref{fig:hist_with_ks} and \cref{fig:covariance}, it appears that the role of the width is more important than that of the depth in the convergence to the limiting values. In this section, we provide an intuitive explanation as to why that happens. First of all, recall that in both figures, the impact of depth is less noticeable but reflected in some measures (KS statistic in \cref{fig:hist_with_ks}, and $L_2$ error in \cref{fig:covariance}). The bounds in \cref{thm:gaussian_limit} and \cref{thm:covariance} are of the form $C \left(\frac{1}{\sqrt{n}} + \frac{1}{\sqrt{L}} \right)$ for some constant $C$. This bound is sufficient to conclude on the convergence rate but it is not optimal in terms of the constants. We conjecture that a `better' bound of the form $\frac{C_1}{\sqrt{n}} + \frac{C_2}{\sqrt{L}}$ can be obtained where the constant $C_2$ is much smaller than $C_1$, which would explain why the depth has less impact on the bound. To give the reader an intuition of why this should be the case, let us look at the case where the width is much larger than the depth, for instance $n=500$ and $L \in \{5,50\}$ (see \cref{fig:hist_with_ks}). Since $n \gg L$, then we are essentially in the regime where the $n$ goes to infinity first. In this case, the impact of depth is limited to how far the finite-depth variance is from infinite-depth one $v(t,a)$ (see \cref{thm:gaussian_limit}). For an input satisfying $\|a\|^2 = d$, simple calculations yield that the infinite-width finite-depth $L$ variance of the neurons in the last layer is given by $ \sigma_L = (1 + \frac{1}{2L})^L$.\footnote{See e.g.  \cite{hayou21stable}.} For $L=5$, $\sigma_5 \approx 1.61$ and for $L=50$, we have $\sigma_{50} \approx 1.644$. This is very close to the infinite-depth variance given by $v(1,a) = e^{1/2} \approx 1.648$. Hence, even for small depths, the finite-depth variance is close to the infinite-depth variance. Similar analysis can be carried for the covariance as well.

\section{Conclusion and Limitations}

In this paper, we have shown that, at initialization, in the most natural scaling of branches, the large-depth and large-width limits of a residual neural network (resnet) commute. We used a novel proof technique and proved a concentration of measure result for a kind of McKean-Vlasov process. Our results justify the calculations in prior works analyzing deep and wide neural networks that take the width limit first then depth.
\begin{figure}
    \centering
    \includegraphics[width=1.1\linewidth]{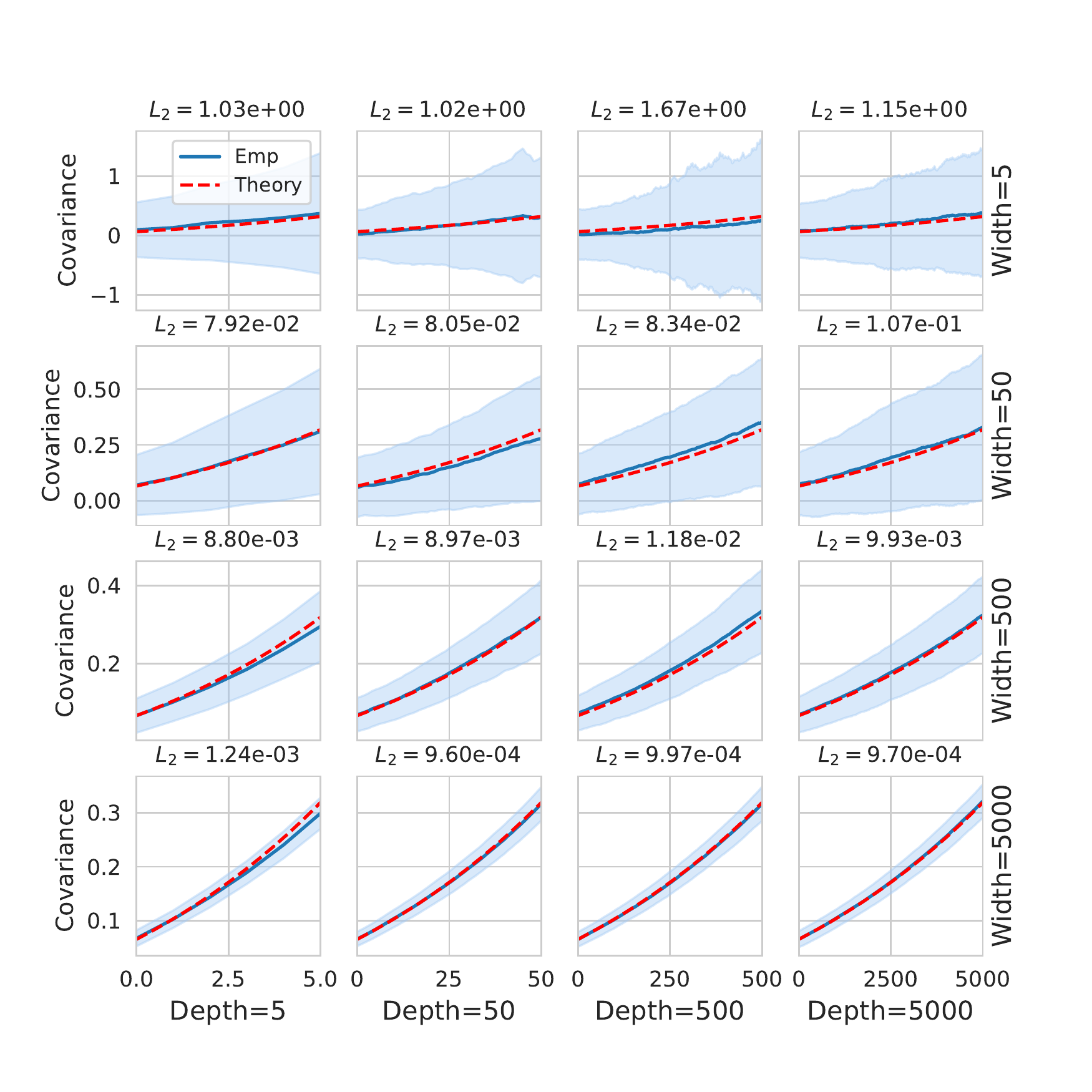}
    \caption{The blue curve represents the average covariance $\hat{q}_t(a,b)$ for ResNet \cref{eq:resnet} with $n, L \in \{5, 50, 500, 5000\}$, $d=30$, and $a$ and $b$ are sampled following the same rule as in \cref{fig:hist_with_ks}. The average is calculated based on $N=100$ simulations. The shaded blue area represents 1 standard deviation of the observations. The red dashed line represents the theoretical covariance $q_t(a,b)$ predicted in \cref{thm:covariance}. The empirical $L_2$ error is reported as well.}
    \label{fig:covariance}
\end{figure}
However, our technique cannot say anything about what happens when the network starts training.
Potentially, different behaviors can occur depending on how the learning rate is chosen as a function of width and depth.
Because of the correlations between weights induced by training, such an analysis would likely require far more mathematical machinery than presented here, e.g., Tensor Programs \citep{yang2019scaling,yang2019tensor_i,yang2021tensor_iv,yangTP2b,yangTP2,yangTP5}.

\printbibliography

\newpage
\appendix
\onecolumn




\newpage
\section{A more comprehensive literature review}\label{sec:comprehensive_lit_review}
Theoretical analysis of randomly initialized neural networks with an infinite number of parameters has yielded a wealth of interesting results, both theoretical and practical. Most of the research in this area has focused on the case where the depth of the network is fixed and the width is taken to infinity. However, in recent years, motivated by empirical observations, there has been an increased interest in studying the large depth limit of these networks. We provide here a non-exhaustive summary of existing results of these limits.

\subsection{Infinite-width limit}
The infinite-width limit of neural network architectures has been extensively studied in the literature and has led to many interesting theoretical and algorithmic innovations. We summarize these results below. 
\begin{itemize}[leftmargin=*]
    \item \emph{Initialization schemes}: the infinite-width limit of different neural architectures has been  extensively studied in the literature. In particular, for multi-layer perceptrons (MLP), a new initialization scheme that stabilizes forward and backward propagation (in the infinite-width limit) was derived in \citep{poole, samuel2017}. This initialization scheme is known as the Edge of Chaos, and empirical results show that it significantly improves performance. In \cite{yang2017meanfield, hayou21stable}, the authors derived similar results for the ResNet architecture, and showed that this architecture is \emph{placed} by-default on the Edge of Chaos for any choice of the variances of the initialization weights (Gaussian weights). In \cite{hayou2019impact}, the authors showed that an MLP that is initialized on the Edge of Chaos exhibits similar properties to ResNets, which might partially explain the benefits of the Edge of Chaos initialization.
    
    \item \emph{Gaussian process behaviour}: Multiple papers (e.g. \cite{neal, lee_gaussian_process, yang_tensor3_2020, matthews, hron20attention}) studied the weak limit of neural networks when the width goes to infinity. The results show that a randomly initialized neural network (with Gaussian weights) has a similar behaviour to that of a Gaussian process, for a wide range of neural architectures, and under mild conditions on the activation function. In \cite{lee_gaussian_process}, the authors leveraged this result and introduced the neural network Gaussian process (NNGP), which is a Gaussian process model with a neural kernel that depends on the architecture and the activation function. Bayesian regression with the NNGP showed that NNGP surprisingly achieves performance close to the one achieved by an SGD-trained finite-width neural network.
    
    The large depth limit of this Gaussian process was studied in \cite{hayou21stable}, where the authors showed that with proper scaling, the infinite-depth (weak) limit is a Gaussian process with a universal kernel\footnote{A kernel is called universal when any continuous function on some compact set can be approximated arbitrarily well with kernel features.}.
    \item \emph{Neural Tangent Kernel (NTK)}: the infinite-width limit of the NTK is the so-called NTK regime or Lazy-training regime. This topic has been extensively studied in the literature. The optimization and generalization properties (and some other aspects) of the NTK have been studied in \cite{Liu2022connecting, arora2019finegrained, seleznova2022ntk, hayou2019trainingdynamicsNTK}. The large depth asymptotics of the NTK have been studied in \citep{hayou_ntk, hayou2022curse, jacot2019freeze, xiao2020disentangling}. We refer the reader to \cite{jacot2022thesis} for a comprehensive discussion on the NTK.

    \item \emph{Tensor programs}: It is worth mentioning that a series of works called \emph{Tensor Programs} studied the dynamics of infinite-width limit of finite-depth general neural networks both at initialization and at finite training step $t$ with gradient descent \citep{yang2019tensor_i, yang_tensor3_2020, yang2019scaling, yang2021tensor_iv}.
    
    \item \emph{Others}: the theory of infinite-width neural networks have also been utilized for network pruning  \citep{hayou_pruning}, regularization  \citep{vladimirova19understanding, hayou2021stochasticdepth}, feature learning \citep{hayou_eh}, and ensembling methods \citep{he2020ntkensembles}.
\end{itemize}

\subsection{Infinite-depth limit}

 \paragraph{Infinite-width-then-infinite-depth limit.} In this case, the width of the neural network is taken to infinity first, followed by the depth. This is known as the infinite-depth limit of infinite-width neural networks. This limit has been widely used to study various aspects of neural networks, such as analyzing neural correlations and deriving the Edge of Chaos initialization scheme \citep{samuel2017, poole}, investigating the impact of the activation function \citep{hayou2019impact}, and analyzing the behavior of the Neural Tangent Kernel (NTK) \citep{hayou_ntk, xiao2020disentangling}. 
    
\paragraph{The joint infinite-width-and-depth limit.} In this case, the depth-to-width ratio is fixed\footnote{Other works consider the case when the depth-to-width ratio converge to a constant instead of being fixed.}, the width and depth are jointly taken to infinity. There are a limited number of studies that have examined the joint width-depth limit. For example, in \citep{li21loggaussian}, the authors demonstrated that for a specific form of residual neural networks (ResNets), the network output exhibits a (scaled) log-normal behavior in this joint limit, which is distinct from the sequential limit where the width is taken to infinity first followed by the depth, in which case the distribution of the network output is asymptotically normal (\citep{samuel2017, hayou2019impact}). Furthermore, in \citep{li2022sde}, the authors studied the covariance kernel of a multi-layer perceptron (MLP) in the joint limit and found that it weakly converges to the solution of a Stochastic Differential Equation (SDE). In \cite{Hanin2020Finite}, it was shown that in the joint limit case, the Neural Tangent Kernel (NTK) of an MLP remains random when the width and depth jointly go to infinity, which is different from the deterministic limit of the NTK when the width is taken to infinity before depth \citep{hayou_ntk}. In \citep{hanin2022correlation, hanin2019finitewidth}, the authors explored the impact of the depth-to-width ratio on the correlation kernel and the gradient norms in the case of an MLP architecture and found that this ratio can be interpreted as an effective network depth. Similar results have been discussed in \citep{zavatone2021exact, noci2021precise}.    
    
\paragraph{Infinite-depth limit of finite-width neural networks.}  In both previous limits, the width of the neural network is taken to infinity, either in isolation or jointly with the depth. However, it is natural to question the behavior of networks where the width is fixed and the depth is taken to infinity. For example, in \cite{hanin2019finitewidth}, it was shown that neural networks with bounded width are still universal approximators, motivating the examination of finite-width large depth neural networks. The limiting distribution of the network output at initialization in this scenario has been investigated in the literature. In \cite{peluchetti2020resnetdiffusion}, it was demonstrated that for a specific ResNet architecture, the pre-activations converge weakly to a diffusion process in the infinite-depth limit. This a simple corollary of existing results in stochastic calculus on the convergence of Euler-Maruyama disctretization schemes to continuous Stochastic Differential Equations. Other recent work by \cite{hayou2022on} examined the impact of the activation function on the distribution of the pre-activation, and characterized the distribution of the post-activation norms in this limit.

\section{Review of Stochastic Calculus}\label{appendix:stochastic_calculus}

In this section, we present the mathematical framework for the study of stochastic differential equations (SDEs). We consider a filtered probability space $(\Omega, \mathcal{F}, \mathbb{P}, (\mathcal{F}t){t \geq 0})$, where $\Omega$ is the sample space, $\mathcal{F}$ is the sigma-algebra of events, $\mathbb{P}$ is the probability measure, and $(\mathcal{F}t){t \geq 0}$ is the natural filtration of a standard $n$-dimensional Brownian motion $B$. This framework allows us to study the evolution of a stochastic process $X$ over time, by considering the events that are measurable up to a given time $t$. Specifically, we focus on the class of \ito processes, which are defined through a specific type of stochastic differential equation.

\subsection{Existence and uniqueness}
\begin{definition}[It$\hat{o}$ diffusion process]\label{def:ito_process}
A stochastic process $(X_t)_{t \in [0,T]}$ valued in $\reals^n$ is called an It$\hat{o}$ diffusion process if it can be expressed as 
$$
X_t = X_0 + \int_{0}^t \mu_s ds + \int_{0}^t \sigma_s dB_s,
$$
where $B$ is a $n$-dimensional Brownian motion and $\sigma_t \in \reals^{n \times n}, \mu \in \reals^n$ are predictable processes satisfying $\int_{0}^T (\|\mu_s\|_2 + \|\sigma_s \sigma_s^\top\|_2) ds < \infty$ almost surely.
\end{definition}

The following result gives conditions under which a strong solution of a given SDE exists, and is unique.

\begin{thm}[Thm 3.1 and Lemma 3.2 in \cite{xuerong2008}]\label{thm:existence_and_uniqueness}
Let $n \geq 1$, and consider the following SDE
$$
dX_t = \mu(t, X_t) dt + \sigma(t, X_t) dB_t, \quad X_0 \in L_2,
$$
where $B$ is a $m$-dimensional Brownian process for some $m\geq 1$, and $\mu : \reals^+ \times \reals^n \to \reals^n$ and $\sigma:\reals^+ \times \reals^n \to \reals^{n \times m}$ are measurable functions satisfying
\begin{enumerate}
    \item There exists a constant $K>0$ such that for all $t \geq 0$, $x,x' \in \reals^n$
    $$
\|\mu(t, x) - \mu(t,x')\| + \|\sigma(t, x) - \sigma(t,x')\| \leq K \|x - x'\|.
$$
\item There exists a constant $K'>0$ such that for all $t \geq 0$, $x \in \reals^n$
    $$
\|\mu(t, x)\| + \|\sigma(t, x) \| \leq K' (1 +\|x\|).
$$
\end{enumerate}
Then, for all $T \geq 0$, there exists a unique strong solution of the SDE above, and it satisfies the following 
$$
\E \sup_{0 \leq t \leq T} \|X_t\|^2 \leq C (1 + \E\|X_0\|^2),
$$
where $C$ is a constant that depends only on $K$, $K'$, and $T$.
\end{thm}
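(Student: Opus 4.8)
The plan is to run the classical Picard iteration scheme adapted to the stochastic setting. Define $X^{(0)}_t \equiv X_0$ and recursively
\[
X^{(k+1)}_t = X_0 + \int_0^t \mu(s, X^{(k)}_s)\, ds + \int_0^t \sigma(s, X^{(k)}_s)\, dB_s .
\]
First I would verify by induction that each $X^{(k)}$ is a well-defined, continuous, adapted, square-integrable process on $[0,T]$: the linear growth bound (condition 2) together with the induction hypothesis $\E\sup_{s\le T}\|X^{(k)}_s\|^2 < \infty$ makes $\int_0^T \E\|\mu(s,X^{(k)}_s)\|^2\,ds$ and $\int_0^T \E\|\sigma(s,X^{(k)}_s)\|^2\,ds$ finite, so the \ito integral is a genuine $L_2$ martingale and $X^{(k+1)}$ inherits the same integrability.

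The core estimate comes next. Writing $\Delta^{(k)}_t = X^{(k+1)}_t - X^{(k)}_t$, I would bound $\E\sup_{s\le t}\|\Delta^{(k)}_s\|^2$ using $\|u+v\|^2 \le 2\|u\|^2 + 2\|v\|^2$, Cauchy--Schwarz on the drift integral, and Doob's $L_2$ maximal inequality together with the \ito isometry on the diffusion integral (the Burkholder--Davis--Gundy inequality would serve equally well). The Lipschitz bound (condition 1) then gives
\[
\E\sup_{s\le t}\|\Delta^{(k)}_s\|^2 \le C_K \int_0^t \E\sup_{r\le s}\|\Delta^{(k-1)}_r\|^2\, ds
\]
for a constant $C_K$ depending on $K$ and $T$. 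Iterating yields $\E\sup_{s\le T}\|\Delta^{(k)}_s\|^2 \le M\,(C_K T)^k / k!$ with $M = \E\sup_{s\le T}\|\Delta^{(0)}_s\|^2 < \infty$, a summable bound. Hence $(X^{(k)})_k$ is Cauchy in the Banach space of continuous adapted processes under the norm $(\E\sup_{s\le T}\|\cdot\|^2)^{1/2}$; extracting a subsequence along which the convergence is almost surely uniform (Borel--Cantelli), the limit $X$ is continuous and adapted, and passing to the limit in the recursion --- again via the Lipschitz bounds applied to the drift and diffusion integrals --- shows $X$ solves the SDE.

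For uniqueness, I would take two solutions $X, X'$ in the \ito class, introduce the stopping times $\tau_N = \inf\{t : \|X_t\| \vee \|X'_t\| \ge N\}$ to legitimize the \ito isometry, apply the same drift/diffusion estimate to $X_{t\wedge\tau_N} - X'_{t\wedge\tau_N}$, and conclude by Gronwall's inequality that $\E\sup_{s\le t\wedge\tau_N}\|X_s - X'_s\|^2 = 0$; letting $N\to\infty$ with Fatou's lemma gives $X = X'$ on $[0,T]$. The moment bound $\E\sup_{t\le T}\|X_t\|^2 \le C(1+\E\|X_0\|^2)$ follows from the same machinery applied directly to $X$: bound $\E\sup_{s\le t\wedge\tau_N}\|X_s\|^2$ by $3\E\|X_0\|^2$ plus the drift and diffusion contributions, invoke the linear growth bound (condition 2) to obtain $\E\sup_{s\le t\wedge\tau_N}\|X_s\|^2 \le C_1(1+\E\|X_0\|^2) + C_2\int_0^t \E\sup_{r\le s\wedge\tau_N}\|X_r\|^2\, ds$, apply Gronwall, and send $N\to\infty$ by monotone convergence. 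The main obstacle I expect is the integrability bookkeeping: a priori one cannot apply the \ito isometry to an arbitrary solution since it is not yet known to be square-integrable, so the localization by $\tau_N$ --- and the care needed to remove it cleanly once the a priori bound is in hand --- is the delicate part. Everything else is a routine, if lengthy, combination of Doob's inequality, the \ito isometry, and Gronwall's lemma.
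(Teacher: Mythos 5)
Your proof is correct and is the classical Picard iteration argument for existence, uniqueness, and the a priori second-moment bound under global Lipschitz and linear-growth conditions; this is essentially how the cited source (Mao 2008, Chapter~2, Theorem~3.1 and Lemma~3.2) establishes the result. The paper states this theorem as an imported citation and supplies no proof of its own, so there is nothing in-paper to compare against; your outline (iterates controlled via Doob's maximal inequality and the It\^{o} isometry, the $M(C_K T)^k/k!$ Cauchy estimate, Gronwall with $\tau_N$-localization to legitimize the isometry before square-integrability is known, then monotone/Fatou to remove the localization) matches the standard route.
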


\subsection{\ito's lemma}
The following result, known as \ito's lemma, is a classic result in stochastic calculus. We state a version of this result from \cite{touzi2018}. Other versions and extensions exist in the literature (e.g. \citet{Ingersoll1987, Oksendal2003, kloeden}).
\begin{lemma}[\ito's lemma, Thm 6.7 in \cite{touzi2018}]\label{lemma:ito}
Let $X_t$ be an It$\hat{o}$ diffusion process (\cref{def:ito_process}) of the form
$$
dX_t = \mu_t dt + \sigma_t dB_t, t\in [0,T], X_0 \sim \nu
$$
where $\nu$ is some given distribution. Let $g: \reals^+ \times \reals^n \to \reals$ be $\mathcal{C}^{1,2}([0,T], \reals^n)$ (i.e. $\mathcal{C}^1$ in the first variable $t$ and $\mathcal{C}^2$ in the second variable $x$). Then, with probability $1$, we have that 
$$
f(t, X_t) = f(0,X_0) + \int_{0}^t \nabla_x f(s,X_s) \cdot dX_s + \int_{0}^t \left( \partial_t f(s,X_s) + \frac{1}{2} \textup{Tr}\left[ \sigma_s^\top \nabla_x^2 f(s,X_s) \sigma_s\right] \right) ds,
$$
where $\nabla_x f$ and $\nabla_x^2 f$ refer to the gradient and the Hessian, respectively. This can also be expressed as an SDE
$$
d f(t, X_t) =  \nabla_x f(t,X_t) \cdot dX_t + \left( \partial_t f(t,X_t) + \frac{1}{2} \textup{Tr}\left[ \sigma_t^\top \nabla_x^2 f(t,X_t) \sigma_t\right] \right) dt.
$$
\end{lemma}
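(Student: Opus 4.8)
Since this is the classical multidimensional \ito formula, the cleanest route is simply to invoke \cite{touzi2018}; but if one wants to reproduce the argument, the plan is the standard one: a second-order Taylor expansion along a shrinking partition, combined with the quadratic-variation heuristic $dX_s\,dX_s^\top = \sigma_s\sigma_s^\top\,ds$. First I would reduce to bounded data by localization. Set $\tau_R = \inf\{t : \|X_t\| \geq R \text{ or } \int_0^t (\|\mu_s\| + \|\sigma_s\sigma_s^\top\|)\,ds \geq R\}$, prove the identity for the stopped process $(X_{t \wedge \tau_R})$, and then let $R \to \infty$, using that $\tau_R \uparrow \infty$ almost surely because a.s.-continuous paths are bounded on $[0,T]$ and the integrability condition of \cref{def:ito_process} holds. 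On the stopped process $\|\mu_s\|$ and $\|\sigma_s\|$ are bounded, $X$ lives in a fixed ball, and hence $\partial_t f$, $\nabla_x f$, $\nabla_x^2 f$ evaluated along the path are bounded with $\nabla_x^2 f$ uniformly continuous on the relevant compact set.

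Then, fixing $t$ and a sequence of partitions $0 = t_0 < \dots < t_N = t$ of vanishing mesh (write $\Delta_k = t_{k+1} - t_k$, $\xi_k = X_{t_{k+1}} - X_{t_k}$), I would telescope $f(t,X_t) - f(0,X_0) = \sum_k (f(t_{k+1},X_{t_{k+1}}) - f(t_k,X_{t_k}))$ and Taylor-expand each summand to second order in space and first order in time,
$$
f(t_{k+1},X_{t_{k+1}}) - f(t_k,X_{t_k}) = \partial_t f(t_k,X_{t_k})\,\Delta_k + \nabla_x f(t_k,X_{t_k})\cdot \xi_k + \tfrac12\,\xi_k^\top \nabla_x^2 f(t_k,X_{t_k})\,\xi_k + R_k,
$$
with remainder $|R_k| \leq \omega(\Delta_k + \|\xi_k\|)\,(\Delta_k + \|\xi_k\|^2)$ for a common modulus of continuity $\omega$ of the first and second derivatives on the compact set. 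The three leading sums are handled separately: $\sum_k \partial_t f(t_k,X_{t_k})\Delta_k \to \int_0^t \partial_t f(s,X_s)\,ds$ as a pathwise Riemann sum; $\sum_k \nabla_x f(t_k,X_{t_k})\cdot \xi_k \to \int_0^t \nabla_x f(s,X_s)\cdot dX_s$ in $L^2$, because the step integrands $\sum_k \nabla_x f(t_k,X_{t_k})\ind_{[t_k,t_{k+1})}$ approximate the adapted, a.s.-continuous integrand in $L^2(ds\,d\mathbb{P})$ against both $\mu_s\,ds$ and $\sigma_s\,dB_s$; and $\E\sum_k |R_k| \to 0$ using $\E\|\xi_k\|^2 = O(\Delta_k)$, $\E\|\xi_k\|^4 = O(\Delta_k^2)$ (moment bounds for \ito increments, e.g.\ via Burkholder--Davis--Gundy) together with $\omega \to 0$ and dominated convergence.

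The crux is the second-order sum $\tfrac12 \sum_k \xi_k^\top A_k\,\xi_k$ with $A_k = \nabla_x^2 f(t_k,X_{t_k})$. Writing $\xi_k = \mu_{t_k}\Delta_k + \sigma_{t_k}\Delta B_k + e_k$, where $e_k$ is the $L^2$-controlled error from freezing the coefficients over $[t_k,t_{k+1}]$, the $\Delta_k^2$, $\Delta_k\Delta B_k$ and $e_k$ contributions are negligible, so the leading piece is $\tfrac12 \sum_k \Delta B_k^\top \sigma_{t_k}^\top A_k \sigma_{t_k}\Delta B_k$. I would split each term into its $\mathcal{F}_{t_k}$-conditional mean $\tfrac12\Tr[\sigma_{t_k}^\top A_k\sigma_{t_k}]\Delta_k$ plus a mean-zero part: the conditional means form a Riemann sum converging pathwise to $\tfrac12\int_0^t \Tr[\sigma_s^\top \nabla_x^2 f(s,X_s)\sigma_s]\,ds$, while the mean-zero parts form a discrete martingale whose $L^2$ norm is $\sum_k \E[(\cdot)^2] = O(\sum_k \Delta_k^2) \to 0$ (using $\E[\Delta B_k \Delta B_k^\top] = \Delta_k I$ and $\E\|\Delta B_k\|^4 = O(\Delta_k^2)$). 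Collecting the four limits gives the integral identity, and the stated SDE is just its differential restatement. The main obstacle is exactly this quadratic-variation step: one must (i) justify replacing the true increment $\xi_k$ by the frozen-coefficient Gaussian increment $\sigma_{t_k}\Delta B_k$ with an error that is summably small in $L^2$, and (ii) run the martingale/orthogonality estimate for $\sum_k \Delta B_k^\top M_k \Delta B_k$ with constants uniform in the partition, threading this through the localization without the constants blowing up — the only genuinely delicate bookkeeping in the argument.
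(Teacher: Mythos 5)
The paper does not prove this lemma; it is stated as a direct citation of Theorem~6.7 in \cite{touzi2018}, so there is no in-paper argument for your sketch to diverge from. What you have written is a faithful outline of the standard textbook proof of the multidimensional \ito{} formula (localization by a stopping time $\tau_R$, telescoping plus second-order Taylor expansion along a vanishing partition, Riemann-sum convergence for the drift and time-derivative terms, $L^2$ convergence of the simple-process stochastic integrals, and the conditional-mean/martingale decomposition of the quadratic sum using $\E[\Delta B_k\Delta B_k^\top\mid\mathcal{F}_{t_k}]=\Delta_k I$ and BDG moment bounds), and the structure of the argument is sound. One small caution on rigor: the moment bound $\E\|\xi_k\|^4=O(\Delta_k^2)$ you invoke for the remainder term is not automatic from \cref{def:ito_process} alone (which only guarantees $\int_0^T(\|\mu_s\|+\|\sigma_s\sigma_s^\top\|)\,ds<\infty$ a.s.), but it does hold on the stopped process because there $\mu$ and $\sigma$ are uniformly bounded, so you should be explicit that this estimate is applied only after localization; with that caveat made, the proof sketch is correct.
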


\subsection{Convergence of Euler's scheme to the SDE solution}
The following result gives a convergence rate of the Euler discretization scheme to the solution of the SDE.
\begin{thm}[Corollary of Thm 7.3 in \cite{xuerong2008}]\label{thm:convergence_euler_xueorong}
Let $d \geq 1$ and consider the $\reals^d$-valued ito process $X$ (\cref{def:ito_process}) given by 
$$
X_t = X_0 + \int_{0}^t \mu(s, X_s) ds + \int_{0}^t \sigma(s, X_s) dB_s,
$$
where $B$ is a $m$-dimensional Brownian motion for some $m \geq 1$, $X_0$ satisfies $\E \|X_0\|^2 < \infty$, and $\mu: \reals^+ \times \reals^d \to \reals^d$ and $\sigma: \reals^+ \times \reals^d \to \reals^{d \times m}$ are measurable functions satisfying the following conditions:
\begin{enumerate}
    \item There exists a constant $K>0$ such that for all $t \in \reals, x,x' \in \reals^d$,
    $$
    \|\mu(t,x) - \mu(t,x')\|^2 + \|\sigma(t,x) - \sigma(t,x')\|^2 \leq \bar{K} \|x - x'\|^2.
    $$
    \item There exists a constant $K'>0$ such that for all $t \in \reals, x \in \reals^d$
    $$
    \|\mu(t,x)\|^2 + \|\sigma(t,x)\|^2 \leq K( 1 + \|x\|^2).
    $$
\end{enumerate}
Let $\delta \in (0,1)$ such that $\delta^{-1} \in \mathbb{N}$ (integer), and consider the times $t_k = k \delta$ for $k \in \{1, \dots, \delta^{-1}\}$. Consider the Euler discretization scheme given by 
$$
\bar{X}^i_{k+1} = \bar{X}^i_{k} + \mu^i(t_k, \bar{X}^k_{n}) \delta + \sum_{j=1}^m \sigma^{i,j}(t_k, \bar{X}^k_{n}) \Delta B^j_k, \quad \bar{X}^i_0 = X^i_0,
$$
where $\bar{X}^i, \mu^i, \sigma^{i,j}$ denote the coordinates of these vectors for $i \in [d], j\in[m]$, and $\Delta B^j_k = B^j_{k+1} - B^j_{k} \sim \normal(0, \delta)$. Then, we have that 
$$
\E \sup_{t \in [0,1]} \|X_{t} - \bar{X}_{\lfloor t \delta^{-1}\rfloor}\|^2 \leq C \, \delta,
$$
where $C = 80 K \bar{K} ( 1 + (1 + 3 \E \|X_0\|^2) \exp(6 K)) \exp(20 \bar{K})$.
\end{thm}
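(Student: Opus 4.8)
The plan is to obtain the statement as the special case $T=1$ of the classical mean-square convergence theorem for the Euler--Maruyama scheme, i.e. Theorem 7.3 in \cite{xuerong2008}: conditions (i) and (ii) above are exactly the global Lipschitz and linear-growth hypotheses required there, and $\E\|X_0\|^2<\infty$ together with \cref{thm:existence_and_uniqueness} guarantees that the limiting SDE has a unique strong solution $X$ with $\E\sup_{t\le1}\|X_t\|^2<\infty$. So, strictly speaking, the only work is to match the hypotheses and read off the form of the constant $C$; it is still worth recalling the structure of the argument so that the appearance of $C$ is transparent.

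First I would replace the discrete recursion by its time-continuous interpolation. Writing $\eta(s):=\delta\lfloor s\delta^{-1}\rfloor$ for the left endpoint of the mesh interval containing $s$, set
$$
\bar X_t = X_0 + \int_0^t \mu(\eta(s),\bar X_{\eta(s)})\,ds + \int_0^t \sigma(\eta(s),\bar X_{\eta(s)})\,dB_s ,
$$
which coincides at the grid points $t_k=k\delta$ with the $\bar X_{\lfloor t\delta^{-1}\rfloor}$ of the statement. Using the growth bound (ii) together with Doob's $L_2$ maximal inequality and the It\^o isometry for the stochastic integral, a Gr\"onwall argument yields a moment bound \emph{uniform in} $\delta$, of the form $\E\sup_{t\le1}\|\bar X_t\|^2\le (1+3\E\|X_0\|^2)\exp(6K)$; this is the origin of the inner factor in $C$. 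The same estimates give, as a by-product, the one-step oscillation bound $\E\,\|\bar X_t-\bar X_{\eta(t)}\|^2\le C'\,\delta\,(1+\E\sup_{s\le1}\|\bar X_s\|^2)$ for every $t\le1$, which quantifies the local error of the scheme.

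Next I would study the error process $e_t:=X_t-\bar X_t$, which satisfies
$$
e_t=\int_0^t\big(\mu(s,X_s)-\mu(\eta(s),\bar X_{\eta(s)})\big)\,ds+\int_0^t\big(\sigma(s,X_s)-\sigma(\eta(s),\bar X_{\eta(s)})\big)\,dB_s .
$$
Regrouping each integrand difference through the intermediate point $\bar X_s$, so that the Lipschitz bound (i) applies to $X_s-\bar X_s$ and to $\bar X_s-\bar X_{\eta(s)}$ (the latter controlled by the oscillation bound), then taking $\sup_{s\le t}$ and applying Doob's inequality and the It\^o isometry once more, leads to an estimate of the form $\E\sup_{s\le t}\|e_s\|^2\le A\,\delta+B\int_0^t\E\sup_{u\le s}\|e_u\|^2\,ds$ with $A,B$ explicit in $K$, $\bar K$ and the moment bound above. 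Gr\"onwall's inequality then closes the estimate and produces $\E\sup_{t\le1}\|X_t-\bar X_{\lfloor t\delta^{-1}\rfloor}\|^2\le C\,\delta$ with $C=80K\bar K(1+(1+3\E\|X_0\|^2)\exp(6K))\exp(20\bar K)$. I expect the only real difficulty to be bookkeeping: tracking the constants carefully through the maximal inequalities and the Gr\"onwall step so that the final bound is genuinely $\delta$-independent and of the claimed explicit form, and checking that evaluating the coefficients on the time grid ($\mu(t_k,\cdot)$ rather than $\mu(s,\cdot)$) introduces no term beyond what the Lipschitz and growth conditions already control.
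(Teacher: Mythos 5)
Your proposal matches the paper's proof, which is simply to invoke Theorem 7.3 of \cite{xuerong2008} with $T=1$ and $t_0=0$ and read off the constant; the additional sketch you give (continuous interpolation, uniform moment bound, one-step oscillation estimate, Gr\"onwall on the error process) is the standard argument underlying that cited theorem, so you are not taking a different route.
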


\begin{proof}
The proof is straightforward by taking $T = 1 $ and $t_0 = 0$ in Thm 7.3 in \cite{xuerong2008}.
\end{proof}

Using this result, we prove the following width-uniform convergence result for infinite-depth, which is crucial to our results.

\begin{thm}[Width-uniform convergence]\label{thm:width_uniform_euler_single}
Assume that the activation function $\phi$ is Lipschitz on $\reals$ with Lipschitz constant $\zeta > 0$ and that $\phi(0) = 0$, and let $a \in \reals^d$ be a non-zero vector. Consider the process $X_t$ the solution of the following SDE 
\begin{equation}
    dX_t = \frac{1}{\sqrt{n}}\|\phi(X_t)\| dB_t, \quad X_0 = W_{in} a,
\end{equation}
where $(B_t)_{t\geq 0}$ is a Brownian motion (Wiener process), and let $\bar{X}$ be its Euler scheme as in \cref{thm:convergence_euler_xueorong}. Then, we have the following width-uniform bound on the discretization error:
$$
\sup_{n \geq 1} n^{-1} \, \E \sup_{t \in [0,1]} \|X_{t} - \bar{X}_{\lfloor t \delta^{-1}\rfloor}\|^2 \leq C' \, \delta,
$$
where $C' = 80 \zeta^4 (1 + (1 + 3 d^{-1} \|a\|^2) \exp(6 \zeta^2)) \exp(20 \zeta^2)$.
\end{thm}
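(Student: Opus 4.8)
The plan is to reduce the claim to the generic Euler--scheme bound of \cref{thm:convergence_euler_xueorong} and then exploit the special structure of the diffusion coefficient to make the resulting constant free of the width $n$. The SDE $dX_t = \frac{1}{\sqrt n}\|\phi(X_t)\|\,dB_t$, $X_0 = W_{in}a$, fits the framework of \cref{thm:convergence_euler_xueorong} with the process $\reals^n$-valued, the driving Brownian motion $n$-dimensional, drift $\mu \equiv 0$, and diffusion matrix $\sigma(t,x) = \frac{1}{\sqrt n}\|\phi(x)\|\,I_n$. Well-posedness is not an issue, since the Lipschitz and linear-growth bounds established below also make \cref{thm:existence_and_uniqueness} applicable. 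Thus the only real work is (i) checking the hypotheses of \cref{thm:convergence_euler_xueorong} with constants that do not grow with $n$, (ii) computing $\E\|X_0\|^2$, and (iii) a short simplification of the explicit constant.

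For (i), I would use the Hilbert--Schmidt norm on matrices, so that $\|\sigma(t,x)\| = \frac{1}{\sqrt n}\|\phi(x)\|\cdot\sqrt n = \|\phi(x)\|$. Since $\phi$ is $\zeta$-Lipschitz with $\phi(0)=0$, applying it coordinatewise gives $\|\phi(x)\| \le \zeta\|x\|$ and $\|\phi(x)-\phi(x')\| \le \zeta\|x-x'\|$; combined with the reverse triangle inequality $\bigl|\,\|\phi(x)\|-\|\phi(x')\|\,\bigr| \le \|\phi(x)-\phi(x')\|$, this gives the linear-growth constant $K = \zeta^2$ (from $\|\mu\|^2+\|\sigma\|^2 = \|\phi(x)\|^2 \le \zeta^2(1+\|x\|^2)$) and the Lipschitz constant $\bar K = \zeta^2$. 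This is the one place that needs care, and the key point is exactly that the factor $\frac{1}{\sqrt n}$ in front of $\|\phi\|$ cancels the $\sqrt n$ coming from the Hilbert--Schmidt norm of $I_n$, so both constants are genuinely width-free rather than scaling with $n$. For (ii), since $W_{in}$ has i.i.d.\ $\normal(0,1/d)$ entries, a one-line variance computation gives $\E\|X_0\|^2 = \E\|W_{in}a\|^2 = n\,d^{-1}\|a\|^2$, which is the only quantity carrying a factor of $n$.

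Plugging $K = \bar K = \zeta^2$ and $\E\|X_0\|^2 = n\,d^{-1}\|a\|^2$ into the explicit constant of \cref{thm:convergence_euler_xueorong} yields
$$
\E\sup_{t\in[0,1]}\|X_t - \bar{X}_{\lfloor t\delta^{-1}\rfloor}\|^2 \;\le\; 80\,\zeta^4\,\bigl(1 + (1 + 3\,n\,d^{-1}\|a\|^2)\exp(6\zeta^2)\bigr)\exp(20\zeta^2)\,\delta .
$$
Dividing through by $n$, the bracket becomes $n^{-1}\bigl(1+\exp(6\zeta^2)\bigr) + 3\,d^{-1}\|a\|^2\exp(6\zeta^2)$, and using $n^{-1}\le 1$, valid for every $n\ge 1$, bounds it by $1 + (1 + 3\,d^{-1}\|a\|^2)\exp(6\zeta^2)$, which no longer depends on $n$; this is exactly the constant $C'$ in the statement, so taking the supremum over $n\ge 1$ finishes the proof. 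I do not expect a substantial obstacle here: the argument is essentially a black-box application of \cref{thm:convergence_euler_xueorong}, and its only real content is the observation that the diffusion coefficient is a scalar multiple of the identity whose scalar is Lipschitz uniformly in $n$, while the $n$-growth of $\E\|X_0\|^2$ is precisely compensated by the $n^{-1}$ normalisation.
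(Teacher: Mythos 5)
Your proposal is correct and follows essentially the same route as the paper: apply the explicit Euler--Maruyama bound of \cref{thm:convergence_euler_xueorong} with drift zero and diffusion $\sigma(x)=\tfrac{1}{\sqrt n}\|\phi(x)\|I_n$, observe that the Frobenius norm $\|I_n\|_F=\sqrt n$ cancels the $1/\sqrt n$ prefactor so that $K=\bar K=\zeta^2$ are width-free, compute $\E\|X_0\|^2 = n\,d^{-1}\|a\|^2$, and normalize by $n$. The only (minor) difference is that you spell out the final algebraic simplification using $n^{-1}\le 1$, which the paper's proof leaves implicit; the argument is otherwise identical.
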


\begin{proof}
The key observation in this proof is that the constant $C$ in \cref{thm:convergence_euler_xueorong} scales linearly with width. Indeed, in this case, the volatility term is given by $\sigma(x) = \frac{1}{\sqrt{n}}\| \phi(x)\| I_n$, which satisfies the linear growth condition
$$
\|\sigma(x)\| =  \frac{1}{\sqrt{n}} \|\phi(x)\| \|I_n\| = \|\phi(x)\| \leq \zeta \|x\|,
$$
where we have used the fact that $\|I_n\| = \sqrt{\textup{Tr}(I_n I_n^\top)} = \sqrt{n}$\footnote{In (almost) all the results on the existence, uniqueness, and Euler schemes in stochastic calculus, the default matrix norm is the Frobenius norm.}.
Moreover, for any $x,x' \in \reals^n$, we have that
$$
\|\sigma(x) - \sigma(x')\| \leq \left| \frac{1}{\sqrt{n}} \|\phi(x)\| - \frac{1}{\sqrt{n}} \|\phi(x)\|\right| \|I_n\| \leq \zeta \|x - x'\|,
$$
Hence, in this case we can set $\bar{K} = K = \zeta^2$. We conclude by observing that $\E\|X_0\|^2 = n d^{-1} \|a\|^2$ and using \cref{thm:convergence_euler_xueorong}.
\end{proof}

The result of \cref{thm:width_uniform_euler_single} can be generalized to the case of multiple inputs as we show in the next result. We omit the proof here as this result is not necessary for the proofs of the main results.

\begin{thm}
Let $a_1, a_2, \dots, a_k \in \reals^d$ be non-zero inputs, and assume that the activation function $\phi$ is Lipschitz on $\reals$ and that $\phi(0) = 0$. Consider the process $X_t^k$, the solution of the following SDE 
\begin{equation}
    d\bm{X}^k_t = \frac{1}{\sqrt{n}}\Sigma(\bm{X}^k_t)^{1/2} d\bm{B}_t, \quad \bm{X}^k_0 = ((W_{in} a_1)^\top, \dots, (W_{in} a_k)^\top)^\top,
\end{equation}
where $(\bm{B}_t)_{t\geq 0}$ is an $kn$-dimensional Brownian motion (Wiener process), independent from $W_{in}$, and $\Sigma(\bm{X}^k_t) $ is the covariance matrix given by 
\[
  \Sigma(\bm{X}^k_t) = \left[\begin{array}{ c | c | c | c}
    \alpha_{1,1} I_n & \alpha_{1,2} I_n & \dots & \alpha_{1,k} I_n\\
    \hline
    \alpha_{2,1} I_n & \alpha_{2,2} I_n & \dots & \alpha_{2,k} I_n\\
    \hline
    \vdots & \vdots & \vdots & \vdots\\
    \alpha_{k,1} I_n & \dots & \dots & \alpha_{k,k} I_n\\
  \end{array}\right],
\]
where $\alpha_{i,j} = \langle \phi(\bm{X}_{t}^{k, i}), \phi(\bm{X}_{t}^{k, j}) \rangle$, with $(X_{t}^{k, 1}{}^\top, \dots, X_{t}^{k, k}{}^\top )^\top \overset{def}{=} \bm{X}_t^k$.

Let $\bar{\bm{X}}{}^{k}$ be its Euler scheme as in \cref{thm:convergence_euler_xueorong}. Then, we have the following width-uniform bound on the discretization error:
$$
\sup_{n \geq 1} (k n)^{-1} \, \E \sup_{t \in [0,1]} \|\bm{X}^k_{t} - \bar{\bm{X}}^k_{\lfloor t \delta^{-1}\rfloor}\|^2 \leq C' \, \delta,
$$
where $C' = 80 \zeta^4 (1 + (1 + 3 d^{-1} \|a\|^2) \exp(6 \zeta^2)) \exp(20 \zeta^2)$.
\end{thm}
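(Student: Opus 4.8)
The plan is to re-run the argument behind \cref{thm:width_uniform_euler_single} in the product space $\reals^{kn}$: the process $\bm X^k$ has zero drift and diffusion matrix $\sigma(\bm x) = n^{-1/2}\Sigma(\bm x)^{1/2}$, so I would apply \cref{thm:convergence_euler_xueorong} with ambient dimension $kn$, extract a discretization constant that depends on $n$ at most linearly, and then divide by $kn$. Concretely, I must (i) verify that $\sigma$ meets the global-Lipschitz and linear-growth hypotheses of \cref{thm:convergence_euler_xueorong} with constants independent of $n$ and $k$ (these also yield existence and uniqueness of the SDE via \cref{thm:existence_and_uniqueness}), and (ii) keep track of the width-dependence of the resulting constant.

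The structural observation that makes this work is that $\Sigma(\bm x) = A(\bm x)\otimes I_n$, where $A(\bm x)\in\reals^{k\times k}$ is the Gram matrix of the post-activation blocks, $A(\bm x)_{ij} = \langle\phi(x_i),\phi(x_j)\rangle$ with $\bm x = (x_1^\top,\dots,x_k^\top)^\top$. As a Gram matrix, $A(\bm x)$ is positive semidefinite, hence $\Sigma(\bm x)^{1/2} = A(\bm x)^{1/2}\otimes I_n$, and Kronecker identities give $\|\Sigma(\bm x)^{1/2}-\Sigma(\bm x')^{1/2}\|_F^2 = n\,\|A(\bm x)^{1/2}-A(\bm x')^{1/2}\|_F^2$ and $\Tr\Sigma(\bm x) = n\,\Tr A(\bm x)$. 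Thus the $n^{-1}$ prefactor in $\sigma$ cancels precisely the width introduced by these products, which is what produces the width-uniformity.

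Given this, the linear-growth bound is immediate: $\|\sigma(\bm x)\|_F^2 = n^{-1}\Tr\Sigma(\bm x) = \Tr A(\bm x) = \sum_i\|\phi(x_i)\|^2 \le \zeta^2\sum_i\|x_i\|^2 = \zeta^2\|\bm x\|^2$, using that $\phi$ is $\zeta$-Lipschitz with $\phi(0)=0$; so one may take $K=\zeta^2$. The Lipschitz bound is the delicate part, since $P\mapsto P^{1/2}$ is not Lipschitz near singular $P$ and $A(\bm x)$ genuinely can be singular (e.g.\ when two inputs coincide). The remedy is to write $A(\bm x)^{1/2} = |\Phi(\bm x)^\top|$, where $\Phi(\bm x)\in\reals^{k\times n}$ has $i$-th row $\phi(x_i)^\top$ and $|M| := (M^\top M)^{1/2}$ is the matrix absolute value, and to invoke the classical Araki--Yamagami inequality $\||M|-|N|\|_F \le \sqrt2\,\|M-N\|_F$ in Hilbert--Schmidt norm; together with $\|\Phi(\bm x)-\Phi(\bm x')\|_F^2 = \sum_i\|\phi(x_i)-\phi(x_i')\|^2 \le \zeta^2\|\bm x-\bm x'\|^2$ this gives $\|\sigma(\bm x)-\sigma(\bm x')\|_F^2 \le 2\zeta^2\|\bm x-\bm x'\|^2$, i.e.\ $\bar K = 2\zeta^2$ (the extra factor $2$ relative to the constant written in the statement is inessential and does not affect the conclusion).

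Finally I would plug into \cref{thm:convergence_euler_xueorong}: since $\E\|\bm X^k_0\|^2 = nd^{-1}\sum_i\|a_i\|^2$, its constant is $C = 80 K\bar K\,(1+(1+3nd^{-1}\sum_i\|a_i\|^2)e^{6K})e^{20\bar K}$, which is affine in $n$; dividing the resulting bound $\E\sup_{t\in[0,1]}\|\bm X^k_t-\bar{\bm X}^k_{\lfloor t\delta^{-1}\rfloor}\|^2 \le C\delta$ by $kn$ and using $n\ge1$, $k\ge1$ and $\tfrac1{kn}(1+(1+3nd^{-1}\sum_i\|a_i\|^2)e^{6K}) \le 1+(1+3d^{-1}k^{-1}\sum_i\|a_i\|^2)e^{6K}$ yields the width-uniform rate of the claimed form (with $\|a\|^2$ read as $k^{-1}\sum_i\|a_i\|^2$). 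I expect the Lipschitz estimate for the matrix square root --- and specifically the passage through the matrix absolute value to circumvent the failure of Lipschitzness at singular matrices --- to be the only real obstacle; the remainder is a routine transcription of the scalar-input argument.
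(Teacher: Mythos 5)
The paper states this theorem without proof (``We omit the proof here as this result is not necessary for the proofs of the main results''), so there is no official argument to compare against. Your proposal is a sound way to fill that gap, and it follows the same template as the paper's single-input \cref{thm:width_uniform_euler_single}: verify the Lipschitz and linear-growth hypotheses of \cref{thm:convergence_euler_xueorong} for the diffusion coefficient, extract the constant, and divide by $kn$. The algebra you carry out is correct: $\Sigma(\bm x)=A(\bm x)\otimes I_n$ with $A$ the Gram matrix of post-activations, $\Sigma^{1/2}=A^{1/2}\otimes I_n$, $\|M\otimes I_n\|_F^2=n\|M\|_F^2$, and hence the $n^{-1/2}$ prefactor exactly cancels the width factor. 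The linear-growth bound with $K=\zeta^2$ is immediate. You are also right to flag that the only genuinely non-routine point is the Lipschitz estimate for $\bm x\mapsto A(\bm x)^{1/2}$, which the naive approach misses because the matrix square root is only H\"older-$\tfrac12$ near singular matrices; the factorization $A(\bm x)^{1/2}=|\Phi(\bm x)^\top|$ followed by the Araki--Yamagami (Powers--St{\o}rmer--Kittaneh) Hilbert--Schmidt bound $\||M|-|N|\|_F\le\sqrt2\,\|M-N\|_F$ resolves this cleanly and gives $\bar K=2\zeta^2$. You correctly note the resulting constant differs from the one written in the theorem (which is copied verbatim from the $k=1$ case, including a stray $\|a\|$ that should be read as something like $k^{-1}\sum_i\|a_i\|^2$); that discrepancy is a defect of the statement, not of your argument.

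One remark worth adding: the $\sqrt2$ factor can be avoided entirely by choosing the non-symmetric square root $\sigma(\bm x)=n^{-1/2}\,\Phi(\bm x)\otimes I_n\in\reals^{kn\times n^2}$ driven by an $n^2$-dimensional Brownian motion, so that $\sigma\sigma^\top=n^{-1}\Sigma$, and then the Lipschitz estimate is the trivial one $\|\sigma(\bm x)-\sigma(\bm x')\|_F\le\zeta\|\bm x-\bm x'\|$. This is closer in spirit to what the paper actually does in \cref{prop:convergence_law_width_uniform_multiple}, where the increments are realized as $\sqrt{n}\,W_l\phi(\cdot)$ rather than via the symmetric root. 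It yields a process with the same law but a different pathwise coupling, so if one insists on proving the theorem exactly as phrased (with $\Sigma^{1/2}$ and a $kn$-dimensional $\bm B$), your Araki--Yamagami route is the one that applies directly.
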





\subsection{Convergence of Particles to the solution of Mckean-Vlasov process}
The next result gives sufficient conditions for the system of particles to converge to its mean-field limit, known as the Mckean-Vlasov process.

\begin{thm}[Uniform Mckean-Vlasov process]\label{thm:convergence_mckean_new}
Let $d \geq 1$ and consider the $\reals^d$-valued ito process $X$ (\cref{def:ito_process}) given by 
$$
d X_t = \sigma(\nu^n_t) dB_t, \quad X_0=W_{in} a,
$$
where $B$ is a $d$-dimensional Brownian motion, $W_{in}^{ij} \sim \normal(0,1/d)$, $a \in \reals^d$ and $a\neq 0$, $\nu^n_t \overset{def}{=} \frac{1}{d} \sum_{i=1}^d \delta_{\{X^i_t\}}$ is the empirical distribution of the coordinates of $X_t$, and $\sigma$ is real-valued given by $\sigma(\nu) = \left(\int \phi(y)^2 d\nu(y) \right)^{1/2}$ for any distribution $\nu$, where $\phi$ is the ReLU activation function. Then, for all $T \in \reals^+$, we have that
$$
\sup_{i \in [n]} \E \left( \sup_{t \leq T} |X^i_t - \tilde{X}^i_t|^2 \right) = \bigO(n^{-1}),
$$
where $\tilde{X}^i$ is the solution of the following Mckean-Vlasov equation
$$
d\tilde{X}^i_t = \sigma( \nu^i_t) dB^i_t = \frac{\|a\|}{\sqrt{2d}} \exp(t/4) dB^i_t, \quad \tilde{X}^i_0 = X^i_0,
$$
where $\nu^i_t$ is the distribution of $\tilde{X}^i$. The constant in the $\bigO$ depends only on $T$ and the norm of $a$.
\end{thm}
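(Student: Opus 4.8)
\emph{Proof proposal.} (I read the width parameter as $n$, so that $B$ is $n$-dimensional, $\nu^n_t=\tfrac1n\sum_{i=1}^n\delta_{X^i_t}$, and hence $\sigma(\nu^n_t)=\tfrac1{\sqrt n}\|\phi(X_t)\|$, matching \cref{eq:resnet} and \cref{thm:width_uniform_euler_single}.) The plan is a synchronous-coupling argument organised around the \emph{scalar} ``empirical volatility'' $S^n_t:=\sigma(\nu^n_t)^2=\tfrac1n\|\phi(X_t)\|^2$, the point being that every coordinate of the particle SDE is driven by this single scalar $\sqrt{S^n_t}$. The key structural step is a Dambis--Dubins--Schwarz time change: the martingales $M^k_t:=\int_0^t\sqrt{S^n_u}\,dB^k_u$ satisfy $\langle M^j,M^k\rangle_t=\delta_{jk}A^n_t$ with $A^n_t:=\int_0^t S^n_u\,du$, so (after a routine extension of the probability space) there is an $n$-dimensional Brownian motion $W$, independent of $X_0$, with $X^k_t=X^k_0+W^k_{A^n_t}$, and the clock solves the \emph{random} ODE $\dot A^n_t=G^n(A^n_t)$, $A^n_0=0$, where $G^n(r):=\tfrac1n\sum_{k=1}^n\phi(X^k_0+W^k_r)^2$. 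The whole interacting system is thus re-encoded as the \emph{i.i.d.} pairs $(X^k_0,W^k)$ plus a $1$-dimensional flow.

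First I would record the routine inputs. The coefficient $x\mapsto\tfrac1{\sqrt n}\|\phi(x)\|I_n$ is globally Lipschitz with linear growth uniformly in $n$ (the computation in the proof of \cref{thm:width_uniform_euler_single}), so \cref{thm:existence_and_uniqueness} and its higher-moment analogues give $\sup_n\E\sup_{t\le T}\bigl(\tfrac1n\|X_t\|^2\bigr)^m<\infty$ for all $m$ (using $\E\|X_0\|^{2m}=\Theta((v_0 n)^m)$ with $v_0:=d^{-1}\|a\|^2$). On the limit side, the Gaussian law $\normal(0,v_t)$, $v_t=v_0e^{t/2}=v(t,a)$, is the unique McKean--Vlasov solution: for $N\sim\normal(0,v_t)$ one has $\E\phi(N)^2=\tfrac12 v_t$, so $\sigma(\nu^i_t)=\sqrt{v_t/2}=\tfrac{\|a\|}{\sqrt{2d}}e^{t/4}=:c(t)$, with $c_0:=c(0)\le c(t)\le c_0e^{T/4}$ on $[0,T]$; correspondingly $A^\infty_t:=v_0(e^{t/2}-1)$ solves $\dot A^\infty_t=g(A^\infty_t)$ with $g(r):=\tfrac12(v_0+r)$, and $c(t)^2=g(A^\infty_t)$.

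The core estimate is $\E|S^n_t-c(t)^2|^2=\bigO(1/n)$ uniformly on $[0,T]$, and everything else is bookkeeping. Since the $(X^k_0,W^k)$ are i.i.d.\ and $X^k_0+W^k_r\sim\normal(0,v_0+r)$, for each \emph{fixed} $r$ the quantity $\widehat{G}^n(r):=G^n(r)-g(r)$ is an average of $n$ i.i.d.\ centered terms, so $\E|\widehat{G}^n(r)|^2=\bigO(1/n)$ — this is exactly where the time change pays off, turning the law of large numbers for $S^n_t$ into a genuine i.i.d.\ one. A chaining / Marcinkiewicz--Zygmund bound over the compact interval $r\in[0,\bar A]$, $\bar A:=A^\infty_T+1$, upgrades this to $\E\Gamma_n^{2m}=\bigO(n^{-m})$ for all $m$, where $\Gamma_n:=\sup_{r\le\bar A}|\widehat{G}^n(r)|$. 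Then I would compare the two clocks: on $\{A^n_t\le\bar A\ \forall t\le T\}$ one has $|A^n_t-A^\infty_t|\le t\Gamma_n+\tfrac12\int_0^t|A^n_s-A^\infty_s|\,ds$ (as $g$ is $\tfrac12$-Lipschitz), so Gr\"onwall gives $\sup_{t\le T}|A^n_t-A^\infty_t|\le Te^{T/2}\Gamma_n$; a stopping-time argument shows this event contains $\{\Gamma_n<\kappa\}$ for an explicit $\kappa>0$, and on the complement the a priori bound $\E\sup_t|A^n_t-A^\infty_t|^4=\bigO(1)$ and $\mathbb{P}(\Gamma_n\ge\kappa)=\bigO(n^{-2})$ control the remainder by Cauchy--Schwarz, whence $\E\sup_{t\le T}|A^n_t-A^\infty_t|^2=\bigO(1/n)$. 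Combining with $S^n_t-c(t)^2=\widehat{G}^n(A^n_t)+\tfrac12(A^n_t-A^\infty_t)$ (splitting off the $\bigO(n^{-m})$ event $\{A^n_t>\bar A\}$) gives the core estimate. Finally, coupling $\tilde X^k$ to $X^k$ through the same $B^k$ and the same initial value gives $X^k_t-\tilde X^k_t=\int_0^t(\sqrt{S^n_u}-c(u))\,dB^k_u$, and $\sqrt{S^n_u}+c(u)\ge c_0$ yields $|\sqrt{S^n_u}-c(u)|\le c_0^{-1}|S^n_u-c(u)^2|$; Doob's $L^2$ maximal inequality and the It\^o isometry then give $\E\sup_{t\le T}|X^k_t-\tilde X^k_t|^2\le 4c_0^{-2}\int_0^T\E|S^n_u-c(u)^2|^2\,du=\bigO(1/n)$, with constant depending only on $T,\|a\|,d$ and uniform over $k$.

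The hard part is the core estimate, and specifically obtaining the \emph{optimal} $\bigO(n^{-1})$ rate rather than the $\bigO(n^{-1/2})$ that a direct Gr\"onwall on the particles produces. The obstruction is that $\nu\mapsto\sigma(\nu)$ is only \emph{locally} Lipschitz in the measure argument — the ReLU square has quadratic growth — so a naive coupling amplifies the error in $X^k$ by a factor $\sim|X^k|$ and never closes at the right rate; worse, the most natural route through It\^o's formula for $S^n_t$ produces a drift proportional to the half-mass $\tfrac1n\sum_k\mathbf{1}_{X^k_t>0}$, whose convergence to $\tfrac12$ is an anti-concentration problem that degrades the rate. The time-change reduction is what sidesteps both issues: it concentrates all the randomness of $S^n_t$ into the $1$-dimensional clock $A^n$, whose fluctuation is governed by a true i.i.d.\ law of large numbers for the empirical process $r\mapsto G^n(r)$. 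The residual work — a uniform-in-$r$ concentration bound for that process, and uniform-in-$n$ moment bounds of all orders used to discard the low-probability events where $A^n$ or $\Gamma_n$ are atypically large — is technical but standard.
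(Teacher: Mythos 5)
Your proof is correct in outline, but it takes a genuinely different (and substantially heavier) route than the paper's. The paper's proof is a direct synchronous-coupling argument in the spirit of Jourdain: after Doob's inequality one splits
\[
|\sigma(\nu^n_s)-\sigma(\nu_s)|\le |\sigma(\nu^n_s)-\sigma(\tilde\nu^n_s)|+|\sigma(\tilde\nu^n_s)-\sigma(\nu_s)|,
\]
where $\tilde\nu^n_s$ is the empirical measure of the \emph{independent} limit particles $\tilde X^i_s$ driven by the same Brownian motions. For the first term one uses the crucial fact that $x\mapsto n^{-1/2}\|\phi(x)\|$ is \emph{globally} $1$-Lipschitz on $\reals^n$ (not merely locally), together with exchangeability, to get $\E|\sigma(\nu^n_s)-\sigma(\tilde\nu^n_s)|^2\le \E|X^1_s-\tilde X^1_s|^2$, which closes the Gr\"onwall loop at the right rate. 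For the second term $\frac1n\|\phi(\tilde X_s)\|^2-\E\phi(\tilde X^1_s)^2$ is already an i.i.d.\ average (no time change needed) and contributes $\bigO(1/n)$ after passing from $\sigma^2$ to $\sigma$ via a sub-exponential Hoeffding bound that keeps $\sigma(\tilde\nu^n_s)^2$ away from zero. In other words, the ``obstruction'' you describe --- that quadratic growth of $\phi^2$ prevents a naive Gr\"onwall from achieving $\bigO(n^{-1})$ --- does not actually arise, because the split is performed at the level of $\sigma$ (a globally Lipschitz function of the particle vector), not at the level of $S^n_t=\sigma(\nu^n_t)^2$, and the i.i.d.\ concentration happens on the already-independent process $\tilde X$, not the interacting one. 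Your Dambis--Dubins--Schwarz reduction is a pretty idea and does correctly re-encode the whole interaction into a one-dimensional random clock driven by an i.i.d.\ empirical process, but it requires extra machinery (Knight's theorem for the vector time change, a chaining/Marcinkiewicz--Zygmund bound for $\sup_r|G^n(r)-g(r)|$, a stopping-time localization for the clock, and uniform moment bounds of all orders) that the paper's Jourdain-style Gr\"onwall avoids entirely. Your small observation that $\sqrt{S^n_u}+c(u)\ge c_0>0$ deterministically (because the limiting volatility $c(u)$ is bounded below) is a modest simplification over the paper's Hoeffding step, and would carry over to the paper's argument as well.
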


\begin{proof}
The first part of the proof is similar to that of Theorem 3 in \citep{jourdain2007}. In the second part, we use a concentration argument to control the deviations of the volatility term which allow us to conclude. \\

Let $\tilde{v}^n_t$ denote the empirical distribution of the independent processes $\tilde{X}^i_t, i\in [n]$ defined in the statement of the theorem. Let $t \in [0,1]$.
Following \citep{jourdain2007}, for some $i \in [n]$, using Doob's inequality, there exists a universal constant $C>0$ such that
\begin{align*}
\E \left( \sup_{s \leq t} |X^i_s - \tilde{X}^i_s|^2 \right) &\leq C \int_{0}^t \E |\sigma(\nu^n_s) - \sigma(\nu_s)|^2 ds\\
&\leq C \int_{0}^t \E |\sigma(\nu^n_s) - \sigma(\tilde{\nu}^n_s)|^2 ds + C \int_{0}^t \E |\sigma(\tilde{\nu}^n_s) - \sigma(\nu_s)|^2 ds.\\
\end{align*}

For the first term, we have that 
\begin{align*}
    \int_{0}^t \E |\sigma(\nu^n_s) - \sigma(\tilde{\nu}^n_s)|^2 ds &= \int_{0}^t \E \left| \frac{1}{\sqrt{n}} \|\phi(X_s)\| - \frac{1}{\sqrt{n}} \|\phi(\tilde{X}_s)\|\right|^2 ds\\
    &\leq  \frac{1}{n} \int_{0}^t \E\|\phi(X_s)- \phi(\tilde{X}_s)\|^2 ds\\
    &\leq  \int_{0}^t  \E \left( \sup_{r \leq s}|X^i_r- \tilde{X}^i_r|^2 \right) ds,
\end{align*}
where we have used the exchangeability of the couples $(X^i_t, \tilde{X}^i_t)$ (across $i$) and the Lipschitz property of $\zeta$. Therefore, using Gronwall's lemma, there exists a constant $C'>0$ (independent of $i$) such that 
$$
\E \left( \sup_{s \leq t} |X^i_s - \tilde{X}^i_s|^2 \right) \leq C' \int_{0}^t \E |\sigma(\tilde{\nu}^n_s) - \sigma(\nu_s)|^2 ds.
$$
Since the bound is uniform in $i$, we then have 
$$
\sup_{i \in [n]} \E \left( \sup_{s \leq t} |X^i_s - \tilde{X}^i_s|^2 \right) \leq C' \int_{0}^t \E |\sigma(\tilde{\nu}^n_s) - \sigma(\nu_s)|^2 ds.
$$

Thus, it suffices to show that the right hand side is of order $n^{-1}$ to conclude. Let us first show that the volatility of the process $\tilde{X}^i_t$ is given by $
\sigma(\nu^i_t) = \frac{\|a\|}{\sqrt{2d}} \exp(t/4).
$
We have that $d\tilde{X}^i_t = \sigma(\nu^i_t) dB^i_t$. A simple application of \ito's lemma (\cref{lemma:ito}) yields 
$$
d \E(\tilde{X}^i_t)^2 = \frac{1}{2} \E(\tilde{X}^i_t)^2 dt,
$$
where we have used the fact that with ReLU $\E(\phi(\tilde{X}^i_t)^2) = \frac{1}{2} \E(\tilde{X}^i_t)^2.$ Therefore, we obtain $\E(\tilde{X}^i_t)^2 = \E(\tilde{X}^i_0)^2 \exp(t/2) = \frac{\|a\|^2}{d} \exp(t/2)$. Thus, the volatility term is given by stated formula. Notice that $\hat{X}^i_t$ has a normal distribution in this case.

We now use Hoeffding's inequality for random variables with sub-exponential growth to control the deviations of $\sigma(\tilde{\nu}^n_s)^2$. We have

\begin{align*}
    \mathbb{P}\left(\sigma(\tilde{\nu}^n_s)^2 \leq \frac{\|a\|^2}{4 d} \right) &\leq \mathbb{P}\left(\sigma(\tilde{\nu}^n_s)^2 \leq \sigma(\nu_s)^2/2 \right) \\
&= \mathbb{P}\left(\sigma(\tilde{\nu}^n_s)^2 - \sigma(\nu_s)^2 \leq - \sigma(\nu_s)^2/2 \right)\\
&\leq 2\exp(- n c),
\end{align*}
where $c>0$ is a constant that depends only on the moments of $\phi(\tilde{X}^i_t)$ which can be upper-bounded uniformly for $t \in [0,T]$. Define the event $\mathcal{H}_n = \{\sigma(\tilde{\nu}^n_s)^2 \leq \frac{\|a\|^2}{4d}\}$ and let $\bar{\mathcal{H}}_n$ denote its complementary event. This yields for all $s \in [0,T]$
\begin{align*}
\E |\sigma(\tilde{\nu}^n_s) - \sigma(\nu_s)|^2  &= \E\, \ind_{\mathcal{H}_n} |\sigma(\tilde{\nu}^n_s) - \sigma(\nu_s)|^2 +  \E \ind_{\bar{\mathcal{H}}_n} |\sigma(\tilde{\nu}^n_s) - \sigma(\nu_s)|^2\\
&\leq \frac{2 \|a\|^2}{d} \exp\left(-n \, c + \frac{s}{2}\right)  + \left(\frac{d}{4}\right)^{1/4} \E|\sigma(\tilde{\nu}^n_s)^2 - \sigma(\nu_s)^2| ^2\\
&\leq \frac{2 \|a\|^2}{d} \exp\left(-n \, c + \frac{s}{2}\right)  + \left(\frac{\sqrt{d}}{2 \|a\|}\right) \frac{\E \phi(\tilde{X}^1_s)^4}{n},
\end{align*}
where we have use the fact that $|\sqrt{z} - \sqrt{z'}| \leq \frac{1}{2\sqrt{z_0}} |z- z'| $ for $z,z' \geq z_0 >0$. Since $\tilde{X}^1_s$ is a zero-mean Gaussian with a variance that depends only on $s$, we can therefore conclude that there exists $C''$ independent of $n$ and $i \in [n]$ such that 
$$\
\sup_{i \in [n]} \E \left( \sup_{s \leq t} |X^i_s - \tilde{X}^i_s|^2 \right) \leq C'' \, n^{-1},
$$
which concludes the proof.
\end{proof}

\subsection{Other results from probability and stochastic calculus}
The next trivial lemma has been opportunely used in \cite{li21loggaussian} to derive the limiting distribution of the network output (multi-layer perceptron) in the joint infinite width-depth limit. This simple result will also prove useful in our case of the finite-width-infinite-depth limit.
\begin{lemma}\label{lemma:gaussian_vec}
Let $W \in \reals^{n\times n}$ be a matrix of standard Gaussian random variables $W_{ij} \sim \normal(0,1)$. Let $v \in \reals^n$ be a random vector independent from $W$ and satisfies $\|v\|_2 = 1$ . Then, $W v \sim \normal(0, I)$.
\end{lemma}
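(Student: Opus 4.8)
The plan is to reduce to the deterministic case by conditioning on $v$ and then invoking the rotational invariance of the standard Gaussian. First I would note that since $W$ and $v$ are independent, the conditional distribution of $Wv$ given $\{v = u\}$ is exactly the distribution of $Wu$ for the fixed (deterministic) unit vector $u \in \reals^n$. So it suffices to show that for \emph{every} fixed $u$ with $\|u\|_2 = 1$ we have $Wu \sim \normal(0, I_n)$ — a statement whose conclusion does not depend on $u$ — and then conclude via the tower property.

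For fixed $u$, write $(Wu)_i = \sum_{j=1}^n W_{ij} u_j = \langle W_{i\cdot}, u \rangle$, where $W_{i\cdot}$ is the $i$-th row of $W$. The rows $W_{1\cdot}, \dots, W_{n\cdot}$ are i.i.d.\ $\normal(0, I_n)$ vectors. Each $\langle W_{i\cdot}, u \rangle$ is a linear combination of independent standard Gaussians, hence Gaussian with mean $0$ and variance $\sum_{j} u_j^2 = \|u\|_2^2 = 1$; equivalently, picking a rotation $R$ with $Ru = e_1$ gives $\langle W_{i\cdot}, u\rangle = \langle R W_{i\cdot}, e_1\rangle = (R W_{i\cdot})_1 \sim \normal(0,1)$ by rotational invariance of $\normal(0, I_n)$. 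Moreover $(Wu)_1, \dots, (Wu)_n$ are functions of disjoint collections of entries of $W$ (the distinct rows), hence mutually independent. Therefore $Wu \sim \normal(0, I_n)$.

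Since the conditional law of $Wv$ given $v$ equals $\normal(0, I_n)$ irrespective of the realized value of $v$, the unconditional law of $Wv$ is also $\normal(0, I_n)$, and in fact $Wv$ is independent of $v$. The only point needing a modicum of care is that $v$ is a genuine random object: the independence hypothesis is precisely what licenses the conditioning step, and the fact that the resulting conditional law does not involve $u$ is what makes the tower-property argument collapse to the stated conclusion. An alternative route that sidesteps conditioning is the characteristic function computation $\E \exp(i \langle \xi, Wv\rangle) = \E\big[\E[\exp(i\langle \xi, Wv\rangle)\mid v]\big] = \E \exp\!\big(-\tfrac12 \|\xi\|_2^2 \|v\|_2^2\big) = \exp\!\big(-\tfrac12 \|\xi\|_2^2\big)$ for all $\xi \in \reals^n$, which is the characteristic function of $\normal(0, I_n)$. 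There is no real obstacle here; this is a standard fact recorded for convenient reference.
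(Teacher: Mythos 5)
Your proof is correct and matches the paper's: both condition on $v$, use the independence hypothesis to reduce to a fixed unit vector, and the characteristic-function computation you record at the end is precisely the argument the paper writes out. The only difference is that you additionally spell out the direct rotational-invariance argument for fixed $u$, which the paper leaves implicit.
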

\begin{proof}
The proof follows a simple characteristic function argument. Indeed, by conditioning on $v$, we observe that $Wv \sim \normal(0, I)$. Let $u \in \reals^n$, we have that \begin{align*}
    \E_{W,v}[e^{i \langle u, Wv\rangle}]  &=  \E_v[ \E_W[e^{i \langle u, Wv\rangle}| v]] \\
    &= \E_v[ e^{-\frac{\|u\|^2}{2}}] \\
    &= e^{-\frac{\|u\|^2}{2}}.\\
\end{align*}
This concludes the proof as the latter is the characteristic function of a random Gaussian vector with Identity covariance matrix.
\end{proof}

\section{Some technical results for the proofs}

\begin{prop}\label{prop:convergence_law_width_uniform}
Assume that the activation function $\phi$ is Lipschitz on $\reals$ and let $a \in \reals^d$ with $a \neq 0$. Then, in the limit $L \to \infty$, the process $X^L_t(a) = Y_{\lfloor t L\rfloor}(a)$, $t\in [0,1]$, converges in distribution to the solution of the following SDE 
\begin{equation}\label{eq:main_sde}
    dX_t(a) = \frac{1}{\sqrt{n}}\|\phi(X_t(a))\| dB_t, \quad X_0(a) = W_{in} a,
\end{equation}
where $(B_t)_{t\geq 0}$ is a Brownian motion (Wiener process). Moreover, we have that
$$
\sup_{n \geq 1} \sup_{ 1 \leq t \leq 1} \mathcal{W}_1(\mu^t_{n,L}, \mu^t_{n,\infty})  \leq C L^{-1/2},
$$
where $\mu^t_{n,L}(a)$ is the distribution of $Y^i_{\lfloor t L\rfloor}(a)$, $\mu^t_{n, \infty}(a)$ is the distribution $X^i_t(a)$ (for any $i$ since the coordinates are identically distributed), and $C$ is a constant that depends only on $d$ and $\|a\|$.\\
\end{prop}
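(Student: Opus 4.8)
The plan is to show that, for each fixed width $n$, the discrete layer sequence $(Y_l(a))_{0 \le l \le L}$ is \emph{exactly} the Euler--Maruyama discretization, with step $\delta = 1/L$, of the SDE \eqref{eq:main_sde}, and then to transfer the width-uniform Euler error bound of \cref{thm:width_uniform_euler_single} down to the scalar marginal $Y^i_{\lfloor tL\rfloor}(a)$ via a coupling together with a coordinate-exchangeability argument.

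\textbf{Step 1: the ResNet recursion is an Euler scheme.} Conditioning on $Y_{l-1}(a) = y$ (and using that $W_l$ is independent of $Y_{l-1}$), the increment $\frac{1}{\sqrt{L}} W_l \phi(y)$ has i.i.d.\ coordinates since the rows of $W_l$ are independent with $W_l^{ij} \sim \normal(0, 1/n)$, so $\frac{1}{\sqrt{L}} W_l \phi(y) \sim \normal\!\left(0, \frac{\|\phi(y)\|^2}{nL} I_n\right)$ (equivalently, this is \cref{lemma:gaussian_vec} applied to the unit vector $\phi(y)/\|\phi(y)\|$ when $\phi(y) \neq 0$, and the point mass at $0$ otherwise). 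On the other hand, the Euler scheme of \eqref{eq:main_sde}, whose volatility is $\sigma(x) = \frac{1}{\sqrt{n}}\|\phi(x)\| I_n$ and whose increments are $\normal(0, \frac{1}{L} I_n)$, has one-step conditional law $\bar{X}_l \mid \bar{X}_{l-1} = x \sim \normal\!\left(x, \frac{\|\phi(x)\|^2}{nL} I_n\right)$. The two Markov transition kernels coincide, and both chains start from $W_{in} a \sim \normal(0, d^{-1}\|a\|^2 I_n)$; hence $(Y_l(a))_{0 \le l \le L}$ and the Euler iterates $(\bar{X}_l)_{0 \le l \le L}$ are equal in distribution, and in particular $\mu^t_{n,L}(a)$ equals the law of $\bar{X}^i_{\lfloor tL\rfloor}$ for every coordinate $i$.

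\textbf{Step 2: well-posedness and the width-uniform rate.} Since $\phi$ is $\zeta$-Lipschitz with $\phi(0) = 0$ (the ReLU), $x \mapsto \sigma(x)$ is globally Lipschitz with constant at most $\zeta$ (using $\big| \|\phi(x)\| - \|\phi(x')\| \big| \le \|\phi(x) - \phi(x')\| \le \zeta \|x - x'\|$) and has linear growth $\|\phi(x)\| \le \zeta \|x\|$. Thus \cref{thm:existence_and_uniqueness} gives a unique strong solution $X$ of \eqref{eq:main_sde}, and \cref{thm:width_uniform_euler_single} yields $n^{-1}\, \E \sup_{t \in [0,1]} \|X_t - \bar{X}_{\lfloor tL\rfloor}\|^2 \le C'/L$ with $C'$ depending only on $\zeta$, $d$, and $\|a\|$. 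Because the volatility is scalar-valued and both the driving Brownian motion and $X_0 = W_{in} a$ have i.i.d.\ coordinates, the construction is symmetric under permutations of the $n$ neurons, so the pairs $\big( (\bar{X}^i_t, X^i_t) \big)_{1 \le i \le n}$ are jointly exchangeable; hence $\E | \bar{X}^i_t - X^i_t |^2 = \frac{1}{n} \E \| \bar{X}_t - X_t \|^2 \le C'/L$ for every $t \in [0,1]$ and every $i$, \emph{uniformly in $n$}.

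\textbf{Step 3: conclusion.} Work on the coupling in which $\bar{X}$ and $X$ are driven by the same Brownian motion and share $X_0$, and in which $Y_{\lfloor tL\rfloor}(a) = \bar{X}_{\lfloor tL\rfloor}$ as in Step~1. By the dual definition of $\mathcal{W}_1$ and the $1$-Lipschitzness of the test functions, for every $t \in [0,1]$ and every $n \ge 1$,
$$
\mathcal{W}_1\big(\mu^t_{n,L}(a), \mu^t_{n,\infty}(a)\big) \le \E \big| \bar{X}^i_{\lfloor tL\rfloor} - X^i_t \big| \le \big( \E | \bar{X}^i_{\lfloor tL\rfloor} - X^i_t |^2 \big)^{1/2} \le \sqrt{C'}\, L^{-1/2},
$$
which is the asserted bound with $C = \sqrt{C'}$. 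The weak convergence of the process $X^L_\cdot(a)$ to $X_\cdot$ as $L \to \infty$ (for fixed $n$) follows separately: \cref{thm:convergence_euler_xueorong} forces $\E \sup_{t} \|X_t - \bar{X}_{\lfloor tL\rfloor}\|^2 \to 0$, so the interpolated trajectories converge uniformly in probability, hence in law in $C([0,1], \reals^n)$. The only genuinely delicate point is the width-uniformity in Step~2: one must avoid bounding the scalar error by the full $\reals^n$-error (which would leave a stray factor $\sqrt{n}$) and instead combine coordinate exchangeability with the observation, already isolated in \cref{thm:width_uniform_euler_single}, that the Euler constant of \cref{thm:convergence_euler_xueorong} scales only linearly in $n$ for this SDE.
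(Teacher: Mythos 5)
Your proposal follows the same overall route as the paper -- identify the ResNet recursion as the Euler scheme of \eqref{eq:main_sde} via \cref{lemma:gaussian_vec}, invoke the width-uniform Euler bound of \cref{thm:width_uniform_euler_single}, then couple and pass to $\mathcal{W}_1$ -- but your Step 2 supplies a key detail that the paper's own proof handles too loosely. The paper's argument bounds
$|\E\Psi(Y_{\lfloor tL\rfloor}) - \E\Psi(X_t)| \le \E\|\bar{X}_{\lfloor tL\rfloor} - X_t\|$ for $1$-Lipschitz $\Psi$, and then applies Cauchy--Schwarz to \cref{thm:width_uniform_euler_single}; but \cref{thm:width_uniform_euler_single} only gives $n^{-1}\E\sup_t\|\bar{X}_{\lfloor tL\rfloor} - X_t\|^2 \le C'\delta$, so $\E\|\bar{X}_{\lfloor tL\rfloor} - X_t\|$ is $\bigO(\sqrt{n/L})$, not $\bigO(L^{-1/2})$ uniformly in $n$. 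Since $\mu^t_{n,L}$ is the law of a single \emph{coordinate} $Y^i_{\lfloor tL\rfloor}(a)$, what is actually needed is a bound on $\E|\bar{X}^i_{\lfloor tL\rfloor} - X^i_t|$, and bounding a coordinate error by the full $\reals^n$ error wastes a factor $\sqrt{n}$. Your exchangeability observation -- that the volatility $\sigma(x)=\frac{1}{\sqrt n}\|\phi(x)\|$ is permutation-invariant, the driving noise and initial condition have i.i.d.\ coordinates, and hence the pairs $(\bar{X}^i_t, X^i_t)$ are jointly exchangeable so that $\E|\bar{X}^i_t - X^i_t|^2 = \frac{1}{n}\E\|\bar{X}_t - X_t\|^2 \le C'/L$ -- is precisely the step that recovers the claimed width-uniform $L^{-1/2}$ rate. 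This is the same argument style that reappears in the proof of \cref{thm:convergence_mckean_new} (exchangeable couples $(X^i_t,\tilde X^i_t)$), so your treatment is both correct and in fact tightens the paper's own exposition of this proposition.
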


\begin{proof}
The proof is based on \cref{thm:width_uniform_euler_single} in the appendix. It remains to express \cref{eq:resnet}  in the required form and make sure all the conditions are satisfied for the result to hold. To alleviate the notation, we denote $Y_l:= Y_l(a)$. Using \cref{lemma:gaussian_vec}, we can write \cref{eq:resnet} as
$$
Y_l = Y_{l-1} + \frac{1}{\sqrt{L}} \sigma(Y_{l-1}) \zeta^L_{l-1},
$$
where $\sigma(y) \overset{def}{=} \frac{1}{\sqrt{n}} \|\phi(y)\|$ for all $y \in \reals^n$ and $\zeta^L_l$ are \iid random Gaussian vectors with distribution $\normal(0, I)$. This is equal in distribution to the Euler scheme of SDE \cref{eq:main_sde}. Since $\sigma$ trivially inherits the Lipschitz or local Lipschitz properties of $\phi$, we conclude for the convergence using \cref{thm:width_uniform_euler_single}.\\

Now let $\Psi$ be $1$-Lipschitz. We have that 
$$
|\E \Psi(Y_{\lfloor t L\rfloor}) - \E \Psi(X_t)| \leq  \E \|\bar{X}_{\lfloor t L\rfloor} - X_t\| \leq C L^{-1/2}.
$$
where $\bar{X}$ is the Euler scheme as in \cref{thm:width_uniform_euler_single}, and where we have used the fact that $Y_{\lfloor t L\rfloor}$ and $\bar{X}_{\lfloor t L\rfloor}$ have the same distribution, coupled with the Cauchy-Schwartz inequality. Since $C$ depends only on $d$ and $\|a\|$, the conclusion is straightforward.
\end{proof}

\begin{prop}\label{prop:convergence_law_width_uniform_multiple}
Assume that the activation function $\phi$ is Lipschitz on $\reals$ and let $a, b \in \reals^d$ with $a, b \neq 0$ and $a \neq b$. Then, there exists two $n$-dimensional Brownian motions $B_t(a)$ and $B_t(b)$ and a discretized Euler scheme $(\bar{X}(a))$ and $(\bar{X}(b))$ such that  for any $t\in [0,1]$, the processes $(Y_{\lfloor t L\rfloor}(a), Y_{\lfloor t L\rfloor}(b))$ have the same distribution as $(\bar{X}_{\lfloor t L\rfloor}(a), \bar{X}_{\lfloor t L\rfloor}(b))$ and $\bar{X}_{\lfloor t L\rfloor}(a)$ and $\bar{X}_{\lfloor t L\rfloor}(b)$ converge (in $L_2$) to the solutions of the following SDEs 
\begin{equation}
\begin{aligned}
    dX_t(a) &= \frac{1}{\sqrt{n}}\|\phi(X_t(a))\| dB_t(a), \quad X_0(a) = W_{in} a,\\
    dX_t(b) &= \frac{1}{\sqrt{n}}\|\phi(X_t(b))\| dB_t(b), \quad X_0(b) = W_{in} b,
\end{aligned}
\end{equation}
Moreover, we have that
$$
\lim_{n \to \infty} \E \left[\frac{\langle X_{t}(a), X_{t}(b) \rangle}{n}\right] = q_t(a,b),
$$
where $q_t(a,b)$ is the solution of the following Ordinary Differential Equation
\begin{equation}
\begin{aligned}
\frac{d q_t(a,b)}{dt} &= \frac{1}{2} \frac{f(c_t(a,b))}{c_t(a,b)} q_t(a,b),\\
c_t(a,b) &= \frac{q_t(a,b)}{ \sqrt{q_t(a,a)} \sqrt{q_t(b,b)}},\\
q_0(a,b) &= \frac{\langle a, b \rangle}{d},
\end{aligned}
\end{equation}
where the function $f: [-1,1] \to [-1,1]$ is given by 
$$
f(z) = \frac{1}{\pi} ( z \arcsin(z) + \sqrt{1 - z^2}) + \frac{1}{2}z.
$$
\end{prop}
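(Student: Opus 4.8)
The plan is to combine the single-input Euler-scheme analysis (\cref{thm:width_uniform_euler_single}, \cref{prop:convergence_law_width_uniform}) with \ito's lemma and a two-input propagation-of-chaos argument. First I would set up the coupled recursion: write $W_l = n^{-1/2}G_l$ with $G_l$ having i.i.d.\ standard Gaussian entries, and apply \cref{lemma:gaussian_vec} to the unit vectors $\phi(Y_{l-1}(a))/\|\phi(Y_{l-1}(a))\|$ and $\phi(Y_{l-1}(b))/\|\phi(Y_{l-1}(b))\|$. This rewrites \cref{eq:resnet} for the two inputs as $Y_l(a) = Y_{l-1}(a) + n^{-1/2}\|\phi(Y_{l-1}(a))\|\,\Delta\xi_{l-1}(a)$ and its analogue for $b$, where, conditionally on layer $l-1$, the pair $(\Delta\xi_{l-1}(a),\Delta\xi_{l-1}(b))$ is centred Gaussian in $\reals^{2n}$ with each block having covariance $L^{-1}I_n$ and cross-covariance $L^{-1}\rho_{l-1}I_n$, $\rho_{l-1}$ being the cosine of the angle between the two post-activation vectors. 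This is exactly the Euler--Maruyama scheme with step $\delta = 1/L$ of the coupled system $dX_t(a) = n^{-1/2}\|\phi(X_t(a))\|\,dB_t(a)$, $dX_t(b) = n^{-1/2}\|\phi(X_t(b))\|\,dB_t(b)$, driven by two $n$-dimensional Brownian motions with $d\langle B(a),B(b)\rangle_t = \rho_t\,I_n\,dt$ (the post-activation cosine of $(X_t(a),X_t(b))$), i.e.\ $d\bm X_t = n^{-1/2}\Sigma(\bm X_t)^{1/2}\,d\bm B_t$ in the block form of the appendix. The $L_2$ convergence of $(\bar X_{\lfloor tL\rfloor}(a),\bar X_{\lfloor tL\rfloor}(b))$ to $(X_t(a),X_t(b))$ as $L\to\infty$, uniform in $n$ after normalising by $n$, then follows from the multi-input width-uniform Euler bound; alternatively, since each marginal converges by \cref{thm:width_uniform_euler_single} and both noises live on one probability space, a Cauchy estimate in $L_2$ closes the joint statement.

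For the covariance limit I would work at fixed width $n$ with the SDE solutions. Apply \ito's lemma (\cref{lemma:ito}) to $\langle X_t(a),X_t(b)\rangle = \sum_{i=1}^n X^i_t(a)X^i_t(b)$: the drift vanishes, and the bracket contributes $\sum_i d\langle X^i(a),X^i(b)\rangle_t = \langle\phi(X_t(a)),\phi(X_t(b))\rangle\,dt$ after using $d\langle B^i(a),B^i(b)\rangle_t = \rho_t\,dt$ together with $\rho_t\|\phi(X_t(a))\|\,\|\phi(X_t(b))\| = \langle\phi(X_t(a)),\phi(X_t(b))\rangle$. Hence, writing $q^n_t(a,b) := \E[n^{-1}\langle X_t(a),X_t(b)\rangle]$,
\[
\frac{d}{dt}q^n_t(a,b) = \E\!\left[\frac{\langle\phi(X_t(a)),\phi(X_t(b))\rangle}{n}\right], \qquad q^n_0(a,b) = \frac{\langle a,b\rangle}{d},
\]
and similarly for $q^n_t(a,a)$ and $q^n_t(b,b)$ with squared ReLU on the diagonal.

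Next I would pass to $n\to\infty$. Here I invoke the mean-field result \cref{thm:convergence_mckean_new} extended to the two-input system: the extension is the same concentration argument applied to the $2\times2$ Gram matrix of post-activations, and it shows the coordinate pairs $(X^i_t(a),X^i_t(b))_{i\le n}$ become asymptotically i.i.d., each a centred bivariate Gaussian $(Z_a,Z_b)$ with covariance $q_t(a,a),q_t(b,b),q_t(a,b)$, because the volatilities $n^{-1/2}\|\phi(X_t(\cdot))\|$ and the cross term concentrate around deterministic functions of $q_t$. Consequently $n^{-1}\langle\phi(X_t(a)),\phi(X_t(b))\rangle\to\E[\phi(Z_a)\phi(Z_b)]$ in $L_1$, uniformly on $[0,1]$. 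By positive homogeneity of the ReLU, $\E[\phi(Z_a)\phi(Z_b)] = \sqrt{q_t(a,a)}\sqrt{q_t(b,b)}\,\tfrac12 f(c_t(a,b))$ with $c_t = q_t(a,b)/\sqrt{q_t(a,a)q_t(b,b)}$, where the classical ReLU Gaussian integral gives $\tfrac12 f(z) = \E[\phi(U_1)\phi(U_2)]$ for a correlation-$z$ standard pair; since $\sqrt{q_t(a,a)q_t(b,b)} = q_t(a,b)/c_t$ and $f(1)=1$ (so $\tfrac{d}{dt}q_t(a,a) = \tfrac12 q_t(a,a)$), this is exactly the stated flow $\tfrac{d}{dt}q_t(a,b) = \tfrac12\,\tfrac{f(c_t(a,b))}{c_t(a,b)}\,q_t(a,b)$. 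To conclude $q^n_t\to q_t$ I would substitute the explicit diagonal $q_t(a,a) = d^{-1}\|a\|^2 e^{t/2}$ from \cref{lemma:kernel_ode}, after which the flow's right-hand side $\tfrac12 e^{t/2}\xi(z)f(\xi(z)^{-1}e^{-t/2}q_t(z))$ is Lipschitz in $q_t(z)$ because $f' = \tfrac1\pi\arcsin(\cdot)+\tfrac12$ is bounded on $[-1,1]$; a Grönwall estimate on $|q^n_t - q_t|$ then gives uniform convergence on $[0,1]$.

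The main obstacle is the third step: turning the propagation-of-chaos/concentration statement into a \emph{quantitative} bound on $\bigl|\E[n^{-1}\langle\phi(X_t(a)),\phi(X_t(b))\rangle] - \tfrac12 f(c^n_t)\sqrt{q^n_t(a,a)q^n_t(b,b)}\bigr|$ that is uniform in $t\in[0,1]$ and decays in $n$, so it can feed Grönwall rather than only yielding a qualitative limit. This requires controlling the joint Gram matrix of the two-input system, which is why one needs that the correlation $c_t$ stays strictly below $1$ on $[0,1]$ — guaranteed since $f(z)/z>1$ for $z\in(0,1)$ forces $c_t$ to increase monotonically toward, but never reach, $1$ — so that the limiting covariance stays non-degenerate and the ReLU integral is Lipschitz in it. Well-posedness of the coupled SDE at finite $n$ (the diffusion coefficient is only locally Lipschitz, degenerating at the origin) is a secondary point, handled by the a priori $L_2$ bound on $\sup_t\|X_t\|$ inherited from \cref{thm:existence_and_uniqueness,thm:width_uniform_euler_single}.
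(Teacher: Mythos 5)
Your proposal is correct, but it proves the width limit of the SDE covariance by a genuinely different route than the paper's. The paper's proof is a bootstrap: it uses the \emph{single-input} width-uniform Euler bound (\cref{thm:width_uniform_euler_single}) for each of $a$ and $b$ separately, plus Cauchy--Schwarz, to get $\bigl|\E[n^{-1}\langle X_t(a),X_t(b)\rangle] - \E[n^{-1}\langle Y_{\lfloor tL\rfloor}(a),Y_{\lfloor tL\rfloor}(b)\rangle]\bigr| \le C L^{-1/2}$ with $C$ independent of $n$, and then passes $n\to\infty$ followed by $L\to\infty$, citing the known width-then-depth limit of the discrete covariance (Lemma~5 of \cite{hayou21stable}) as a black box. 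You instead work directly at the level of the coupled SDE: you apply It\^o's lemma to $\langle X_t(a),X_t(b)\rangle$ at fixed $n$ to obtain the exact ODE $\tfrac{d}{dt}q^n_t = \E[n^{-1}\langle\phi(X_t(a)),\phi(X_t(b))\rangle]$, then invoke a two-input propagation-of-chaos argument to replace the RHS by the arc-cosine Gaussian integral (you correctly verify that the classical ReLU kernel $\tfrac{1}{2\pi}(\sqrt{1-\rho^2}+\rho(\pi-\arccos\rho))$ equals $\tfrac12 f(\rho)$), and close with Gr\"onwall using the Lipschitzness of the flow on $[-1,1]$. Your route is more self-contained and derives the limiting ODE from first principles rather than citing it, at the cost of needing a two-input extension of the McKean--Vlasov concentration result (\cref{thm:convergence_mckean_new}), which the paper only states and proves for a single input; the paper's choice sidesteps this entirely by leaning on the discrete-side result from prior work. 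One small caveat on your self-criticism: the quantitative control you flag as the ``main obstacle'' is needed for \cref{thm:covariance} but not for this proposition, which only asserts the qualitative limit $\lim_{n\to\infty}\E[n^{-1}\langle X_t(a),X_t(b)\rangle]=q_t(a,b)$; a qualitative, uniform-in-$t$ propagation-of-chaos statement suffices to feed Gr\"onwall here.
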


\begin{proof}
The proof is similar to that of \cref{prop:convergence_law_width_uniform}. The only difference lies the definition of the Gaussian vector $\zeta^L_l$. In this case, for $x \in \{a,b\}$, we have 
$$
Y_l(x) = Y_{l-1}(x) + \frac{1}{\sqrt{L}} \frac{1}{\sqrt{n}} \zeta^L_{l-1}(Y_{l-1}(x)),
$$
where $\zeta^L_{l-1}(Y_{l-1}(x)) \overset{def}{=} \sqrt{n} W_l \phi(Y_{l-1}(x))$. It is straightforward that we can write $\frac{1}{\sqrt{L}}\zeta^L_{l-1}(Y_{l-1}(x))$ as a Brownian increment $\Delta B_l(x) = L^{-1/2} \, \zeta^L_{l-1}(Y_{l-1}(x))$. Defining the Euler schemes $\bar{X}(a), \bar{X}(b)$ with the Brownian motions $(B_t(x))_{x \in \{a,b\}}$ yields that the concatenated vector $(Y_{\lfloor t L\rfloor}(a), Y_{\lfloor t L\rfloor}(b))$ has the same distribution as $(\bar{X}_{\lfloor t L\rfloor}(a), \bar{X}_{\lfloor t L\rfloor}(b))$. In particular, this implies that 
$$
\E \left[\frac{\langle \bar{X}_{\lfloor t L\rfloor}(a), \bar{X}_{\lfloor t L\rfloor}(b) \rangle}{n}\right] = \E \left[\frac{\langle Y_{\lfloor t L\rfloor}(a), Y_{\lfloor t L\rfloor}(b) \rangle}{n}\right].
$$

Now using \cref{thm:width_uniform_euler_single}, we know that for $x \in \{a,b\}$, 
$$
\sup_{n \geq 1} n^{-1} \, \E \sup_{t \in [0,1]} \|X_{t}(x) - \bar{X}_{\lfloor t L\rfloor}(x)\|^2 \leq C' \, \delta,
$$
where $C'$ depends only on the $\|x\|$ and $d^{-1}$. From this, and by observing that the $L_2$ norm of $X_{t}(x)$ and $\bar{X}_{\lfloor t L\rfloor}(x)$ are upperbounded (see \cref{thm:existence_and_uniqueness}), it is straightforward that 
\begin{align*}
\left| \E \left[\frac{\langle X_{t}(a), X_{t}(b) \rangle}{n}\right] -  \E \left[\frac{\langle Y_{\lfloor t L\rfloor}(a), Y_{\lfloor t L\rfloor}(b) \rangle}{n}\right] \right| \leq C L^{-1/2},
\end{align*}
where $C$ is a constant that depends only on $\|a\|, \|b\|$, and $d$. To conclude, we will take the width to infinity first then take the depth to infinity. Taking $n \to \infty$, then depth to $\infty$ (standard result, see Lemma 5 in \cite{hayou21stable}) yields 
$$
\lim_{L \to \infty} \lim_{n \to \infty} \E \left[\frac{\langle Y_{\lfloor t L\rfloor}(a), Y_{\lfloor t L\rfloor}(b) \rangle}{n}\right] = q_t(a,b),
$$
which concludes the proof.

\end{proof}

\section{Proof of \cref{thm:gaussian_limit}}\label{sec:proof_gaussian_limit}

\textbf{Theorem \ref{thm:gaussian_limit} }[Width/Depth uniform convergence of the pre-activations]\\
\emph{Let $a \in \reals^d$ such that $a \neq 0$. For $t \in [0,1]$ and $i \in [n]$ fixed, the random variable $(Y_{\lfloor t L\rfloor}(a))_{L \geq 1}$ converges weakly to a Gaussian random variable with law $\normal(0, v(t,a))$ in the limit of $\min(n,L) \to \infty$, where $v(t,a) = d^{-1} \|a\|^2 \exp(t)$.
Moreover, we have the following convergence rate
$$
\sup_{t \in [0,1]} \mathcal{W}_1(\mu^t_{n,L}(a), \mu^t_{\infty, \infty}(a)) \leq C \left(\frac{1}{\sqrt{n}} + \frac{1}{\sqrt{L}} \right)
$$
where $\mu^t_{n,L}(a)$ is the distribution of $Y^1_{\lfloor t L\rfloor}(a)$, $\mu^t_{\infty, \infty}(a)$ is the distribution $\normal(0, v(t,a))$, and $C$ is constant that depends only on $\|a\|$ and $d$.
}
\begin{proof}
The proof relies on a careful manipulation of the order of the depth and width limits. Unlike existing literature on the infinite-width-then-depth networks, we found that is much easier to control the convergence rate by looking at what happens when the $L$ diverges first, then control over $n$. This uses two main ingredients:
\begin{itemize}
    \item A new width-uniform convergence rate of the Euler discretization scheme of the infinite-depth SDE. We prove this in \cref{thm:width_uniform_euler_single}.
    \item A new particle convergence result to a  McKean-Vlasov process (Mean-Field limit). We prove this result in \cref{thm:convergence_mckean_new}.
\end{itemize}

Let $a \in\reals^d$ with $a \neq 0$. \\

\textbf{Part 1: Width-uniform infinite-depth limit.} Let $n \geq 1$ be fixed for now, and let us look at what happens in the infinite depth limit. Using \cref{prop:convergence_law_width_uniform}, we know that $Y^1_{\lfloor t L\rfloor}(a)$ converges in distribution to $X^1_t(a)$ with a width-uniform rate in terms of the Wasserstein distance
$$
\sup_{ 1 \leq t \leq 1} \mathcal{W}_1(\mu^t_{n,L}, \mu^t_{n,\infty})  \leq C L^{-1/2},
$$
where $C$ depends only on $d$ and $\|a\|$.\\

\textbf{Part 2: Taking the width to infinity.} The rest of the proof rely on a new technical result that we prove in \cref{thm:convergence_mckean_new}. The intuition is that the coordinates $(X^i(a)_t)_{1 \leq i \leq n}$ can be seen as interacting particles of some underlying mean-field process. This is known as Mckean-Vlasov process. Using \cref{thm:convergence_mckean_new} with $T=1$, we obtain that 
$$
\sup_{i \in [n]} \E \left( \sup_{0 \leq t \leq 1} |X^i_t(a) - \tilde{X}^i_t(a)|^2 \right) \leq C' n^{-1},
$$
where $\tilde{X}^i_t(a)$ is the solution of the SDE
$$
d \tilde{X}^i_t(a) = \frac{\|a\|}{\sqrt{2d}} \exp(t/4) dB_t^i, \tilde{X}^i_0(a) = X^i_0(a).
$$

This is a special SDE since all the marginal distributions are zero-mean Gaussians (sum of Brownian increments) with variance $\E \tilde{X}^i_t(a)^2 = \frac{\|a\|^2}{d} \exp(t/2)$.

In particular, $X^i(a)_t$ converges weakly to $\tilde{X}^i_t(a)$ in the limit of infinite width $n$. Combining the bound in Part 1 with the Mckean-Vlasov bound above, we obtain 
$$
\sup_{t \in [0,1]} \mathcal{W}_1(\mu^t_{n,L}(a), \mu^t_{\infty, \infty}(a)) \leq C \left(\frac{1}{\sqrt{n}} + \frac{1}{\sqrt{L}} \right)
$$
for some constant that depends only on $d$ and $\|a\|$, and where $\mu^t_{\infty, \infty}(a)$ is the distribution of $\tilde{X}^i_t(a) \sim \normal(0, d^{-1} \|a\|^2 \exp(t/2))$.

\end{proof}

\section{Proof of \cref{thm:covariance}}\label{sec:proof_covariance}
\begin{thm}[Neural Covariance]
Let $a, b \in \reals^d$ such that $a, b \neq 0$ and $a \neq b$. Then, we have the following 
$$
\sup_{t \in [0,1]} \left\| \frac{\langle Y_{\lfloor t L\rfloor}(a), Y_{\lfloor t L\rfloor}(b) \rangle}{n} - q_t(a,b)\right\|_{L_2} \leq C \left(\frac{1}{\sqrt{n}} + \frac{1}{\sqrt{L}} \right)
$$
where $C$ is a constant that depends only on $\|a\|$, $\|b\|$, and $d$, and $q_t(a,b)$ is the solution of the following Ordinary Differential Equation
\begin{equation}
\begin{aligned}
\frac{d q_t(a,b)}{dt} &= \frac{1}{2} \frac{f(c_t(a,b))}{c_t(a,b)} q_t(a,b),\\
c_t(a,b) &= \frac{q_t(a,b)}{ \sqrt{q_t(a,a)} \sqrt{q_t(b,b)}},\\
q_0(a,b) &= \frac{\langle a, b \rangle}{d},
\end{aligned}
\end{equation}
where the function $f: [-1,1] \to [-1,1]$ is given by 
$$
f(z) = \frac{1}{\pi} ( z \arcsin(z) + \sqrt{1 - z^2}) + \frac{1}{2}z.
$$
\end{thm}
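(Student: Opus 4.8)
The plan is to reproduce the two–stage architecture of the proof of \cref{thm:gaussian_limit}, but now applied to the \emph{bivariate} object $(Y_{\lfloor tL\rfloor}(a),Y_{\lfloor tL\rfloor}(b))$ rather than a single neuron: first pass to the infinite–depth SDE with a rate that is \emph{uniform in the width}, then control the infinite–width limit of that SDE by a propagation–of–chaos (McKean--Vlasov) argument. Write $\hat q^{n,L}_t := \frac1n\langle Y_{\lfloor tL\rfloor}(a),Y_{\lfloor tL\rfloor}(b)\rangle$, let $X_t(a),X_t(b)$ be the coupled SDE solutions of \cref{prop:convergence_law_width_uniform_multiple}, set $Q^n_t := \frac1n\langle X_t(a),X_t(b)\rangle$, and split $\|\hat q^{n,L}_t-q_t(a,b)\|_{L_2}\le\|\hat q^{n,L}_t-Q^n_t\|_{L_2}+\|Q^n_t-q_t(a,b)\|_{L_2}$. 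I will bound the first summand by $CL^{-1/2}$ with $C$ independent of $n$, and the second by $Cn^{-1/2}$; every bound will be uniform on $[0,1]$, so the final $\sup_{t\in[0,1]}$ is harmless.

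\textbf{Step 1 (depth, width–uniform).} By the construction in \cref{prop:convergence_law_width_uniform_multiple}, the discrete path $(Y_{\lfloor tL\rfloor}(a),Y_{\lfloor tL\rfloor}(b))$ equals in law the joint Euler path $(\bar X_{\lfloor tL\rfloor}(a),\bar X_{\lfloor tL\rfloor}(b))$ driven by the two correlated Brownian motions assembled from the shared weights $W_l$; hence $\hat q^{n,L}_t$ has the same law as $\frac1n\langle\bar X_{\lfloor tL\rfloor}(a),\bar X_{\lfloor tL\rfloor}(b)\rangle$. Writing $\langle\bar X(a),\bar X(b)\rangle-\langle X(a),X(b)\rangle=\langle\bar X(a)-X(a),\bar X(b)\rangle+\langle X(a),\bar X(b)-X(b)\rangle$, applying Cauchy--Schwarz inside the inner product and again in the expectation, and invoking the width–uniform Euler estimate $\sup_{n}n^{-1}\E\sup_t\|X_t(x)-\bar X_{\lfloor tL\rfloor}(x)\|^2\le C'/L$ of \cref{thm:width_uniform_euler_single} together with the routine fourth–moment bounds on $\|X_t(x)\|$ and on the Euler error (respectively $\bigO(n^2)$ and $\bigO(n^2/L^2)$, from the linear–growth/Lipschitz structure of the volatility), one obtains $\sup_t\|\hat q^{n,L}_t-Q^n_t\|_{L_2}\le CL^{-1/2}$ uniformly in $n$. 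This is just the $L_2$ strengthening of the $L_1$ (expectation) bound already proved in \cref{prop:convergence_law_width_uniform_multiple}.

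\textbf{Step 2 (width, McKean--Vlasov).} This is the core and closely follows \cref{thm:convergence_mckean_new}, promoted from one input to two. The pairs $\bm X^i_t:=(X^i_t(a),X^i_t(b))$, $i\in[n]$, form an exchangeable system of interacting particles: conditionally each pair diffuses with a $2\times2$ volatility $\Sigma(\nu^n_t)^{1/2}$ whose entries are $\frac1n\|\phi(X_t(a))\|^2$, $\frac1n\|\phi(X_t(b))\|^2$, $\frac1n\langle\phi(X_t(a)),\phi(X_t(b))\rangle$ — functionals of the \emph{empirical} law $\nu^n_t$ of the pairs alone. Its mean–field limit is the family of i.i.d. time–inhomogeneous bivariate Gaussian processes $\tilde{\bm X}^i$ with $\tilde{\bm X}^i_0=\bm X^i_0$ and covariance at time $t$ the symmetric matrix with diagonal entries $q_t(a,a),q_t(b,b)$ and off–diagonal entry $q_t(a,b)$; using the ReLU second–moment identity $\E[\phi(Z_1)\phi(Z_2)]=\tfrac12\sqrt{\sigma_1^2\sigma_2^2}\,f(\rho)$ for jointly Gaussian $(Z_1,Z_2)$ of correlation $\rho$ — the very $f$ of the statement, with $f'(z)=\tfrac1\pi\arcsin z+\tfrac12$ so that $f$ is $1$–Lipschitz on $[-1,1]$ — one checks that this limiting covariance obeys exactly the differential flow of the theorem, equivalently the ODE of \cref{lemma:kernel_ode} after writing $\sqrt{q_t(a,a)q_t(b,b)}\,f(c_t)=\tfrac{f(c_t)}{c_t}q_t(a,b)$, which also makes $q_t(a,a),q_t(b,b)$ explicit and the flow globally well–posed. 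The estimate $\sup_i\E\sup_{t\le1}\|\bm X^i_t-\tilde{\bm X}^i_t\|^2=\bigO(n^{-1})$ then proceeds as in \cref{thm:convergence_mckean_new}: a Doob/Gr\"onwall step reduces it to the $L_2$ closeness of $\Sigma(\tilde\nu^n_t)^{1/2}$ to $\Sigma(\nu_t)^{1/2}$ (with $\tilde\nu^n_t$ the empirical law of the i.i.d. mean–field pairs), which is a concentration statement for empirical covariance matrices of i.i.d. Gaussians, handled by a Bernstein/Hoeffding bound for the sub–exponential products $\phi(\tilde X^i_t(a))\phi(\tilde X^i_t(b))$ together with a local Lipschitz estimate $\|M^{1/2}-N^{1/2}\|\le c\|M-N\|$ valid on the event that the (uniformly–in–$t$ non–degenerate) limit $\Sigma(\nu_t)$ stays bounded below, the complementary event being absorbed through the exponential tail. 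Finally, from this particle closeness plus moment bounds, $\|Q^n_t-\frac1n\langle\tilde X_t(a),\tilde X_t(b)\rangle\|_{L_2}=\bigO(n^{-1/2})$, and $\frac1n\langle\tilde X_t(a),\tilde X_t(b)\rangle$ is an average of i.i.d. terms with mean $q_t(a,b)$, hence concentrates at rate $n^{-1/2}$. Combining with Step 1 closes the proof.

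\textbf{Main obstacle.} The delicate part is Step 2: lifting the scalar propagation–of–chaos argument of \cref{thm:convergence_mckean_new} to a matrix–valued volatility. One must (i) control the Lipschitz behaviour of $\Sigma\mapsto\Sigma^{1/2}$ near the boundary of the PSD cone — in particular, when $a$ and $b$ are positively proportional the limiting covariance is rank one ($c_t\equiv1$ there, since $f(1)=1$), so non–degeneracy must be argued directly or that case reduced to the one–dimensional statement already proved; and (ii) establish the sharp $n^{-1}$–rate concentration of empirical covariances of correlated sub–exponential (product–of–Gaussian) variables, uniformly for $t\in[0,1]$. The depth reduction, the identification of the flow (also available from \cref{prop:convergence_law_width_uniform_multiple}), and the passage from particle closeness to $L_2$ closeness of $\hat q^{n,L}_t$ are then routine.
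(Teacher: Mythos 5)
Your skeleton matches the paper's: a depth step via the width-uniform Euler estimate (\cref{thm:width_uniform_euler_single}) giving $O(L^{-1/2})$ uniformly in $n$, then a width step via propagation of chaos followed by an i.i.d.\ averaging bound giving $O(n^{-1/2})$, all tied together with the joint-law coupling of \cref{prop:convergence_law_width_uniform_multiple}, the bilinear cross-term decomposition of inner-product differences, and the requisite fourth-moment bounds. The genuine divergence is in the propagation-of-chaos step. You propose to prove a bivariate McKean--Vlasov theorem, treating the pairs $(X^i_t(a),X^i_t(b))$ as exchangeable particles with a $2\times 2$ matrix volatility $\Sigma(\nu^n_t)^{1/2}$, and you rightly name the resulting technical costs: local Lipschitz control of $M\mapsto M^{1/2}$ near the PSD boundary, the rank-degenerate case $a\propto b$ (which you propose to reduce to the one-dimensional result), and concentration for empirical covariance matrices of correlated sub-exponential products. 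The paper does none of this. It invokes the scalar \cref{thm:convergence_mckean_new} twice, once per input, and controls $\tfrac1n\langle X(a),X(b)\rangle-\tfrac1n\langle\tilde X(a),\tilde X(b)\rangle$ by the same cross-term trick you already use in your Step 1, so that only the scalar volatility and per-coordinate moments ever appear --- no matrix square roots, no PSD-boundary analysis. A second, smaller difference is in how the limit is identified: you derive the ODE directly from the bivariate Gaussian mean-field process via the ReLU second-moment identity $\E[\phi(Z_1)\phi(Z_2)]=\tfrac{\sigma_1\sigma_2}{2}f(\rho)$, a self-contained computation, whereas the paper reads off $q_t$ from the known width-then-depth limit cited through \cref{prop:convergence_law_width_uniform_multiple}. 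Both routes reach the same $f$ and flow; your version is more intrinsic and explains where $f$ comes from, but the paper's scalar-twice route is materially simpler to carry out in full and bypasses precisely the obstacles you close by listing.
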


\begin{proof}
Let $a, b \in \reals^d$ and $q_t$ be as in the statement of the theorem. Let $X_t(a)$ and $X_t(b)$ be the infinite-depth limits as in \cref{prop:convergence_law_width_uniform_multiple}, and let $\bar{X}(a), \bar{X}(b)$ be the corresponding Euler schemes. Using the fact that $(Y_{\lfloor t L\rfloor}(a), Y_{\lfloor t L\rfloor}(b))$ has the same law as $(\bar{X}_{\lfloor t L\rfloor}(a), \bar{X}_{\lfloor t L\rfloor}(b))$ , we trivially have 
$$
\E \left| \frac{\langle Y_{\lfloor t L\rfloor}(a), Y_{\lfloor t L\rfloor}(b) \rangle}{n} - q_t(a,b)\right|^2 = \E \left| \frac{\langle \bar{X}_{\lfloor t L\rfloor}(a), \bar{X}_{\lfloor t L\rfloor}(b) \rangle}{n} - q_t(a,b)\right|^2.
$$
We have the following upperbound
\begin{equation}\label{eq:upperbound_proof}
\begin{aligned}
\left\| \frac{\langle \bar{X}_{\lfloor t L\rfloor}(a), \bar{X}_{\lfloor t L\rfloor}(b) \rangle}{n} - q_t(a,b)\right\|_{L_2} &\leq  \left\| \frac{\langle \bar{X}_{\lfloor t L\rfloor}(a), \bar{X}_{\lfloor t L\rfloor}(b) \rangle}{n} - \frac{\langle X_{t}(a), X_{t}(b) \rangle}{n} \right\|_{L_2}\\
&+ \left\| \frac{\langle X_{t}(a), X_{t}(b) \rangle}{n} - \frac{\langle \tilde{X}_{ t}(a), \tilde{X}_{ t}(b) \rangle}{n} \right\|_{L_2}\\
&+ \left\| \frac{\langle \tilde{X}_{t}(a), \tilde{X}_{t}(b) \rangle}{n} - q_t(a,b) \right\|_{L_2},
\end{aligned}
\end{equation}
where $\tilde{X}(a), \tilde{X}(b)$ are the infinite-width limits of the processes $X(a), X(b)$ as in \cref{thm:convergence_mckean_new}. Let us deal with each term in this bound. 

\begin{itemize}
    \item First term: from \cref{thm:width_uniform_euler_single} and standard upperbounds on the second moments (\cref{thm:existence_and_uniqueness}), we have that 
    $$
     \left\| \frac{\langle \bar{X}_{\lfloor t L\rfloor}(a), \bar{X}_{\lfloor t L\rfloor}(b) \rangle}{n} - \frac{\langle X_{t}(a), X_{t}(b) \rangle}{n} \right\|_{L_2} \leq C_1 \,L^{-1/2},
    $$
    where $C_1$ is a constant that depends only on $\|a\|, \|b\|$, and $d$.
    
    \item Second term: from \cref{thm:convergence_mckean_new}, there exists a constant $C_2$ such that 
    $$
     \left\| \frac{\langle X_{t}(a), X_{t}(b) \rangle}{n} - \frac{\langle \tilde{X}_{ t}(a), \tilde{X}_{ t}(b) \rangle}{n} \right\|_{L_2} \leq C_2 n^{-1/2},
    $$
    where $C_2$ depends only on $\|a\|, \|b\|$, and $d$.
    
    \item Third term: from \cref{prop:convergence_law_width_uniform_multiple}, we know that $\lim_{n \to \infty} \E \left[\frac{\langle X_{t}(a), X_{t}(b) \rangle}{n}\right] = q_{t}(a,b)$. Using the bound above on the second term, we obtain that $\lim_{n \to \infty} \E \left[\frac{\langle \tilde{X}_{t}(a), \tilde{X}_{t}(b) \rangle}{n}\right] = q_{t}(a,b)$. Now the key observation is that 
    $$\frac{\langle \tilde{X}_{t}(a), \tilde{X}_{t}(b) \rangle}{n} = \frac{1}{n} \sum_{i=1}^n \tilde{X}^i_{t}(a) \tilde{X}^i_{t}(b),$$
    and the terms in the sum above are iid with mean $q_t(a,b)$. Therefore, 
    $$
    \left\| \frac{\langle \tilde{X}_{t}(a), \tilde{X}_{t}(b) \rangle}{n} - q_t(a,b) \right\|_{L_2} = \left(\E \left| \frac{\langle \tilde{X}_{t}(a), \tilde{X}_{t}(b) \rangle}{n} - q_t(a,b) \right|^2\right)^{1/2} \leq (\E(\tilde{X}^1_t(a)\tilde{X}^1_t(b))^2)^{1/2} \, n^{-1/2}.
    $$
    Observe that $\E(\tilde{X}^1_t(a)\tilde{X}^1_t(b))^2$ can be bounded with a constant $C_3$ depends only on $\|a\|, \|b\|$, and $d$.
\end{itemize}
We conclude by combining the three bounds above.
\end{proof}

\section{ Proof of \cref{thm:correlation}}

\textbf{Theorem \ref{thm:correlation}. }[Neural correlation]\\
\emph{Under the same conditions of \cref{thm:covariance}, we have the following 
$$
\sup_{t \in [0,1]} \left\| \hat{c}_t(a,b) - c_t(a,b)\right\|_{L_2} \leq C' \left(\frac{1}{\sqrt{n}} + \frac{1}{\sqrt{L}} \right)
$$
where $C'$ is a constant that depends only on $\|a\|$, $\|b\|$, and $d$, and $\hat{c}_t(a,b) = \frac{\langle Y_{\lfloor t L\rfloor}(a), Y_{\lfloor t L\rfloor}(b) \rangle}{\|Y_{\lfloor t L\rfloor}(a)\| \|Y_{\lfloor t L\rfloor}(b)\|}$ is the neural correlation kernel, and $c_t(a,b)$ is defined in \cref{thm:covariance}.}

\begin{proof}
 Let $a$ and $b$ be as stated in the theorem. We have the following
$$
\left\| \hat{c}_t(a,b) - c_t(a,b)\right\|_{L_2} \leq \left\| \frac{\hat{q}_t(a,b)}{\sqrt{\hat{q}_t(a,a) \hat{q}_t(b,b)}} - \frac{q_t(a,b)}{\sqrt{\hat{q}_t(a,a) \hat{q}_t(b,b)}}\right\|_{L_2} + \left\| \frac{q_t(a,b)}{\sqrt{\hat{q}_t(a,a) \hat{q}_t(b,b)}} - \frac{q_t(a,b)}{\sqrt{q_t(a,a) q_t(b,b)}}\right\|_{L_2}.
$$
Using Markov's inequality along with \cref{thm:covariance}, it is straightforward that there exists a constant $C_1$ that depends only on $\|a\|$, $\|b\|$, and $d$ such that  
$$
\mathbb{P}\left(\hat{q}_t(a,a) < \frac{q_t(a,a)}{2}\right) \leq C_1 \min(n,L)^{-1},
$$
and
$$
\mathbb{P}\left(\hat{q}_t(b,b) < \frac{q_t(b,b)}{2}\right) \leq C_1 \min(n,L)^{-1}.
$$
Let $A$ denote the event $\{\hat{q}_t(a,a) \geq  \frac{q_t(a,a)}{2}\} \cup \{\hat{q}_t(b,b) \geq  \frac{q_t(b,b)}{2}\} $. With this, we obtain the following upperbound
\begin{align*}
\left\| \frac{\hat{q}_t(a,b)}{\sqrt{\hat{q}_t(a,a) \hat{q}_t(b,b)}} - \frac{q_t(a,b)}{\sqrt{\hat{q}_t(a,a) \hat{q}_t(b,b)}}\right\|_{L_2} &\leq \left\| \ind_A \, \left(\frac{\hat{q}_t(a,b)}{\sqrt{\hat{q}_t(a,a) \hat{q}_t(b,b)}} - \frac{q_t(a,b)}{\sqrt{\hat{q}_t(a,a) \hat{q}_t(b,b)}} \right)\right\|_{L_2}\\
&+\left\| \ind_{A^c} \, \left(\frac{\hat{q}_t(a,b)}{\sqrt{\hat{q}_t(a,a) \hat{q}_t(b,b)}} - \frac{q_t(a,b)}{\sqrt{\hat{q}_t(a,a) \hat{q}_t(b,b)}} \right)\right\|_{L_2}\\
&\leq \frac{2}{\sqrt{q_t(a,a) q_t(b,b)}} \left\| \hat{q}_t(a,b) - q_t(a,b)\right\|_{L_2} + C_2 \mathbb{P}(A^c)^{1/2},
\end{align*}
where $A^c$ denote the complementary event of $A$ and $C_2$ is a constant that depends only on $\|a\|$, $\|b\|$, and $d$. From \cref{thm:covariance} and the Markov inequality bound, we can upperbound this term by a term of order $\min(n,L)^{-1/2}$ with a constant that depends only on $\|a\|$, $\|b\|$ and $d$. A similar treatment of the second term yields the desired result.
\end{proof}

\section{Additional experiments}
\subsection{Pairplots}

\begin{figure}
    \centering
    \includegraphics{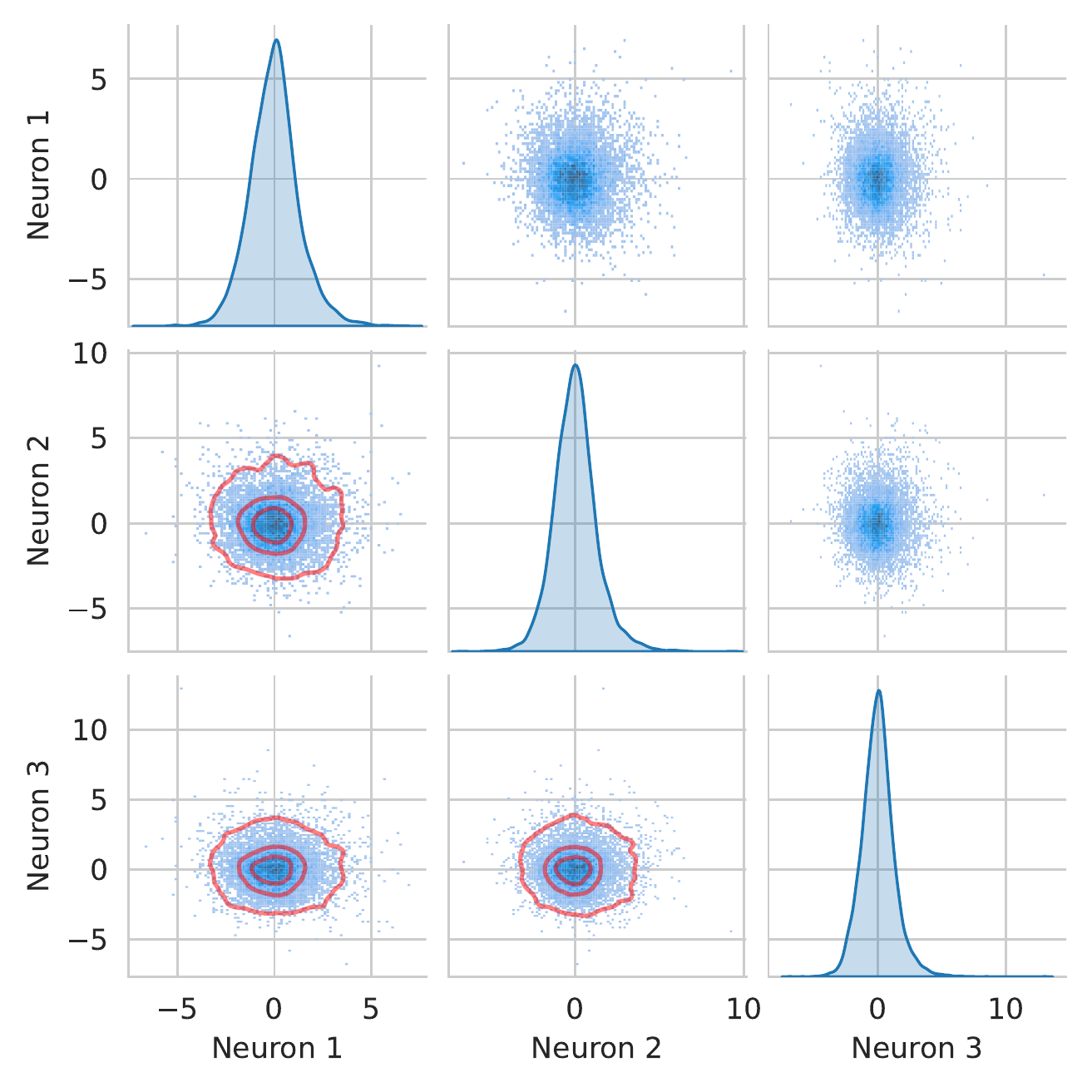}
    \caption{Same plot as \cref{fig:pairplot_500x500} with $n=L=5$}
    \label{fig:my_label}
\end{figure}

\begin{figure}
    \centering
    \includegraphics{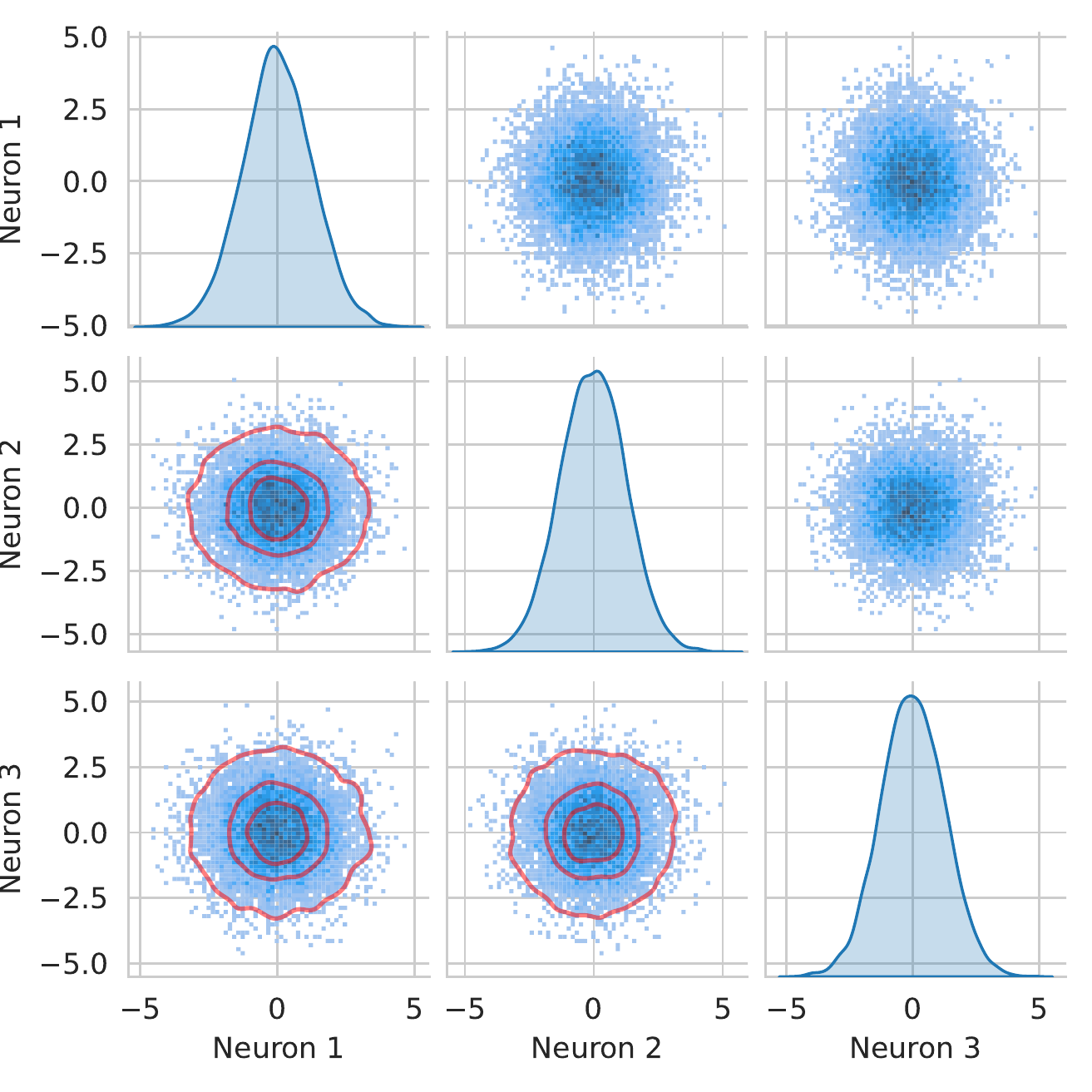}
    \caption{Same plot as \cref{fig:pairplot_500x500} with $n=100$ and $L=5$}
    \label{fig:my_label}
\end{figure}

\begin{figure}
    \centering
    \includegraphics{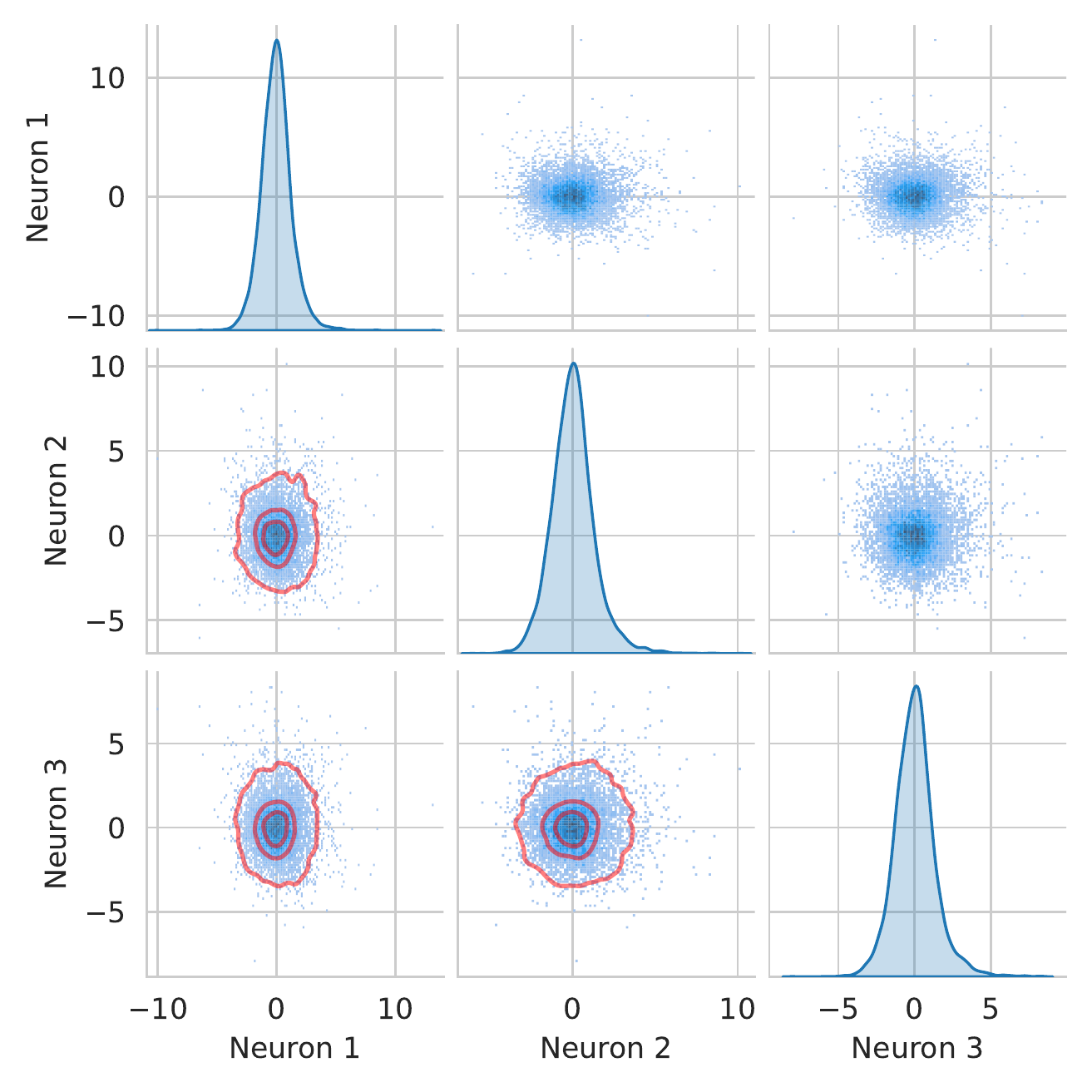}
    \caption{Same plot as \cref{fig:pairplot_500x500} with $n=5$ and $L=100$}
    \label{fig:my_label}
\end{figure}

\end{document}